\documentclass{article}

\usepackage{iclr2027_conference,times}

\usepackage[utf8]{inputenc}
\usepackage[T1]{fontenc}
\usepackage{amsfonts}
\usepackage{amsmath}
\usepackage{amssymb}
\usepackage{colortbl}
\usepackage{amsthm}
\usepackage{booktabs}
\usepackage{longtable}
\usepackage{fancyvrb}
\usepackage{etoolbox}
\usepackage{graphicx}
\usepackage{hyperref}
\usepackage{mathtools}
\usepackage{microtype}
\usepackage{tabularx}
\usepackage{url}
\usepackage{xcolor}
\usepackage{array}
\usepackage{float}
\usepackage{placeins}
\usepackage{algorithm}
\usepackage{algorithmic}
\usepackage{paralist}

\definecolor{artifactlink}{rgb}{0.10,0.25,0.55}
\newcommand{\micon}[1]{%
  \IfFileExists{icons/#1}{%
    \raisebox{-0.25\height}{\includegraphics[height=1em]{icons/#1}}%
  }{}%
}
\newcommand{\artifactlink}[2]{\href{#1}{\textcolor{artifactlink}{\textbf{#2}}}}
\newcommand{\artifactfootnote}[1]{%
  \begingroup
  \renewcommand{\thefootnote}{}%
  \footnotetext{#1}%
  \endgroup
}
\hypersetup{
  hypertexnames=false,
  hidelinks,
  pdftitle={Measuring and Mitigating Solution Mode Collapse in RLVR},
  pdfauthor={Liv G. d'Aliberti, Marwa Abdulhai, Sofiia Druchyna, Peter Henderson, Manoel Horta Ribeiro}
}
\newcommand{\enforcehalffinalproseline}{%
  \setlength{\parfillskip}{0pt plus .20\linewidth}}
\AtBeginDocument{\enforcehalffinalproseline}
\AtBeginEnvironment{abstract}{\enforcehalffinalproseline}
\makeatletter
\apptocmd{\@lbibitem}{\looseness=-2}{}{}
\apptocmd{\@bibitem}{\looseness=-2}{}{}
\makeatother

\newcolumntype{Y}{>{\raggedright\arraybackslash}X}
\newcolumntype{P}[1]{>{\raggedright\arraybackslash}p{#1}}

\newcommand{\E}{\mathbb E}

\usepackage{wrapfig}
\newcommand{\mb}{\textsc{ModeBench}}
\newcommand{\pmd}{\texttt{PCMD}}
\newcommand{\neff}{\ensuremath{N^{(2)}_{\mathrm{eff}}}}
\newcommand{\distk}{\texttt{distinct@8}}

\newcommand{\MDfrontierpmdmean}{.232}
\newcommand{\MDsmallbestpmd}{.570}

\newcommand{\MDsmallbestpassall}{.273}

\newcommand{\MDfrontierpass}{.991}

\newcommand{\MDfrontierdeployments}{7}

\newcommand{\MDklFreeNames}{\texttt{Python} and \texttt{Pantry}}

\definecolor{bettergreen}{RGB}{224,242,226}

\definecolor{domGraph}{HTML}{E8F1FA}
\definecolor{domCountdown}{HTML}{E7FBF6}
\definecolor{domPython}{HTML}{EFFBE7}
\definecolor{domMathIR}{HTML}{E7FBEE}
\definecolor{domPantry}{HTML}{E7ECFB}

\newtheorem{lemma}{Lemma}[section]
\newtheorem{theorem}[lemma]{Theorem}

\theoremstyle{remark}

\theoremstyle{definition}

\theoremstyle{plain}

\newcommand{\providermark}[1]{\raisebox{-.18ex}{\includegraphics[height=1.75ex]{icons/#1}}\,}
\newcommand{\qwenmark}{\providermark{qwen.png}Qwen}
\newcommand{\falconmark}{Falcon}


\usepackage[disable,textsize=tiny]{todonotes}

\title{Measuring and Mitigating Solution Mode Collapse in RLVR}

\author{}

\iclrfinalcopy

\renewcommand{\headrulewidth}{0pt}

\makeatletter

\newcommand{\arxivauthor}[3]{%
  \parbox[t]{#1}{%
    \centering
    \textbf{#2}\\
    Princeton University\\
    {\footnotesize\texttt{#3}}%
  }%
}

\def\@maketitle{%
  \vbox{%
    \hsize\textwidth

    {\LARGE\sc \@title\par}

    \vskip 0.15in

    \noindent
    \hbox to \textwidth{%
      \hfil
      \arxivauthor{0.29\textwidth}
        {Liv G.~d'Aliberti}
        {od2961@princeton.edu}%
      \hspace{0.04\textwidth}%
      \arxivauthor{0.29\textwidth}
        {Marwa Abdulhai}
        {ma6249@princeton.edu}%
      \hspace{0.04\textwidth}%
      \arxivauthor{0.29\textwidth}
        {Sofiia Druchyna}
        {sd0937@princeton.edu}%
      \hfil
    }%

    \vskip 0.17in

    \noindent
    \hbox to \textwidth{%
      \hfil
      \arxivauthor{0.34\textwidth}
        {Peter Henderson}
        {peter.henderson@princeton.edu}%
      \hspace{0.08\textwidth}%
      \arxivauthor{0.34\textwidth}
        {Manoel Horta Ribeiro}
        {manoel@princeton.edu}%
      \hfil
    }%

    \vskip 0.22in
  }%
}

\makeatother

\begin{document}

\maketitle
\raggedbottom

\begin{abstract}
A language model (LM) can usually answer the same question in more than one way, but reinforcement learning with verifiable rewards (RLVR) is indifferent to which correct answer a model produces. A solution will earn the same reward whether it is the thousandth copy of a familiar answer or one the model has never produced before.
Yet, there is potential value in having the model retain multiple correct solutions as it is trained. For instance, multiple modes may give users a choice and provide problem-solving strategies that improve overall model performance.
Here, we introduce \mb{}, a benchmark of multi-solution tasks in which the verifier returns both correctness and mode discovered. 
We then use \mb{} to measure how solution diversity changes under RLVR post-training.
We find that RLVR post-training concentrates probability onto fewer correct modes even as accuracy holds or improves, and moreover, that frontier models are already highly concentrated.
We then introduce our solution, Re:Max, which stores one verified example per discovered mode in a replay buffer and trains on those stored modes uniformly. 
A solution found once is, therefore, practiced as often as one found repeatedly.
Across three model scales, two RL objectives, and harder task constructions, replay improves both how often a policy succeeds and how many different ways it can succeed.

\end{abstract}

\artifactfootnote{%
  \micon{gh_logo.png}\;%
  \artifactlink{https://github.com/liv-daliberti/modeBench}%
  {ModeBench benchmark}\qquad
  \micon{gh_logo.png}\;%
  \artifactlink{https://github.com/liv-daliberti/remax}%
  {Re:Max code}\qquad
  \micon{huggingface.png}\;%
  \artifactlink{https://huggingface.co/od2961/maxent-grpo-models}%
  {Models and checkpoints}}

\begin{figure}[!htbp]
\centering
\includegraphics[width=0.9\linewidth]{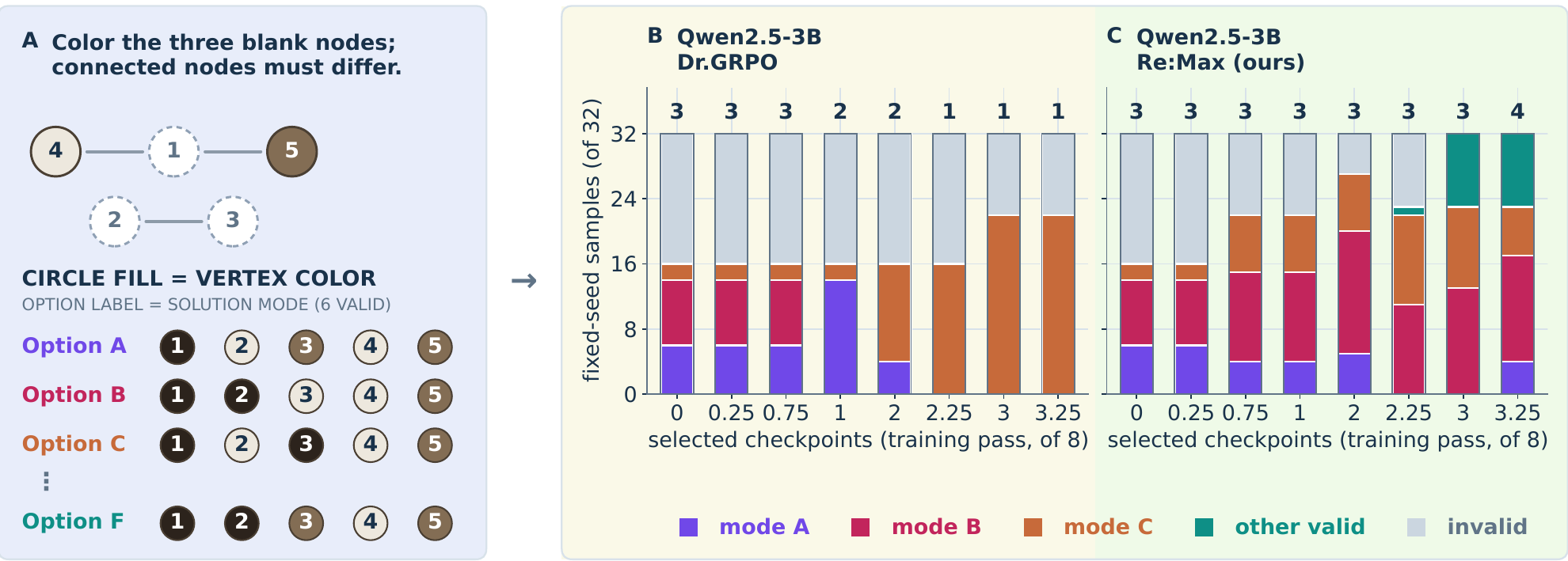}
\caption{\textbf{Motivation.}
Consider the \texttt{Graph} coloring problem shown in \textbf{A} (left), with six valid solutions.
We evaluate Qwen2.5-3B across training, tracking the number and diversity of correct solutions. At each training pass, we show 32 sampled questions, eight from each of four matched runs.
Training with Dr. GRPO (\textbf{B}) improves accuracy and leads to a single solution dominating the output. In contrast, Re:Max (\textbf{C}) increases both accuracy and diversity of correct solutions.
}
\label{fig:story}
\end{figure}

\section{Introduction}
\label{sec:introduction}

Training and evaluation of reinforcement learning with verifiable rewards (RLVR) algorithms are driven entirely by model accuracy.
However, what makes models useful at deployment depends on having different candidates to choose from, i.e. test-time search and reranking \citep{brown2024monkeys,snell2024testtime,cobbe2021training,wang2023selfconsistency}.
As RLVR improves accuracy, what happens to the diversity of correct solutions?

RLVR commonly uses automatically verifiable outcome rewards~\citep{guo2025deepseekr1}.
These rewards do not distinguish among different correct solutions.
In binary-verifier formulations, all verified-correct solutions receive the same task reward, regardless of which solution mode they belong to.
Consequently, improving correctness does not determine how the model should weight alternative correct solutions.
Fig.~\ref{fig:story} shows what this can look like on a graph-coloring problem with six valid answers: under Dr.GRPO \citep{liu2025understanding}, the policy becomes more accurate while its correct outputs concentrate onto a single mode.

Mode collapse, or narrowing of the output space, has been observed in alignment \citep{kirk2024understanding,cui2025entropy}, creative generation \citep{doshi2024creative}, brainstorming and ideation \citep{anderson2024homogenization}, public argument \citep{kim2026argument}, and LM-assisted writing \citep{abdulhai2026distort}.
Studying mode collapse, however, requires deciding when two correct outputs constitute genuinely different solutions.
Approaches based on token-level diversity \citep{li2016diversity,zhu2018texygen} are computationally inexpensive, but can mistake superficial reformulations for distinct solutions.
Semantic grouping methods \citep{farquhar2024detecting,zheng2023judging} instead use a second model to judge whether outputs are meaningfully different. 
LLM-as-judge approaches both (1) add an inference cost per generation and (2) rely on the judge to determine diversity.
To avoid this tradeoff, we study settings where a solution identity can be determined directly by an executable verifier.
This gives us an inexpensive, deterministic, verifiable measure of diversity that is insensitive to superficial wording.

Here, we investigate the prevalence and impact of mode collapse in problem-solving tasks under RLVR post-training.
Our contributions are threefold:

\begingroup
\setlength{\leftmargini}{1.1em}
\begin{compactenum}
    \item \mb{}: We contribute a benchmark for empirically studying mode collapse in tasks allowing for multiple correct solutions, ranging from code evaluation to constrained planning.
    \mb{} provides both output correctness and a unique identity for each solution mode.
    The benchmark spans five executable domains, each with five difficulty levels (Sec.~\ref{sec:modebench}).

    \item We conduct comprehensive experiments on \mb{} across 4 open-weight model families and \MDfrontierdeployments{} frontier models.
    Across open-weight models and at 3 model scales, RLVR substantially reduces solution-mode diversity across domains under both Dr.GRPO and GRPO.
    Further, in inference, all frontier models we measured reached fewer than $1.5$ effective modes out of up to 355 unique solutions (Sec.~\ref{sec:results-collapse} and App.~\ref{app:hosted-mode-diversity}).

    \item We propose \emph{mode replay}, which stores one verified exemplar per mode and rehearses all modes equally.
    Added to Dr.GRPO \citep{liu2025understanding}, it raises accuracy and pairwise correct mode diversity across 3 model scales and all 5 \mb{} domains.
    Across the five-domain aggregate, our replay methods outperform 8 RLVR comparators on both correctness and conditional solution diversity.
    Added to MaxRL \citep{tajwar2026maxrl} (which we dub as Re:Max), it adds diversity and accuracy across \mb{} difficulty levels (Sections~\ref{sec:method} and~\ref{sec:results-retention}--\ref{sec:results-levels}).
\end{compactenum}
\endgroup

Overall, our results quantify the severity of mode collapse and suggest that incorporating mode replay as part of RLVR post-training can lead to both increased diversity of solutions and improved model performance.

\section{\mb: Measuring Successful Alternatives}
\label{sec:modebench}
\label{sec:collapse}

\begin{figure}[!t]
  \centering
  \includegraphics[width=\linewidth]{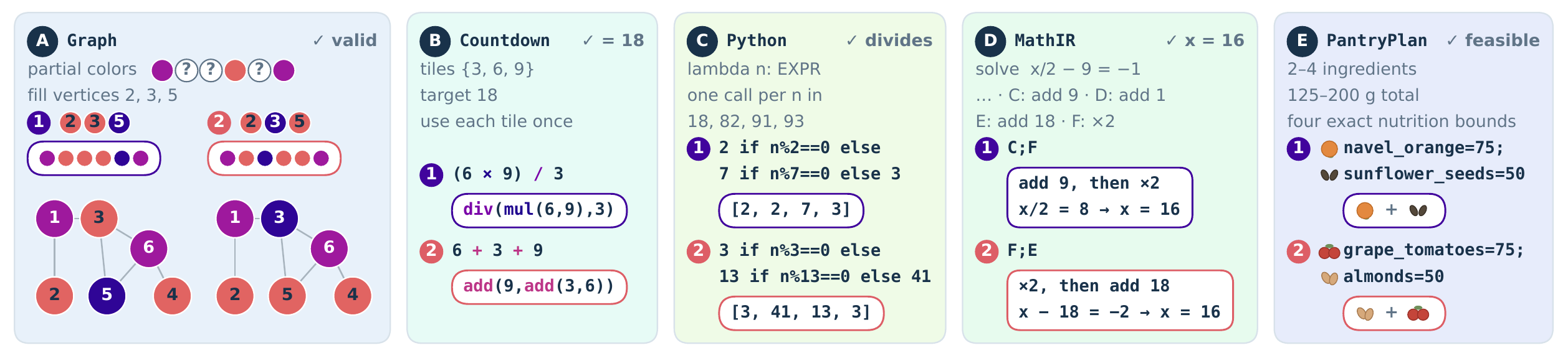}
  \caption{\textbf{\mb{} comprises questions with many acceptable solutions across domains.}
  (1) and (2) identify different solution modes across the five domains, which include Graph coloring, a normalized computation tree, executable Python code, algebraic problems and constraint optimization.
  }
  \label{fig:modebench-examples}
\end{figure}


Building a benchmark to study solution mode collapse is challenging for two reasons.
First, each problem must allow for several valid discrete solutions.
Second, given a model's response, we must be able to determine which solution mode was reached.
For example, in Fig.~\ref{fig:modebench-examples}D, we show algebra problems in our benchmark.
In this example, the same algebraic solution can be reached through two different series of steps.

Ensuring these two properties hold is domain-specific.
Fig.~\ref{fig:modebench-examples} illustrates a subset of solution modes across the considered domains; e.g., in \texttt{Countdown} and \texttt{MathIR}, modes mean the model reached the same answer by a different route, whereas in \texttt{Graph} and \texttt{Pantry}, modes mean a different valid answer altogether.
We provide the certified or enumerated solution support used for each domain in Table~\ref{tab:tasks}.
Refer to App.~\ref{app:domain-overview} and App.~\ref{app:prompts} for per-domain rules and prompts for dataset generation.

For each domain, we provide a verifier that determines both correctness and solution-mode identity.
Precisely, for a prompt $x$ with specification $s_x$, the verifier maps an output $y$ either to failure $\bot$ or to a solution mode $V(y,s_x)\in\mathcal C_x$.
Two accepted outputs are the same mode if and only if their verifier outputs agree.
The learner observes only modes for responses it generates and is never given the complete set $\mathcal C_x$ or its size.
Some domains additionally provide certified or enumerated support used for validation and analysis; these catalogues need not exhaust every solution accepted by the verifier.

\begin{figure}[b]
  \centering
  \includegraphics[width=0.95\linewidth]{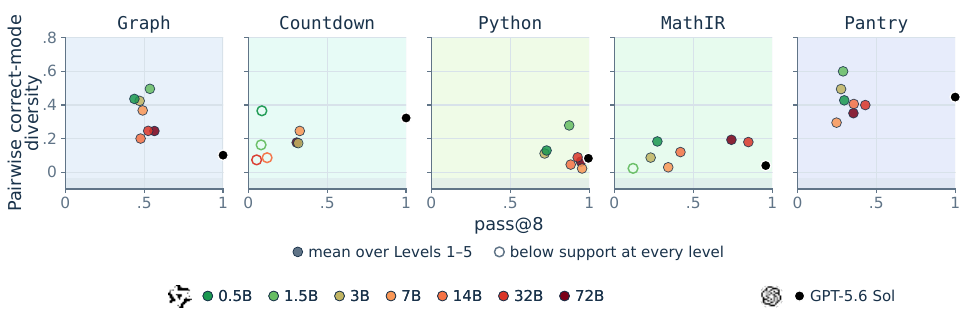}
  \caption{\textbf{Higher performance is not associated with increased solution mode diversity.} We show the relationship between a performance-related metric (\texttt{pass@}8; $x$-axis) and our proposed solution mode diversity metric (\pmd{}; $y$-axis) for GPT-5.6 Sol (in black), and Qwen2.5-Instruct models of various sizes (where parameter count is indicated by color). Accuracy and model size do not reliably predict solution-mode diversity.}
  \label{fig:base-levels-all-scales}
\end{figure}

\subsection{Success and verified-mode diversity}
\label{sec:collapse-metrics}

To measure performance on \mb{}, we must consider two distinct properties: how often the output is accurate, and, given that it is right, which of the available right answers has been given.
A benchmark reporting only accuracy cannot capture the failure we study.

The obvious way to measure modal diversity is to count distinct correct answers in $K$ samples.
We define $P$ as the chance a sample is correct and $q_c$ as how that correct probability is divided among the valid answers:
\vspace{-0.05em}
\begin{equation}
 \underbrace{\Pr[D_K>0]}_{\texttt{pass@}K}=1-(1-P)^K,
 \qquad
 \underbrace{\E[D_K]}_{\texttt{distinct@}K}
 =\sum_c[1-(1-Pq_c)^K],
 \label{eq:main-occupancy}
\end{equation}
\vspace{-1em}

where $D_K$ counts the distinct correct answers.
However, both metrics increase with $P$, and thus a policy that is right more often may have higher $\texttt{pass@K}$ and $\texttt{distinct@}K$ while producing less diverse solutions.

We therefore propose an alternative metric, denoted as pairwise correct-mode diversity:
\begin{equation}
 \pmd = 1-\sum_c q_c^2
 = \Pr[Z_1\ne Z_2\mid \text{both responses verified}],
 \label{eq:main-pmd}
\end{equation}
\vspace{-1.2em}

measuring the probability of different solution modes conditional on two correct responses.
A \pmd{} of zero thus implies every correct answer belongs to the same solution mode, and an even spread over $m$ yields a \pmd{} of $1-1/m$, e.g., if answers spread evenly across two possible solution modes, then the \pmd{} is $1/2$.
We note that when infinite solution modes are possible, the maximum \pmd{} converges to 1, and that when the model outputs no correct solutions, we cannot calculate the metric.
We only calculate \pmd{} when a model produces a minimum number of correct solutions; App.~\ref{app:metric-details} gives that threshold.

Fig.~\ref{fig:base-levels-all-scales} shows how \pmd{} captures diversity of responses independently of accuracy.
GPT-5.6 Sol has near-perfect \texttt{pass@8} for all five tasks considered, yet much weaker models in the Qwen2.5-Instruct family have higher \pmd{}.
In this work, we define a \textit{solution mode collapse} as a decline in \pmd{} during RLVR.

\begin{wrapfigure}{r}{0.56\linewidth}
  \centering
  \includegraphics[width=\linewidth]{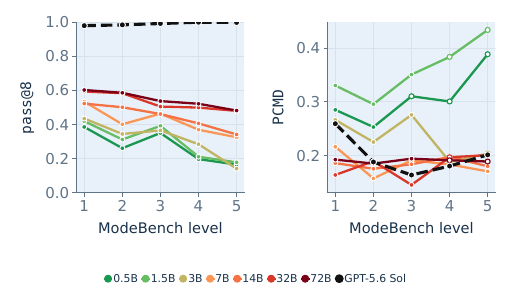}
  \caption{\textbf{Correctness and solution-mode diversity vary across benchmark levels.} Across increasingly difficult \mb{} levels, \texttt{pass@8} generally declines, while \pmd{} follows different patterns across models.}
  \label{fig:levels}
\end{wrapfigure}

\subsection{Impact of task difficulty on mode collapse}
\label{sec:levels-design}

We construct \mb{} with multiple difficulty levels so that we can study how solution-mode diversity changes as problems become harder, while keeping the task-defined solution mode fixed.
For each domain, we include five difficulty levels, e.g. for our \texttt{Graph} domain we allow more hidden vertices, and for \texttt{Countdown} we give the model more operands.
Each difficulty level holds constant (1) our definition of mode, (2) our approximate distribution of modes, (3) our verifier, and (4) the sample size of 384 train and 128 test problems.
To ensure (2) holds, and moreover, that the problems we sample are in fact more challenging, we fit each level with consideration to \texttt{pass@1} and \texttt{pass@8} of a reference model, e.g. Level 4 was tuned such that a Qwen2.5-Instruct-7B model has the same \texttt{pass@1} and \texttt{pass@8} scores as a Qwen2.5-Instruct-0.5B on Level 1.


Across benchmark levels, solution-mode diversity varies with task difficulty and model scale, including in settings where correctness remains near saturation (Fig.~\ref{fig:levels}).
This is most noticeable when frontier models are evaluated.
Even though correctness is saturated, we are able to use \pmd{} to isolate solution mode collapse (App.~\ref{app:hosted-mode-diversity}), and this pattern appears across \mb{}.
For example, frontier models answer $\MDfrontierpass$ of questions from Level 1--3 correctly under normalized grading, but their mean \pmd{} is only $\MDfrontierpmdmean$, below the $\MDsmallbestpmd$ of the most diverse sub-2B model, which answers $\MDsmallbestpassall$ of its prompts.

\begin{figure}[!t]
  \centering
  \includegraphics[width=\linewidth]{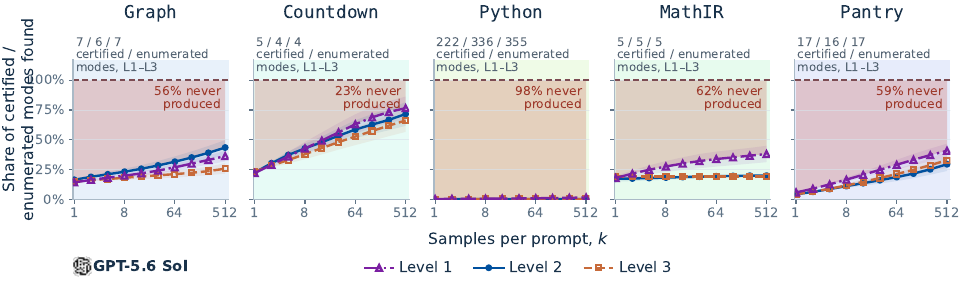}
  \vspace{-9pt}
  \caption{\textbf{Asking the model more times may not recover all certified or enumerated modes.}
  We test directly whether a larger sampling budget recovers the known modes.
  GPT-5.6 Sol, with medium reasoning on, answers 32 problems per domain and level with 512 responses each.
  Curves are distinct verified modes in random $k$-subsets as a share of the enumerated or certified modes available for those problems, and the shaded band is what no level ever produced.
  Shading around lines is the pointwise 95\% CIs using a bootstrap over problems.}
  \label{fig:gpt56-sampling-budget}
\end{figure}

\subsection{Concentration in frontier models}
\label{sec:hosted-concentration}

A common fix for solution mode collapse is to increase the sampling budget, as repeated sampling can substantially increase solution coverage at test time \citep{brown2024monkeys,yue2025rlvrlimit}.
But, across \mb{}, larger sampling budgets remain far below the enumerated or certified solution support in several domains.
For example, all 1,024 GPT-5.6 Sol draws on the 128 Level-3 \texttt{Graph} problems verify, yet eight draws yield just $1.16$ distinct colorings, and $95.1\%$ of correct pairs repeat one (App.~\ref{app:hosted-mode-diversity}).
Fig.~\ref{fig:gpt56-sampling-budget} follows \mb{} at three levels out to 512 draws per problem, and the curves flatten far below the modal set: an LM that has settled on two colorings does not find the rest even after being asked five hundred times.

\texttt{Python} is the extreme case.
Its problems admit several hundred distinct return vectors on average, and five hundred draws surface between one and four solutions, under two percent of all possible solutions, and under one percent at Levels 2 and 3.
Each doubling of the sampling budget buys a fraction of one mode, against gaps of several modes everywhere and several hundred in \texttt{Python}.
This problem becomes worse as we increase problem difficulty.
All \MDfrontierdeployments{} frontier models tested narrow in \pmd{} from Level 1 to Level 3 while staying near-perfect on correctness (Table~\ref{tab:hosted-cohort-mode-diversity}, App.~\ref{app:hosted-mode-diversity}).

We similarly check our local models over larger samplings, and we find quadrupling to $K=32$ draws per problem moves control \pmd{} by at most $.010$, and \texttt{MathIR} at $K=64$ draws is as concentrated as at $K=8$ (Apps.~\ref{app:decoding-objection} and~\ref{app:sampling-budget-ablation}), while disabling reasoning lowers correctness and verified-mode counts in every deployment tested, though its \pmd{} effect differs by deployment (Table~\ref{tab:hosted-reasoning-control}, App.~\ref{app:hosted-reasoning-control}).

We do, however, note that cheap inference interventions do exist.
For example, we find that because different models collapse onto different answers, mixing responses from two independent models covers more verified solution modes than either alone (App.~\ref{app:cross-model-overlap}).
Additionally, raising inference temperature can expose more sampled solution modes, sometimes at a cost in accuracy (Table~\ref{tab:gpt56-temperature-summary}, App.~\ref{app:hosted-temperature}).

\section{Re:Max: Learn from New Samples and Remember Successes}
\label{sec:method}

\begin{wrapfigure}{r}{0.6\linewidth}
  \centering
  \vspace{-0.1cm}
  \includegraphics[width=0.9\linewidth]{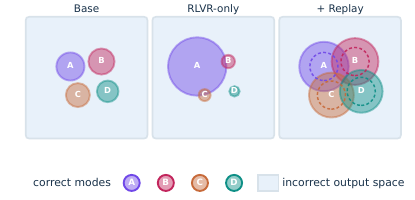}
  \caption{\textbf{RLVR early ``winner'' (A) becomes the predominant way to solve.} The base model supports several modes. RLVR-only training concentrates on one mode. Our replay approach keeps every buffered mode \& leaves room for more.}
  \label{fig:replay-bank-balance}
\end{wrapfigure}

\subsection{Theoretical intuition for mode collapse}
\label{sec:collapse-mechanism}

Solution mode collapse follows from two properties of the objective:
1) the binary verifier assigns the same reward to every correct answer, so the objective provides no incentive to preserve one correct mode over another; and 2) fresh on-policy updates directly reinforce modes that appear in the current rollout samples, which are drawn according to the policy's current preferences.
Together, these properties create a positive feedback loop, where a mode with even a small early advantage is sampled more often, receives more reinforcement, and therefore becomes even more likely to be sampled in subsequent updates.
Figure~\ref{fig:replay-bank-balance} illustrates this process.

Recent work formalizes this positive-feedback effect, showing that on-policy RL can amplify small probability differences among equally rewarded outcomes into ``winner-takes-all'' dynamics~\citep{sinha2026expected,lochab2026ucpo}.
We extend existing analysis to the verified solution-mode setting; App.~\ref{app:theory} gives the formal results and assumptions.
First, under an idealized solution-mode model over verifier-defined modes, the same winner-takes-all dynamic applies.
Second, binary rewards alone do not prefer mode concentration; the concentration arises from the optimization dynamics.
Third, we characterize why collapse becomes difficult to reverse under on-policy training.
We show that as a solution mode becomes rare, both its probability of appearing in a rollout and its expected gradient contribution approach zero.

\subsection{Replaying distinct verified successes}
\label{sec:replay}

These results motivate a simple intervention.
Once a correct solution mode has been discovered, continued training should not depend on the current policy re-discovering it.
We therefore use experience replay to retain verified successes and continue training on them \citep{lin1992selfimproving,DBLP:journals/corr/MnihKSGAWR13,rolnick2019replay}.
In our idealized fixed-buffer analysis, replay prevents a stored mode's probability from vanishing: once a mode is stored in the buffer, its probability remains bounded away from zero throughout subsequent training under the assumptions of Theorem~\ref{thm:replay-retention}.


For each training prompt $x$, we maintain an initially empty, bounded buffer $\mathcal B_x$.
The first time a verified mode appears, we store one exemplar $e_c$ (subsequent rediscoveries do not modify the buffer).
A deterministic schedule revisits nonempty buffers throughout training.
For an exemplar of token length $|e_c|$, the replay loss is

\vspace{-2mm}
\begin{equation}
 L_{\mathrm{replay}}(x)=-\frac{1}{|\mathcal B_x|}
 \sum_{c\in\mathcal B_x}\frac{\log\pi_\theta(e_c\mid x)}{|e_c|}.
 \label{eq:lm-verified-replay}
\end{equation}
\vspace{-2mm}

Uniform weighting breaks the link between discovery frequency, how often a mode appears in fresh rollouts, and rehearsal frequency, how often a stored mode is replayed.
Once a mode enters the buffer, it is rehearsed as often as every other retained mode, so frequently sampled modes receive no additional replay.

Adding this loss to the fresh update from standard MaxRL \citep{tajwar2026maxrl} gives our primary method, Re:Max.
This principle is modular and can be applied to any RLVR algorithm.
For example, applying it to Dr.GRPO \citep{liu2025understanding} yields Re:Dr.
Solution modes discovered during training do not enter the on-policy reward group and contribute only through the separate replay loss.
This separation allows one to vary the RLVR objective and the replay mechanism independently.
Finally, our replay buffers contain only responses to training prompts; held-out evaluation therefore measures changes in the policy itself rather than retrieval or recitation from the buffer (App.~\ref{app:algorithm}).

Fig.~\ref{fig:verified-support-story} illustrates how replay changes learning dynamics.
In the standard RLVR setting, fresh updates are driven by modes present in the current sampling, which can cause previously discovered modes to disappear.
Re:Max adds an additional incentive for the policy to continuously generate previously found solution modes that yield correct solutions.
We note that this is fundamentally different from approaches that encourage diversity by modifying the on-policy learning signal to favor rare or diverse outcomes \citep{song2025outcome,li2026setpo}, as we do not use mode identity to modify the task reward or advantage.

\begin{figure}[b]
  \centering
  \includegraphics[width=\linewidth]{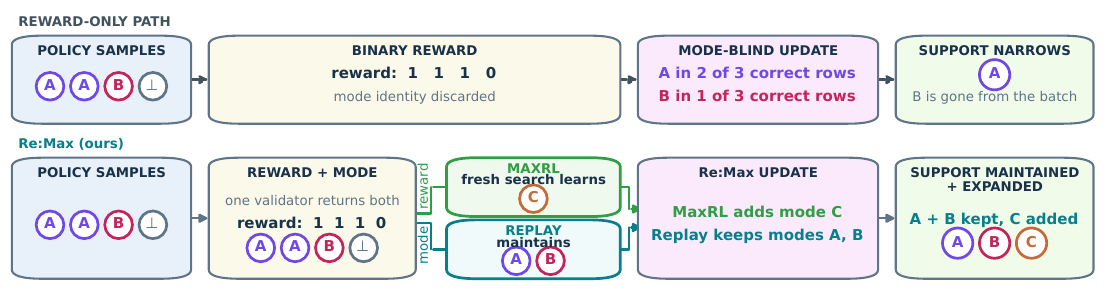}
  \caption{\textbf{Replay preserves successes that on-policy learning may stop seeing.}
  MaxRL learns from the policy's current samples, while replay stores one verified exemplar per discovered mode and revisits those exemplars with equal weight.
  Together, these two learning paths allow Re:Max to preserve previously discovered solution modes while continuing to explore for new ones.}
  \label{fig:verified-support-story}
\end{figure}

\section{Experimental Design}
\label{sec:experiments}
\label{sec:shared-protocol}

In our main experiment, we examine whether replay improves correctness and solution-mode diversity across different RLVR objectives.
We compare the impact of our replay component against both Dr.GRPO \citep{liu2025understanding} and MaxRL \citep{tajwar2026maxrl}.
First, on all five \mb{} Level 1 train/test sets we carry out five seeds of post-training using Qwen2.5-0.5B, Falcon3-1B, and Qwen2.5-3B.
We train for eight passes with $G=16$ new rollouts.
We note that replay does not increase this sampling budget.
Instead, it performs an additional training update using at most $M=16$ previously collected exemplars from the replay buffer for each visited prompt (Table~\ref{tab:run-contract}).
Second, we conduct the same experiment on \mb{} Levels 2 and 3 using only Qwen2.5-0.5B.
Third, we compare our method with other approaches, including frequency-weighted replay, UCPO \citep{lochab2026ucpo}, RLEP \citep{zhang2025rlep}, GAPO \citep{anschel2025groupaware}, SetPO \citep{li2026setpo}, and fixed Semantic-MaxEnt under a specific setting (Qwen2.5-0.5B, five seeds, Level 1).


We additionally evaluate \MDfrontierdeployments{} hosted models: GPT-5.6 Sol \citep{openai2026gpt56}, GPT-5.4 \citep{openai2026gpt54}, Grok 4.3 \citep{xai2026grok43}, Kimi K3 \citep{kimiteam2026kimik3}, DeepSeek V4 Pro \citep{deepseek2026v4pro}, Claude Opus 5 \citep{anthropic2026opus5}, and Claude Opus 4.8 \citep{anthropic2026opus48} on \mb{} Levels 1--3, collecting eight responses per prompt and grading them through a fixed formatting normalizer so that presentation differences are not counted as verified alternatives (App.~\ref{app:hosted-comparison}).
We carry out the same inference-only evaluation on Levels 1--5 across the available scales and benchmark levels of SmolLM2 \citep{allal2025smollm2}, Qwen2.5 \citep{yang2024qwen25}, Falcon3 \citep{falconllm2024falcon3}, and OLMo-2 \citep{olmo20242olmo2furious}.
These experiments are inference-only; no replay or training is applied.

\section{Results}
\label{sec:results}

Our results are three-fold.
First, we document solution-mode collapse under RLVR on \mb{} (Section~\ref{sec:results-collapse}).
Second, we find that replay reverses this collapse without sacrificing correctness (Section~\ref{sec:results-retention}), outperforming the 8 evaluated RLVR comparators on the five-domain aggregate.
Third, we show that replay gains persist across difficulty levels (Section~\ref{sec:results-levels}), KL anchoring, and domain changes.

\subsection{RLVR concentrates solution modes}
\label{sec:results-collapse}

\begin{wrapfigure}{r}{0.5\textwidth}
  \centering
  \includegraphics[width=0.5\textwidth]{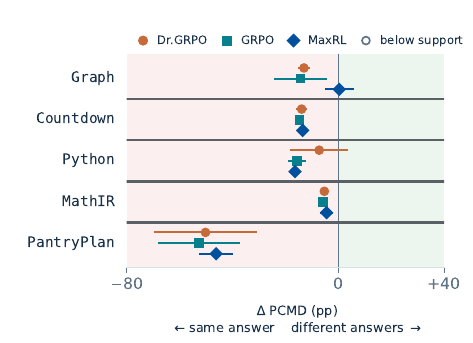}
  \caption{\textbf{Training reduces solution mode diversity.} We compare \pmd{} before and after RLVR on Qwen2.5-3B, over \mb{} Level 1 test set, under three objectives, five domains, five seeds. The figure shows the change in \pmd{} from before to after. Marks are means over training seeds with 95\% intervals. In 14 of 15 comparisons, we find fewer solution modes after training. See Fig.~\ref{fig:concentration-all-scales} for other model scales.}
  \label{fig:concentration-story}
\end{wrapfigure}

\textbf{RLVR replaces diverse correct attempts with the same repeated answers.}
We compare each policy's solution-mode distribution before and after training using \pmd{}.
Across Dr.GRPO, GRPO, and MaxRL results, we find fewer solution modes after RLVR post-training in 14 out of 15 comparisons for Qwen2.5-3B (Fig.~\ref{fig:concentration-story}), with the same pattern appearing across model scales and difficulty levels (Fig.~\ref{fig:concentration-all-scales}).
The starkest case is \texttt{Python}, where Qwen2.5-3B goes from solving 73 of 128 held-out prompts in multiple ways to as few as 22 after Dr.GRPO post-training.

\textbf{As outputs concentrate, rollouts under RLVR lose training signal.}
Dr.GRPO and MaxRL learn from a rollout group only when it contains a mix of correct and incorrect responses.
Thus, we audit every rollout group during training to track how often it provides a non-zero learning signal.
We find that Re:Max systematically increases the percentage of rollout groups with a learning signal (App.~\ref{app:degenerate-groups}).

\subsection{Replay improves both accuracy and diversity}
\label{sec:results-retention}

\begin{table}[t]
  \centering
  \caption{\textbf{Compared objectives under Qwen2.5-0.5B, \mb{} Level 1.} All differences are relative to Dr.GRPO; avg weights the five domains equally within seed (5 seeds). Bold marks a column's best. An em-dash means that there are few more-than-one solution pairs to evaluate, and a superscript gives the paired seed count where it falls below five. Red-to-green shading scales to each column's largest magnitude.}
  \label{tab:direct-comparator-matrix}
  \scriptsize
  \setlength{\tabcolsep}{1.4pt}
  \renewcommand{\arraystretch}{0.95}
  \begin{tabular}{@{}l*{12}{r}@{}}
    \toprule
    & \multicolumn{6}{c}{\textbf{$\Delta$ \texttt{pass@8}}}
      & \multicolumn{6}{c}{\textbf{$\Delta$ \pmd{}}} \\
    \cmidrule(lr){2-7}\cmidrule(l){8-13}
    & \texttt{Graph}
    & \shortstack{\texttt{Count}\\\texttt{down}}
    & \texttt{Python}
    & \texttt{MathIR}
    & \texttt{Pantry}
    & Avg
    & \texttt{Graph}
    & \shortstack{\texttt{Count}\\\texttt{down}}
    & \texttt{Python}
    & \texttt{MathIR}
    & \texttt{Pantry}
    & Avg \\
    \midrule
    Re:Dr \textit{(ours)} & \cellcolor[HTML]{A9D6A0}$\mathbf{+.646}$ & \cellcolor[HTML]{D1E3A3}$\mathbf{+.191}$ & \cellcolor[HTML]{C1DEA2}$+.356$ & \cellcolor[HTML]{ABD7A0}$\mathbf{+.285}$ & \cellcolor[HTML]{CDE2A3}$+.207$ & \cellcolor[HTML]{AED8A0}$+.337$ & \cellcolor[HTML]{A9D6A0}$\mathbf{+.559}$ & \cellcolor[HTML]{ADD7A0}$+.480$ & \cellcolor[HTML]{DEE7A4}$+.082$ & \cellcolor[HTML]{DDE7A4}$\mathbf{+.017}$ & \cellcolor[HTML]{D0E3A3}$+.334$ & \cellcolor[HTML]{AFD8A0}$+.294$ \\
    Re:Max \textit{(ours)} & \cellcolor[HTML]{ACD7A0}$+.621$ & \cellcolor[HTML]{D2E3A3}$+.185$ & \cellcolor[HTML]{A9D6A0}$\mathbf{+.509}$ & \cellcolor[HTML]{ADD7A0}$+.277$ & \cellcolor[HTML]{CFE2A3}$+.199$ & \cellcolor[HTML]{A9D6A0}$\mathbf{+.358}$ & \cellcolor[HTML]{AED8A0}$+.525$ & \cellcolor[HTML]{A9D6A0}$\mathbf{+.504}$ & \cellcolor[HTML]{A9D6A0}$\mathbf{+.249}$ & \cellcolor[HTML]{F0EDA5}$+.005$ & \cellcolor[HTML]{D2E3A3}$+.317$ & \cellcolor[HTML]{A9D6A0}$\mathbf{+.320}$ \\
    \midrule
    GAPO & \cellcolor[HTML]{E3E9A4}$+.175$ & \cellcolor[HTML]{F6EFA6}$+.011$ & \cellcolor[HTML]{F7F0A6}$+.009$ & \cellcolor[HTML]{F7E2A2}$-.054$ & \cellcolor[HTML]{BEDDA2}$\mathbf{+.279}$ & \cellcolor[HTML]{E6EAA5}$+.084$ & \cellcolor[HTML]{E5EAA5}$+.131$ & \cellcolor[HTML]{C3DEA2}$+.340$ & \cellcolor[HTML]{F8F0A6}$.000$ & \cellcolor[HTML]{F8F0A6}$.000$ & \cellcolor[HTML]{A9D6A0}$\mathbf{+.653}$ & \cellcolor[HTML]{C1DEA2}$+.225$ \\
    SetPO & \cellcolor[HTML]{F0EDA5}$+.064$ & \cellcolor[HTML]{F7E6A3}$-.050$ & \cellcolor[HTML]{ABD7A0}$+.497$ & \cellcolor[HTML]{EAEBA5}$+.051$ & \cellcolor[HTML]{D8E5A4}$+.154$ & \cellcolor[HTML]{D8E6A4}$+.143$ & \cellcolor[HTML]{F3EEA6}$+.039$ & \cellcolor[HTML]{F7F0A6}$+.007$ & \cellcolor[HTML]{F4EFA6}$+.013$ & \cellcolor[HTML]{F3EEA6}$+.003$ & \cellcolor[HTML]{D8E5A4}$+.266$ & \cellcolor[HTML]{E8EBA5}$+.066$ \\
    MaxRL (no replay) & \cellcolor[HTML]{DEE7A4}$+.213$ & \cellcolor[HTML]{E3E9A4}$+.101$ & \cellcolor[HTML]{F8F0A6}$.000$ & \cellcolor[HTML]{F6DFA1}$-.066$ & \cellcolor[HTML]{F6EFA6}$+.009$ & \cellcolor[HTML]{EDECA5}$+.051$ & \cellcolor[HTML]{ECECA5}$+.086$ & \cellcolor[HTML]{F6EFA6}$+.015$ & \cellcolor[HTML]{F8F0A6}$.000$ & \cellcolor[HTML]{F8EFA6}$.000$ & \cellcolor[HTML]{F8F0A6}$.000$ & \cellcolor[HTML]{F3EEA6}$+.020$ \\
    RLEP-Dr & \cellcolor[HTML]{ECECA5}$+.098$ & \cellcolor[HTML]{E3E9A4}$+.103$ & \cellcolor[HTML]{F8F0A6}$.000$ & \cellcolor[HTML]{D6E5A3}$+.126$ & \cellcolor[HTML]{F8F0A6}$.000$ & \cellcolor[HTML]{EAEBA5}$+.066$ & \cellcolor[HTML]{F3EEA6}$+.035$ & \cellcolor[HTML]{F6EFA6}$+.012$ & \cellcolor[HTML]{F8F0A6}$.000$ & \cellcolor[HTML]{F8EFA6}$-.001$ & \cellcolor[HTML]{F8F0A6}$.000$ & \cellcolor[HTML]{F6EFA6}$+.009$ \\
    UCPO & \cellcolor[HTML]{F8F0A6}$-.003$ & \cellcolor[HTML]{DDE7A4}$+.133$ & \cellcolor[HTML]{F8F0A6}$.000$ & \cellcolor[HTML]{F0EDA5}$+.031$ & \cellcolor[HTML]{EDECA5}$+.053$ & \cellcolor[HTML]{EFEDA5}$+.043$ & \cellcolor[HTML]{F8F0A6}$-.003^{\,2}$ & \cellcolor[HTML]{F5EFA6}$+.019$ & \cellcolor[HTML]{F8F0A6}$.000^{\,2}$ & \cellcolor[HTML]{F8EFA6}$-.001$ & \cellcolor[HTML]{F8F0A6}$.000^{\,2}$ & \cellcolor[HTML]{F6EFA6}$+.009$ \\
    Semantic-MaxEnt & \cellcolor[HTML]{F8F0A6}$+.002$ & \cellcolor[HTML]{DBE6A4}$+.141$ & \cellcolor[HTML]{F8F0A6}$.000$ & \cellcolor[HTML]{F5D19C}$-.121$ & \cellcolor[HTML]{CEE2A3}$+.201$ & \cellcolor[HTML]{EEEDA5}$+.045$ & \cellcolor[HTML]{F7F0A6}$+.004$ & \cellcolor[HTML]{F5EFA6}$+.020$ & \cellcolor[HTML]{F8F0A6}$.000$ & \cellcolor[HTML]{F7F0A6}$+.001$ & \textemdash{} & \cellcolor[HTML]{F6EFA6}$+.006$ \\
    GRPO & \cellcolor[HTML]{F3EEA6}$+.037$ & \cellcolor[HTML]{D7E5A4}$+.159$ & \cellcolor[HTML]{F8F0A6}$.000$ & \cellcolor[HTML]{F7E3A2}$-.052$ & \cellcolor[HTML]{F3EEA6}$+.023$ & \cellcolor[HTML]{F1EEA5}$+.034$ & \cellcolor[HTML]{F7F0A6}$+.008$ & \cellcolor[HTML]{F8F0A6}$+.001$ & \cellcolor[HTML]{F8F0A6}$.000$ & \cellcolor[HTML]{F6EFA6}$+.001$ & \cellcolor[HTML]{F8F0A6}$.000$ & \cellcolor[HTML]{F8F0A6}$+.002$ \\
    \midrule
    Untrained checkpoint & \cellcolor[HTML]{E8EBA5}$+.132$ & \cellcolor[HTML]{F0A58F}$-.383$ & \cellcolor[HTML]{F5D79E}$-.172$ & \cellcolor[HTML]{F0A58F}$-.292$ & \cellcolor[HTML]{A9D6A0}$+.380$ & \cellcolor[HTML]{F7E2A2}$-.067$ & \cellcolor[HTML]{E9EBA5}$+.108$ & \textemdash{} & \textemdash{} & \cellcolor[HTML]{CDE2A3}$+.027$ & \cellcolor[HTML]{B0D8A1}$+.591$ & \cellcolor[HTML]{BCDCA1}$+.242$ \\
    \bottomrule

  \end{tabular}
\end{table}

\textbf{Replay improves accuracy and diversity across RLVR algorithms and model scales.}
We add our proposed verified-mode replay mechanism to Dr.GRPO and MaxRL and compare each against its matched no-replay control across five \mb{} domains and three model scales.
Across these comparisons, we find that Re:Max and Re:Dr (ours) shift the outcome toward both higher \texttt{pass@8} and greater solution-mode diversity (Fig.~\ref{fig:maxrl-factorial}; Table~\ref{tab:cross-scale-terminal-effects}).

\textbf{Mode-diversity gains come from both retaining successes and balancing replay.}
To isolate the role of mode balancing, we compare uniform replay with a frequency-weighted variant that rehearses modes in proportion to their discovery rate.
Holding replay buffer construction, replay advantage scheduling, and loss update unchanged, we find that uniform weighting preserves greater verified diversity, most clearly in \texttt{Graph}, \texttt{Countdown}, and \texttt{Pantry}.
For example, the post-trained LM recovers roughly 0.6 additional modes per prompt in \texttt{Graph}, while also increasing \texttt{pass@8} by 6.6 percentage points.
Across domains, these gains come without an aggregate difference in correctness (App.~\ref{app:frequency-replay-progress}, App.\ Fig.~\ref{fig:replay-key-weighting}).

\begin{wrapfigure}{r}{0.6\textwidth}
  \centering
  \vspace{-0.9\baselineskip}
  \includegraphics[width=0.55\textwidth]{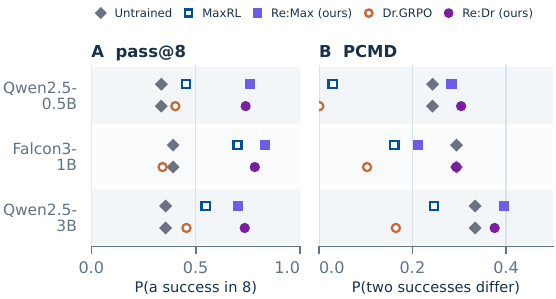}
  \caption{\textbf{Replay leads to higher accuracy and solution mode diversity.} Diamonds are initial LMs; open/filled markers are training without/with replay. Averages all five domains within seed. (App.\ Fig.~\ref{fig:maxrl-factorial-scale-extensions} provides per-task results).}
  \label{fig:maxrl-factorial}
\end{wrapfigure}


\textbf{Alternative diversity mechanisms produce isolated rather than consistent gains.}
We compare Re:Max and Re:Dr (ours) against UCPO \citep{lochab2026ucpo}, RLEP \citep{zhang2025rlep}, GAPO \citep{anschel2025groupaware}, SetPO \citep{li2026setpo}, and fixed Semantic-MaxEnt (Table~\ref{tab:direct-comparator-matrix}, App.~\ref{app:ucpo-progress}).
Some methods produce substantial gains in individual domains, e.g., GAPO reaches $2.9\times$ its control's effective number of verified modes in \texttt{Pantry}.
Yet gains are domain-specific, while Re:Max improves both metrics across domains, raising \texttt{pass@8} by $90\%$ relative to Dr.GRPO with $1.9\times$ its effective verified modes (Table~\ref{tab:direct-comparator-matrix}; App.~\ref{app:supporting}).

\subsection{Replay remains effective beyond the base training setting}
\label{sec:results-levels}

\textbf{Replay's gains persist on harder task constructions.}
We repeat the replay-versus-control comparison within each of the first three \mb{} levels, holding the level and RLVR objective fixed.
Re:Dr and Re:Max improve average \texttt{pass@8} and \pmd{} under Dr.GRPO and MaxRL, respectively, at every level (Fig.~\ref{fig:level2-admission}).
At Level 2, for instance, \texttt{Graph} rises from $1.04$ to $1.55$ effective verified modes under Dr.GRPO, and from $1.13$ to $1.48$ under MaxRL.
Gains vary across domains, but remain positive in aggregate even at Level 3 (App.~\ref{app:level2-interim}).


\textbf{Replay leaves more usable alternative correct solutions when the task changes.}
We ask whether Re:Max and Re:Dr have a downstream impact on inference.
Ideally, replay should make a model more able to give a correct answer when the original most-common answer is no longer available.
To test this, we take a portfolio of eight generated answers, remove one option from the task, e.g. an ingredient in \texttt{Pantry} or a color in \texttt{Graph}, and re-check under the modified task.
Across all 5 domains, Re:Max and Re:Dr are more likely to give a valid alternate answer.
When no answer is valid, Re:Max and Re:Dr find a new valid answer in fewer additional attempts than RLVR or tuned-temperature sampling (App.~\ref{app:portfolio-withdrawals}).

\begin{wrapfigure}{l}{0.55\textwidth}
  \centering
  \includegraphics[width=0.45\textwidth]{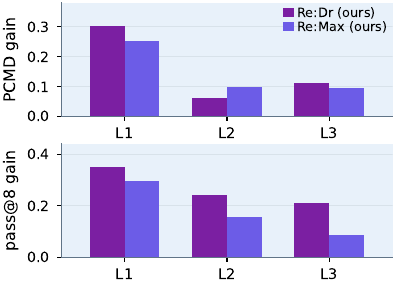}
  \caption{\textbf{Replay gains persist across difficulty levels.}
  Bars show the gain from adding replay to Dr.GRPO (Re:Dr) and MaxRL (Re:Max) at each \mb{} level, averaged across domains for Qwen2.5-0.5B. Replay improves both \pmd{} (top) and \texttt{pass@8} (bottom) at all three levels.}
  \label{fig:level2-admission}
\end{wrapfigure} 

\textbf{Reference KL and replay preserve different sources of diversity.}
We explore why Re:Max and Re:Dr gain solution diversity relative to the base model.
A potential explanation is that the model retains the diversity of its base.
We test this hypothesis by anchoring the policy to its base using KL divergence.
We sweep the $\beta$-coefficient of a KL penalty to the initial policy and compare with Re:Max and Re:Dr.
The anchor is limited by the diversity already present in its reference, whereas Re:Max and Re:Dr are limited by the modes their buffers discover.
Averaged over the three domains with a reportable initial \pmd{}, matching diversity with a KL anchor costs model correctness, although in \MDklFreeNames{} the reference is already broad enough for the anchor to win (Fig.~\ref{fig:reference-kl-plane}).

\begin{wrapfigure}{r}{0.45\textwidth}
  \centering
  \vspace{-0.7\baselineskip}
  \includegraphics[width=\linewidth]{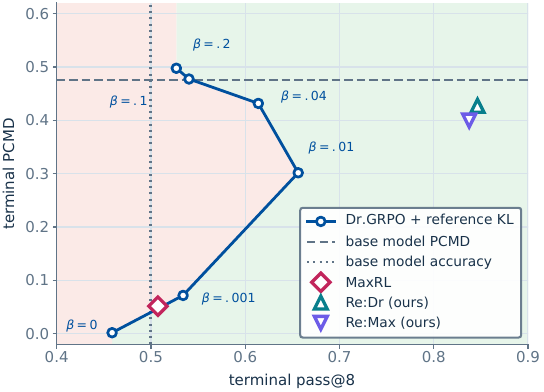}
  \caption{\textbf{Replay and KL anchoring preserve different diversity.}
  Means over the 3 domains with reportable initial \pmd{}, for Qwen2.5-0.5B. The KL anchor forms a curve because we sweep $\beta$-KL coefficients. Increasing $\beta$ preserves more diversity, but at a lower \texttt{pass@8}.
  Green indicates higher \texttt{pass@8} than the anchor at the same diversity (App.~\ref{app:kl-measured}).}
  \label{fig:reference-kl-plane}
\end{wrapfigure}

\section{Related Work}
\label{sec:related}

\textbf{Measuring solution diversity under alignment}
RLHF and reasoning RL can reduce output diversity, entropy, and high-budget coverage \citep{kirk2024understanding,dang2025diversity,cui2025entropy,yue2025rlvrlimit}.
Semantic clustering infers it from language \citep{farquhar2024detecting}, which is general but depends on a judge.
\mb{} instead defines solution identity through the same executable check that is used to assign reward.
This gives us a narrower but cheaper measure of solution diversity, shared on training and evaluation.
Preserving alternatives matters because sampling-based selection relies on generating multiple viable proposals \citep{brown2024monkeys,snell2024testtime,cobbe2021training,lightman2023lets}.

\textbf{Expanding diversity.}
Outcome bonuses, set prediction, distribution matching, and uniform-correct objectives all reshape the learning signal~\citep{song2025outcome,chen2025passktraining,li2025divergence,li2026setpo,li2026distribution,lochab2026ucpo}, while MaxRL combines objectives across sampling budgets \citep{tajwar2026maxrl}.
Many existing approaches still operate on what the current policy samples, so rare modes may receive little direct signal.
Replay can break this dependence on rediscovery.
Standard replay methods~\citep{lin1992selfimproving,rolnick2019replay,zhang2025rlep} revisit past successes in proportion to how often they occur.
Instead, we homogeneously balance replay across solution modes, an idea related to quality-diversity methods \citep{mouret2015illuminating,ecoffet2021return}.


\section{Conclusion}
\label{sec:conclusion}
\label{sec:discussion}
\label{sec:limitations}

We contribute \mb{}, a benchmark for studying solution mode diversity in RLVR, and Re:Max, a replay-based method that preserves discovered solution modes during RLVR.
Across objectives, model scales, and task difficulties, we find that replay can retain solution diversity while improving correctness.
Our results suggest that solution diversity may itself be a target for optimization.
Future work can move beyond preserving discovered modes towards training models to produce desired distributions over ways to solve a problem.

\textbf{Limitations.}
We acknowledge that replay retains only what sampling finds.
We also recognize that mode count says nothing about the ``worth'' of additional solutions.
Our experiments are designed such that the solution-equivalence relation is executable by construction, so future work is needed to evaluate the benefit and feasibility of replay in more open-ended solution spaces.


\label{sec:main-end}

\subsection*{Acknowledgments}
\label{sec:acknowledgements}

The authors are pleased to acknowledge that the work reported on in this paper
was substantially performed using the Princeton Research Computing resources at
Princeton University. Princeton Research Computing is a consortium of groups
including the Princeton Institute for Computational Science and Engineering
(PICSciE) and Research Computing at Princeton University.

This work began in COS 514, \textit{Fundamentals of Deep Learning}, taught by
Sanjeev Arora in Fall 2025, and was further developed in COS 598D,
\textit{Empirical Research Methods for CS}, taught by Manoel Horta Ribeiro in
Spring 2026. Liv G.~d'Aliberti's work was supported in part by a second-year
teaching assistantship from Princeton University's Department of Computer
Science.

We thank Cannoli, the Humans \& Machines Lab dog and lab member, for consistent
morale support. Liv G.~d'Aliberti is especially grateful to Julia Galliker and
to Arielle d'Aliberti, whom Liv married while this paper was being written, for
their love and support.

\subsubsection*{AI Use Statement}
\label{sec:ai-use}

Generative AI assisted manuscript editing, mathematical analysis, and supporting code.
The authors reviewed and verified the resulting content and take responsibility for the final manuscript.

\subsubsection*{Ethics Statement}

The benchmark and experiment-specific data are synthetic; this study collects no human-subject or personal data.
More verified modes are not inherently better: a distinct execution may be risky or costly even when it passes the validator, and \pmd{} does not measure downstream value.
Systems that evaluate or optimize breadth should report correctness, invalidity, and application-specific costs alongside mode counts.

\subsubsection*{Reproducibility Statement}

Every reported number is computed from a versioned result file.
App.~\ref{app:reproducibility} documents evaluation provenance and links the released datasets, checkpoints, and analysis resources.
The AI Use Statement above discloses generative-AI assistance.


\clearpage
\phantomsection\label{sec:references}
\bibliographystyle{iclr2027_conference}
\bibliography{example_paper}

\long\def\crossscaletablematerial{%

\begin{table}[!bp]
  \centering
  \caption{\textbf{Terminal correctness and solution diversity with and without replay.}
  Dr.GRPO and Re:Dr after eight training passes. Correctness uses five paired
  seeds except Falcon \texttt{Countdown} ($n=4$). Each diversity column averages its
  own eligible seeds, requiring at least 30 prompts with two correct responses
  per seed; superscripts give counts below five. Dashes indicate no eligible
  estimate. Arm-specific means need not subtract to the paired effects in
  Table~\ref{tab:cross-scale-terminal-effects}, which uses a 20-prompt threshold
  per arm. Red-to-green shading uses the same scale for both arms within
  each metric.}

\label{tab:core-terminal-endpoints}
  \scriptsize
  \setlength{\tabcolsep}{5pt}
  \begin{tabular}{@{}ll*{4}{r}@{}}
    \toprule
    \textbf{Model} & \textbf{Domain} &
      \multicolumn{2}{c}{\textbf{Dr.GRPO}} &
      \multicolumn{2}{c}{\textbf{Re:Dr}} \\
    \cmidrule(lr){3-4}\cmidrule(l){5-6}
      & & \textbf{\texttt{pass@8}} & \textbf{\pmd{}}
      & \textbf{\texttt{pass@8}} & \textbf{\pmd{}} \\
    \midrule
        \qwenmark{}2.5-0.5B & \texttt{Graph} coloring & \cellcolor[HTML]{F5D79E}.323 & \cellcolor[HTML]{F0A58F}.001 & \cellcolor[HTML]{ABD7A0}.969 & \cellcolor[HTML]{A9D6A0}.560 \\
     & \texttt{Countdown} & \cellcolor[HTML]{F8EFA6}.480 & \cellcolor[HTML]{F0A790}.007 & \cellcolor[HTML]{DBE6A4}.672 & \cellcolor[HTML]{BDDDA2}.487 \\
     & \texttt{Python} factors & \cellcolor[HTML]{F3BF97}.172 & \textemdash{} & \cellcolor[HTML]{F2EEA6}.528 & \cellcolor[HTML]{F0A58F}.000$^{\,2}$ \\
     & \texttt{MathIR} & \cellcolor[HTML]{F4EFA6}.512 & \cellcolor[HTML]{F0A58F}.001 & \cellcolor[HTML]{C6E0A2}.796 & \cellcolor[HTML]{F1AA90}.018 \\
     & \texttt{PantryPlan} & \cellcolor[HTML]{F3EEA6}.522 & \cellcolor[HTML]{F0A58F}.000 & \cellcolor[HTML]{D1E3A3}.729 & \cellcolor[HTML]{E9EBA5}.334 \\
    \addlinespace[2pt]
    \falconmark{}3-1B & \texttt{Graph} coloring & \cellcolor[HTML]{DDE7A4}.656 & \cellcolor[HTML]{F6DCA0}.204 & \cellcolor[HTML]{C5DFA2}.809 & \cellcolor[HTML]{CEE2A3}.428 \\
     & \texttt{Countdown}$^{\,4}$ & \cellcolor[HTML]{F5D79E}.325 & \cellcolor[HTML]{F3C097}.102 & \cellcolor[HTML]{E1E8A4}.632 & \cellcolor[HTML]{F4C99A}.134 \\
     & \texttt{Python} factors & \cellcolor[HTML]{F0A58F}.000 & \textemdash{} & \cellcolor[HTML]{A9D6A0}.979 & \cellcolor[HTML]{F2B594}.060 \\
     & \texttt{MathIR} & \cellcolor[HTML]{F2B493}.095 & \textemdash{} & \cellcolor[HTML]{D0E3A3}.736 & \cellcolor[HTML]{F1AB91}.022 \\
     & \texttt{PantryPlan} & \cellcolor[HTML]{E1E8A4}.633 & \cellcolor[HTML]{F0A58F}.000 & \cellcolor[HTML]{CEE2A3}.751 & \cellcolor[HTML]{EDEDA5}.317 \\
    \addlinespace[2pt]
    \qwenmark{}2.5-3B & \texttt{Graph} coloring & \cellcolor[HTML]{D9E6A4}.682 & \cellcolor[HTML]{F6E1A1}.224 & \cellcolor[HTML]{C0DEA2}.836 & \cellcolor[HTML]{D6E5A3}.399 \\
     & \texttt{Countdown} & \cellcolor[HTML]{E5EAA5}.609 & \cellcolor[HTML]{F0A890}.012 & \cellcolor[HTML]{CFE2A3}.746 & \cellcolor[HTML]{F3C398}.112 \\
     & \texttt{Python} factors & \cellcolor[HTML]{F1B092}.069 & \textemdash{} & \cellcolor[HTML]{DDE7A4}.659 & \cellcolor[HTML]{F3BF97}.098 \\
     & \texttt{MathIR} & \cellcolor[HTML]{F5D09C}.278 & \cellcolor[HTML]{F0A58F}.001 & \cellcolor[HTML]{D8E6A4}.686 & \cellcolor[HTML]{F0A990}.014 \\
     & \texttt{PantryPlan} & \cellcolor[HTML]{E0E8A4}.638 & \cellcolor[HTML]{F3C198}.105 & \cellcolor[HTML]{CFE2A3}.744 & \cellcolor[HTML]{E4E9A4}.351 \\
    \bottomrule

  \end{tabular}
\end{table}

}

\appendix
\onecolumn
\raggedbottom

\section*{Appendix Organization}
\label{app:organization}

The appendix is organized into eight parts, with sections and page numbers listed below.

\vspace{0.6em}

\begingroup
\providecommand{\apxpart}[1]{\par\vspace{2pt}\noindent\begingroup\parfillskip=0pt plus 1fil\relax\textbf{#1}\par\endgroup}
\providecommand{\apxdots}{\nobreak\leaders\hbox{\hspace{0.26em}.\hspace{0.26em}}\hfill}
\providecommand{\apxline}[3]{\par\noindent\begingroup
  \rightskip=2.1em plus 1fil\relax \parfillskip=-\rightskip
  \hangindent=#1\hangafter=1 \leavevmode
  #2\apxdots\nobreak\makebox[2.1em][r]{#3}\par\endgroup}
\providecommand{\apxsec}[3]{\apxline{2.4em}{\makebox[2.2em][l]{#1}#2}{#3}}
\providecommand{\apxsub}[3]{\apxline{4.0em}{\hspace*{1.4em}\makebox[2.6em][l]{#1}#2}{#3}}

\scriptsize
\linespread{0.97}\selectfont
\setlength{\parindent}{0pt}
\setlength{\parskip}{0pt}

\apxpart{Part I. Scope and measurement}
\apxsec{\ref{app:metric-details}}{Metric and Objective Details}{\pageref{app:metric-details}}
\apxsub{\ref{app:metric-sampling}}{Measuring success and diversity}{\pageref{app:metric-sampling}}
\apxsub{\ref{app:reported-endpoints}}{Reported endpoints and eligibility}{\pageref{app:reported-endpoints}}
\apxsub{\ref{app:replay-metric-alignment}}{Why balance solution modes in replay}{\pageref{app:replay-metric-alignment}}

\apxpart{Part II. Benchmark and method}
\apxsec{\ref{app:data}}{Benchmark and Evaluation Protocol}{\pageref{app:data}}
\apxsub{\ref{app:mode-keying}}{Solution-mode definitions}{\pageref{app:mode-keying}}
\apxsub{\ref{app:data-materialization}}{Dataset splits and validation}{\pageref{app:data-materialization}}
\apxsub{\ref{app:python}}{\texttt{Python} execution and solution modes}{\pageref{app:python}}
\apxsub{\ref{app:data-levels}}{Matched level constructions}{\pageref{app:data-levels}}
\apxsub{\ref{app:base-levels-all-scales}}{Performance across benchmark levels}{\pageref{app:base-levels-all-scales}}
\apxsub{\ref{app:repro-run-contract}}{Training and evaluation settings}{\pageref{app:repro-run-contract}}

\apxsec{\ref{app:prompts}}{Exact Domain Prompts}{\pageref{app:prompts}}
\apxsub{\ref{app:prompt-graph}}{\texttt{Graph} coloring}{\pageref{app:prompt-graph}}
\apxsub{\ref{app:prompt-countdown}}{\texttt{Countdown}}{\pageref{app:prompt-countdown}}
\apxsub{\ref{app:prompt-python}}{\texttt{Python} factors}{\pageref{app:prompt-python}}
\apxsub{\ref{app:prompt-mathir}}{\texttt{MathIR} action menu}{\pageref{app:prompt-mathir}}
\apxsub{\ref{app:prompt-pantry}}{\texttt{PantryPlan}}{\pageref{app:prompt-pantry}}

\apxsec{\ref{app:algorithm}}{Replay Algorithm}{\pageref{app:algorithm}}
\apxsub{\ref{app:binary-maxrl}}{Fresh task advantages}{\pageref{app:binary-maxrl}}
\apxsub{\ref{app:algorithm-update}}{Optimizer update}{\pageref{app:algorithm-update}}
\apxsub{\ref{app:algorithm-bank}}{Buffer construction and scheduling}{\pageref{app:algorithm-bank}}
\apxsub{\ref{app:algorithm-control}}{Matched control}{\pageref{app:algorithm-control}}
\apxsub{\ref{app:algorithm-identity}}{Verifier requirements and applicability}{\pageref{app:algorithm-identity}}
\apxsub{\ref{app:algorithm-quality}}{Exemplar quality and length}{\pageref{app:algorithm-quality}}
\apxsub{\ref{app:semantic-maxent}}{Semantic-MaxEnt implementation}{\pageref{app:semantic-maxent}}

\apxpart{Part III. Training results}
\apxsec{\ref{app:supporting}}{Replay Effects Across Scales}{\pageref{app:supporting}}
\apxsub{\ref{app:scale-matrix}}{Terminal comparisons across scales}{\pageref{app:scale-matrix}}
\apxsub{\ref{app:qwen7b-level1}}{Level 1 at Qwen2.5-7B}{\pageref{app:qwen7b-level1}}
\apxsub{\ref{app:level2-interim}}{Results on harder benchmark levels}{\pageref{app:level2-interim}}
\apxsub{\ref{app:factorial-training-curves}}{Diversity throughout training}{\pageref{app:factorial-training-curves}}
\apxsub{\ref{app:ucpo-progress}}{Supporting comparators}{\pageref{app:ucpo-progress}}
\apxsub{\ref{app:diversity-comparators}}{Diversity-preserving comparators}{\pageref{app:diversity-comparators}}
\apxsub{\ref{app:kl-measured}}{Reference KL tradeoffs}{\pageref{app:kl-measured}}
\apxsub{\ref{app:frequency-replay-progress}}{Uniform versus fresh-frequency replay}{\pageref{app:frequency-replay-progress}}

\apxsec{\ref{app:ablations}}{Concentration and Replay Mechanisms}{\pageref{app:ablations}}
\apxsub{\ref{app:fresh-concentration}}{Fresh 64-response evaluation}{\pageref{app:fresh-concentration}}
\apxsub{\ref{app:conditional-concentration}}{Conditional-concentration robustness}{\pageref{app:conditional-concentration}}
\apxsub{\ref{app:pre-rl-regression}}{Diversity before and after RLVR-only training}{\pageref{app:pre-rl-regression}}
\apxsub{\ref{app:degenerate-groups}}{Sparse task gradients and output repetition}{\pageref{app:degenerate-groups}}
\apxsub{\ref{app:active-controls}}{Replay gains with active and tuned controls}{\pageref{app:active-controls}}
\apxsec{\ref{app:code}}{Human-Authored Programming Evaluation}{\pageref{app:code}}
\apxsec{\ref{app:standard-math}}{Standard Mathematical Reasoning Evaluation}{\pageref{app:standard-math}}

\apxpart{Part IV. What diversity represents}
\apxsec{\ref{app:prompt-hint-ablation}}{Prompt-Guidance Ablation}{\pageref{app:prompt-hint-ablation}}
\apxsec{\ref{app:decoding-objection}}{Inference-Time Decoding Controls}{\pageref{app:decoding-objection}}

\apxsec{\ref{app:sampling-budget-ablation}}{Sampling-Budget Ablation}{\pageref{app:sampling-budget-ablation}}
\apxsub{\ref{sec:sampling-budget-design}}{Sampling-budget design}{\pageref{sec:sampling-budget-design}}
\apxsub{\ref{app:sampling-budget-effects}}{Discovery at larger sampling budgets}{\pageref{app:sampling-budget-effects}}
\apxsub{\ref{app:gpt56-all-levels-discovery}}{Discovery at 512 draws}{\pageref{app:gpt56-all-levels-discovery}}

\apxsec{\ref{app:inference-followups}}{Inference-Time Alternatives and Mode Definitions}{\pageref{app:inference-followups}}
\apxsub{\ref{app:cross-model-overlap}}{Complementarity across hosted models}{\pageref{app:cross-model-overlap}}
\apxsub{\ref{app:coarser-keys}}{Sensitivity to solution-mode identity}{\pageref{app:coarser-keys}}

\apxsec{\ref{app:portfolio-withdrawals}}{Portfolio Survival under Withdrawn Options}{\pageref{app:portfolio-withdrawals}}
\apxsub{\ref{app:withdrawal-severity}}{Two withdrawals at once}{\pageref{app:withdrawal-severity}}
\apxsub{\ref{app:withdrawal-decoding}}{Decoding settings as a substitute}{\pageref{app:withdrawal-decoding}}
\apxsub{\ref{app:withdrawal-recovery}}{Recovery after a withdrawal}{\pageref{app:withdrawal-recovery}}

\apxpart{Part V. Hosted deployment analyses}
\apxsec{\ref{app:hosted-concentration}}{Inference-Only Concentration in GPT-5.6 Sol}{\pageref{app:hosted-concentration}}

\apxsec{\ref{app:hosted-comparison}}{Comparison across Hosted Deployments}{\pageref{app:hosted-comparison}}
\apxsub{\ref{app:hosted-mode-diversity}}{Hosted-model concentration}{\pageref{app:hosted-mode-diversity}}
\apxsub{\ref{app:hosted-python-prompt}}{\texttt{Python} prompt-interface sensitivity}{\pageref{app:hosted-python-prompt}}
\apxsub{\ref{app:hosted-reasoning-control}}{Reasoning-control comparison}{\pageref{app:hosted-reasoning-control}}
\apxsub{\ref{app:hosted-temperature}}{Temperature sensitivity across deployments}{\pageref{app:hosted-temperature}}
\apxsub{\ref{app:hosted-provider-outcomes}}{Native provider outcomes}{\pageref{app:hosted-provider-outcomes}}

\apxpart{Part VI. Mechanism and efficiency}
\apxsec{\ref{app:theory}}{Mode Collapse and Verified Replay}{\pageref{app:theory}}
\apxsub{\ref{app:theory-grpo-mean}}{A mode-level model of RLVR}{\pageref{app:theory-grpo-mean}}
\apxsub{\ref{app:theory-gradient-availability}}{Rare modes become difficult to recover}{\pageref{app:theory-gradient-availability}}
\apxsub{\ref{app:theory-replay}}{Replay preserves stored modes}{\pageref{app:theory-replay}}
\apxsub{\ref{app:theory-evidence-map}}{Scope of the analysis}{\pageref{app:theory-evidence-map}}

\apxsec{\ref{app:replay-cost}}{Compute and Storage Overhead}{\pageref{app:replay-cost}}
\apxsub{\ref{app:compute-matching}}{Control design and compute matching}{\pageref{app:compute-matching}}
\apxsub{\ref{app:compute-accounting}}{Replay storage and runtime}{\pageref{app:compute-accounting}}
\apxsub{\ref{app:compute-budget-control}}{Longer-horizon control}{\pageref{app:compute-budget-control}}
\apxsub{\ref{app:compute-budget-groups}}{Wider-group control}{\pageref{app:compute-budget-groups}}
\apxsub{\ref{app:mode-agnostic-replay}}{Single-slot replay}{\pageref{app:mode-agnostic-replay}}

\apxpart{Part VII. Related-method comparison}
\apxsec{\ref{app:related-extended}}{Comparison with Diversity-Preserving RLVR Methods}{\pageref{app:related-extended}}

\apxpart{Part VIII. Data and reproducibility}
\apxsec{\ref{app:reproducibility}}{Data and Reproducibility}{\pageref{app:reproducibility}}
\apxsub{\ref{app:source-integrity}}{Evaluation pairing}{\pageref{app:source-integrity}}
\apxsub{\ref{app:repro-artifacts}}{Released datasets and checkpoints}{\pageref{app:repro-artifacts}}
\apxsub{\ref{app:repro-recovery}}{Analysis resources}{\pageref{app:repro-recovery}}

\par
\endgroup

\clearpage
\phantomsection
\begin{center}\large\textbf{Part I. Scope and measurement}\end{center}
\addcontentsline{toc}{part}{Part I. Scope and measurement}

\section{Metric and Objective Details}
\label{app:metric-details}

Correctness and verified-mode diversity are distinct properties of a policy.
For a prompt $x$ with reference specification $s_x$, the verifier maps a response to either failure or a verified solution mode:
\[
  V(\cdot,s_x):\mathcal Y\rightarrow \mathcal C_x\cup\{\bot\}.
\]
Here $\mathcal C_x$ is the set of valid solution modes and $\bot$ denotes failure.
The learner observes a solution mode only for a response it generates; neither the full set $\mathcal C_x$ nor its size is available during training.

For $Y\sim\pi_\theta(\cdot\mid x)$, define the probability of a correct response and the conditional distribution over the solution mode of a verified correct response:
\[
 P_\theta^+(x)=\Pr[V(Y,s_x)\neq\bot],
 \qquad
 p_\theta^+(c\mid x)=\Pr[V(Y,s_x)=c\mid V(Y,s_x)\neq\bot].
\]
We use the abbreviations $P$ and $q_c$ from the main text when the policy
and prompt are clear. The conditional distribution is defined only when
$P>0$. If $P=0$, \texttt{pass@}$K$, \texttt{distinct@}$K$, and
\texttt{mean@}$K$ are all zero, whereas \pmd{} is undefined.

We use \emph{mode collapse} to describe increasing concentration over correct solution modes, particularly when correctness remains stable or improves.
Transferring probability from a less probable correct mode to a more probable one decreases $\pmd{}=1-\sum_c q_c^2$.
Accordingly, \pmd{} measures concentration among sampled correct modes rather than literal extinction of alternatives; finite evaluations characterize observed policy breadth.

Estimating \pmd{} requires two verified responses per prompt.
We distinguish well-supported estimates with at least 30 eligible prompts from estimates based on fewer prompts; paired training comparisons require at least 20 eligible evaluation prompts in each arm (App.~\ref{app:reported-endpoints}).

\subsection{Measuring success and diversity}
\label{app:metric-sampling}

For a prompt, let $n_m$ be the number of verified responses assigned to solution mode $m$, and let $n_+=\sum_m n_m$ be the total number of verified responses.
When $n_+\ge2$, we estimate pairwise mode diversity as
\begin{equation}
  \widehat{\pmd}
  =
  1-
  \frac{\sum_m \binom{n_m}{2}}
       {\binom{n_+}{2}}
  =
  1-
  \frac{\sum_m n_m(n_m-1)}
       {n_+(n_+-1)}.
  \label{eq:main-pmd-estimator}
\end{equation}
Thus \pmd{} is the probability that two independently sampled correct responses represent different solution modes.
Under independent sampling from a fixed policy, this estimator is unbiased conditional on $n_+\ge2$.

We average prompt-level \pmd{} estimates with equal weight across eligible prompts rather than pooling response pairs across prompts.
This prevents easier prompts, which produce more verified pairs, from receiving greater weight.
Local evaluations generally use multiple response groups per prompt, while hosted evaluations use eight responses per prompt.

For $K$ sampled responses, we additionally report
\[
  \texttt{pass@}K
  =
  \Pr(\text{at least one response is correct}),
\]
\[
\texttt{distinct@}K
  =
  \E[\text{number of distinct verified solution modes}],
\]
and \texttt{mean@}$K$, the mean fraction of correct responses.
Unlike \pmd{}, \texttt{distinct@}$K$ depends on both correctness and how correct probability is distributed across solution modes.

\pmd{} is the Gini--Simpson diversity index applied to the conditional distribution over correct solution modes~\citep{simpson1949measurement}.
For interpretation, we also use the effective number of solution modes
\begin{equation}
  \neff
  =
  \frac{1}{1-\pmd},
\end{equation}
the inverse Simpson index or Hill number of order two~\citep{hill1973diversity,jost2006entropy}.
A value of $\neff=k$ corresponds to the diversity of a uniform distribution over $k$ solution modes.

\subsection{Reported endpoints and eligibility}
\label{app:reported-endpoints}

We report \texttt{pass@}$K$ for correctness, \texttt{distinct@}$K$ for sampled solution breadth, and \pmd{} for diversity conditional on producing a correct response.
Unlike \texttt{distinct@}$K$, \pmd{} depends only on the conditional distribution over correct solution modes and therefore separates solution diversity from overall correctness.
A model can consequently achieve high \texttt{pass@8} while repeatedly producing the same correct solution mode.

A prompt-level \pmd{} estimate requires at least two verified responses.
For evaluation summaries, filled markers indicate at least 30 eligible prompts, hollow markers indicate 5--29, and estimates based on fewer than five prompts are omitted.
Missing \pmd{} values therefore indicate insufficient verified responses rather than zero diversity.

For paired training comparisons, both arms must contain at least 20 eligible prompts.
Unless otherwise stated, each arm is averaged over its own eligible prompt set, so differences can reflect both the conditional solution distribution and which prompts satisfy the eligibility criterion.

We also report
$\texttt{distinct@}K-\texttt{pass@}K$
as the expected number of verified solution modes beyond the first.
This quantity requires no knowledge of the full set of valid solution modes but, unlike \pmd{}, remains coupled to correctness.

\subsection{Why balance solution modes in replay}
\label{app:replay-metric-alignment}

Replay gives each stored solution mode equal weight, regardless of how often that mode was discovered.
A simple probability decomposition clarifies the effect of this choice.
For a buffer $\mathcal B_x$ containing $k$ solution modes, let $p(m)$ denote the probability of mode $m$, let $P_{\mathcal B}$ be their total probability mass, and let $U_k$ be uniform over the buffered modes.
Then
\begin{equation}
  -\frac{1}{k}\sum_{m\in\mathcal B_x}\log p(m)
  =
  -\log P_{\mathcal B}
  +
  \log k
  +
  \mathrm{KL}\!\left(
    U_k
    \,\middle\|\,
    p(m\mid m\in\mathcal B_x)
  \right).
  \label{eq:replay-bank-decomposition}
\end{equation}

The decomposition therefore shows that the replay objective favors both greater total probability on buffered solution modes and a more balanced distribution among them.
At fixed buffered probability mass, balancing the modes increases the expected number of distinct buffered modes observed in repeated samples, with the maximum attained by the uniform allocation.
This provides the motivation for weighting discovered solution modes equally rather than in proportion to how frequently fresh rollouts rediscover them.

The implemented replay loss operates on one stored exemplar per solution mode, so the decomposition gives the mode-level motivation for the training objective.
The fresh-rollout frequency ablation in App.~\ref{app:frequency-replay-progress} then tests this balancing principle directly in neural training.

\clearpage
\phantomsection
\begin{center}\large\textbf{Part II. Benchmark and method}\end{center}
\addcontentsline{toc}{part}{Part II. Benchmark and method}

\section{Benchmark and Evaluation Protocol}
\label{app:data}
\label{app:experimental-design}
\label{app:domain-overview}
\label{app:data-inventory}

\mb{} uses executable verifiers to determine both correctness and solution-mode identity.
Invalid responses receive reward zero, while verified responses receive reward one and are assigned to a solution mode.
The learner observes only modes that it generates and is never given the complete set or number of valid modes.
These quantities are not used to schedule replay or determine when training stops.
The benchmark implementation is available in the
\href{https://github.com/liv-daliberti/modeBench}{public \mb{} repository},
and the frozen datasets are released on
\href{https://huggingface.co/datasets/od2961/ModeBench}{Hugging Face};
the training experiments in this paper use Levels~1--3.

\begin{table}[!htbp]
  \centering
  \caption{\textbf{Executable solution modes in the five \mb{} domains.}
  The verifier determines both correctness and solution-mode identity.
  \emph{Certified modes} gives the mean and range of the Level-1 evaluation catalogue; these catalogues need not enumerate every accepted solution.}
  \label{tab:tasks}
  \scriptsize
  \setlength{\tabcolsep}{5pt}
  \renewcommand{\arraystretch}{1.08}
  \begin{tabularx}{\linewidth}{@{}l
      >{\hsize=1.35\hsize}Y
      >{\hsize=.9\hsize}Y
      >{\hsize=1.0\hsize}Y
      r@{}}
    \toprule
    \textbf{Domain} & \textbf{Generated answer / check} &
    \textbf{Solution mode} & \textbf{Equivalent responses} &
    \textbf{Certified modes} \\
    \midrule
    \texttt{Graph} coloring &
    Complete assignment; check fixed colors and edges &
    Color vector &
    Spacing only &
    \shortstack{6.27\\{\scriptsize 4--12}} \\
    \addlinespace[2pt]
    \texttt{Countdown} &
    Parse operands and execute exactly &
    Normalized AST &
    Commutative order &
    \shortstack{4.52\\{\scriptsize 2--8}} \\
    \addlinespace[2pt]
    \texttt{Python} factors &
    Execute restricted lambda externally &
    Return vector &
    Programs with the same return vector &
    \shortstack{229.44\\{\scriptsize 16--1{,}512}} \\
    \addlinespace[2pt]
    \texttt{MathIR} &
    Execute bounded action program &
    State trajectory &
    Programs with the same states &
    \shortstack{5.00\\{\scriptsize 5}} \\
    \addlinespace[2pt]
    \texttt{PantryPlan} &
    Check ingredient allocation and all constraints &
    Ingredient support &
    Quantity and ordering aliases &
    \shortstack{18.19\\{\scriptsize 8--40}} \\
    \bottomrule
  \end{tabularx}
\end{table}

\subsection{Solution-mode definitions}
\label{app:mode-keying}

The five domains instantiate different notions of a distinct correct solution.
In \texttt{Countdown} and \texttt{MathIR}, solution modes correspond to different executable routes to an answer.
In \texttt{Python}, they correspond to different return vectors on the fixed inputs specified by the prompt.
In \texttt{Graph} and \texttt{PantryPlan}, they correspond to different valid answers.
These definitions identify verified solution behavior rather than latent reasoning processes.

\paragraph{Sensitivity to the mode definition.}
The chosen equivalence relation determines which verified responses count as the same solution mode.
App.~\ref{app:coarser-keys} therefore repeats the analysis under coarser task-relevant definitions.
Most replay effects persist, although \texttt{Python} is more sensitive to the equivalence relation than the other domains.
Training always uses the benchmark's original solution-mode definitions; the coarser analysis is used only as a robustness check.

\paragraph{Executable-answer setting.}
Training prompts request executable answers rather than unrestricted reasoning traces, allowing solution-mode identity to be determined without a learned judge or embedding-similarity threshold.
The experiments therefore measure diversity among executable solutions, not diversity in latent reasoning or unrestricted chain-of-thought.
Evaluating those broader notions would require a separate definition and validation of solution identity.

\subsection{Dataset splits and validation}
\label{app:data-materialization}

Each domain uses disjoint training and evaluation splits (Table~\ref{tab:run-contract}), with additional development splits for \texttt{PantryPlan} and Level~2 (App.~\ref{app:data-levels}).
\texttt{Python} and \texttt{MathIR} each contain 384 training and 128 evaluation problems.
Every \texttt{Python} problem is validated to admit at least two distinct correct return vectors, and every certified \texttt{MathIR} trajectory is checked using the same interpreter applied to model outputs.

\paragraph{\texttt{PantryPlan}.}
\label{app:data-pantry}
\texttt{PantryPlan} contains 384 training, 64 development, and 128 evaluation prompts across four problem families.
Evaluation prompts admit a mean of 18.19 feasible ingredient supports, ranging from 8 to 40.
The verifier uses exact arithmetic to check inventory, serving increments, mass, nutrient bounds, and dietary restrictions, and defines the solution mode by the set of ingredients used.
These checks establish satisfaction of the benchmark constraints rather than taste, preparation feasibility, or food safety.

\subsection{\texttt{Python} execution and solution modes}
\label{app:python}
\label{app:python-language}

The \texttt{Python} verifier accepts restricted expressions of the form \texttt{lambda n: EXPR}, limited to 240 characters and 64 AST nodes.
Expressions may use integer arithmetic, comparisons, Boolean operators, and conditionals, but may not use calls, imports, attributes, comprehensions, containers, assignments, loops, or identifiers other than \texttt{n}.
Level-1 prompts contain four fixed inputs, and every problem admits at least two valid solution modes.

\paragraph{External execution.}
\label{app:python-process}
Candidate expressions execute in an isolated Python process with empty \texttt{\_\_builtins\_\_} and fixed time limits.
Malformed expressions, execution failures, timeouts, non-integer outputs, and failed divisor checks receive zero reward.

\paragraph{Solution-mode identity.}
\label{app:python-identity}
For fixed inputs $n_1,\ldots,n_J$, a verified response produces integers $d_1,\ldots,d_J$ satisfying
\[
    1<d_j<n_j,
    \qquad
    d_j\mid n_j
    \qquad
    \text{for all }j.
\]
Its solution mode is the ordered return vector $(d_1,\ldots,d_J)$, so two programs represent the same mode exactly when they return the same vector on all benchmark inputs.
The exact number of valid return vectors is used only to validate the dataset and is never exposed to the learner.

\subsection{Matched level constructions}
\label{app:data-levels}

Levels~2--5 increase task complexity while preserving the verifier and solution-mode definition within each domain.
Level~2 supports the harder-task training experiments, while Levels~3--5 support scale-calibrated evaluations.

\paragraph{Level 2.}
Level~2 contains 384 training, 128 development, and 128 evaluation problems, with no shared problem identities across splits or with Level~1.
Its training-set distribution of certified solution-mode counts is matched to the Level-1 construction reference.
The evaluation populations are not identically distributed across levels in every domain, so cross-level differences should not be interpreted as the isolated causal effect of difficulty.
Within each level, replay and control conditions use the same prompts, seeds, fresh objective, and training budget.

\begin{table}[!htbp]
  \centering
  \caption{\textbf{Level~2 increases task complexity while preserving solution-mode identity.}
  The verifier and definition of a solution mode are unchanged from Level~1.}
  \label{tab:level2-structure}
  \small
  \begin{tabularx}{.96\linewidth}{@{}lYY@{}}
    \toprule
    \textbf{Domain} & \textbf{Level 1} & \textbf{Level 2} \\
    \midrule
    \texttt{Graph} coloring & Three hidden vertices & Four hidden vertices \\
    \texttt{Countdown} & Three operands & Four operands, with values up to 14 \\
    \texttt{Python} factors & Four executed cases, all at most 96 & Four cases, including a value above 96 \\
    \texttt{MathIR} & One-sided linear families & Variable-on-both-sides families \\
    \texttt{PantryPlan} & Base dietary and composition constraints & More active exclusions and tighter composition constraints \\
    \bottomrule
  \end{tabularx}
\end{table}

\paragraph{Scale-calibrated Levels 3--5.}
Levels~3--5 are constructed so that progressively larger fixed models face approximately the same correctness difficulty as Qwen2.5-0.5B on Level~1.
Level~3 uses Qwen2.5-3B, Level~4 uses Qwen2.5-7B, and Level~5 uses Qwen2.5-14B.
Construction parameters are selected on development problems using \texttt{pass@1} and \texttt{pass@8}, with tolerances of $.04$ and $.08$ from the Level-1 reference; solution diversity is not part of the calibration objective.
Level~5 satisfies these criteria in all five domains, while Level~4 satisfies them in all domains except \texttt{MathIR}.
We therefore include Level-4 \texttt{MathIR} where relevant but do not treat it as difficulty-matched to the Level-1 reference.

\subsection{Performance across benchmark levels}
\label{app:base-levels-all-scales}

We evaluate untrained checkpoints across all five benchmark levels to characterize how correctness and solution diversity vary with task construction before RLVR training.
Each local-model evaluation uses 128 held-out prompts per domain and level, with four independent groups of eight responses per prompt.
For \texttt{pass@8} and \distk{}, we average the four groups within each prompt, while \pmd{} is estimated from the pooled responses.

Higher levels generally reduce correctness, although the relationship is not strictly monotone for every model and domain.
Solution diversity does not vary in lockstep with correctness or model scale, so benchmark difficulty cannot be inferred from \pmd{} alone.
These measurements describe the benchmark populations and are separate from the matched training comparisons, which always compare replay and control under the same evaluation conditions.

\begin{figure}[!htbp]
  \centering
  \includegraphics[width=\linewidth]{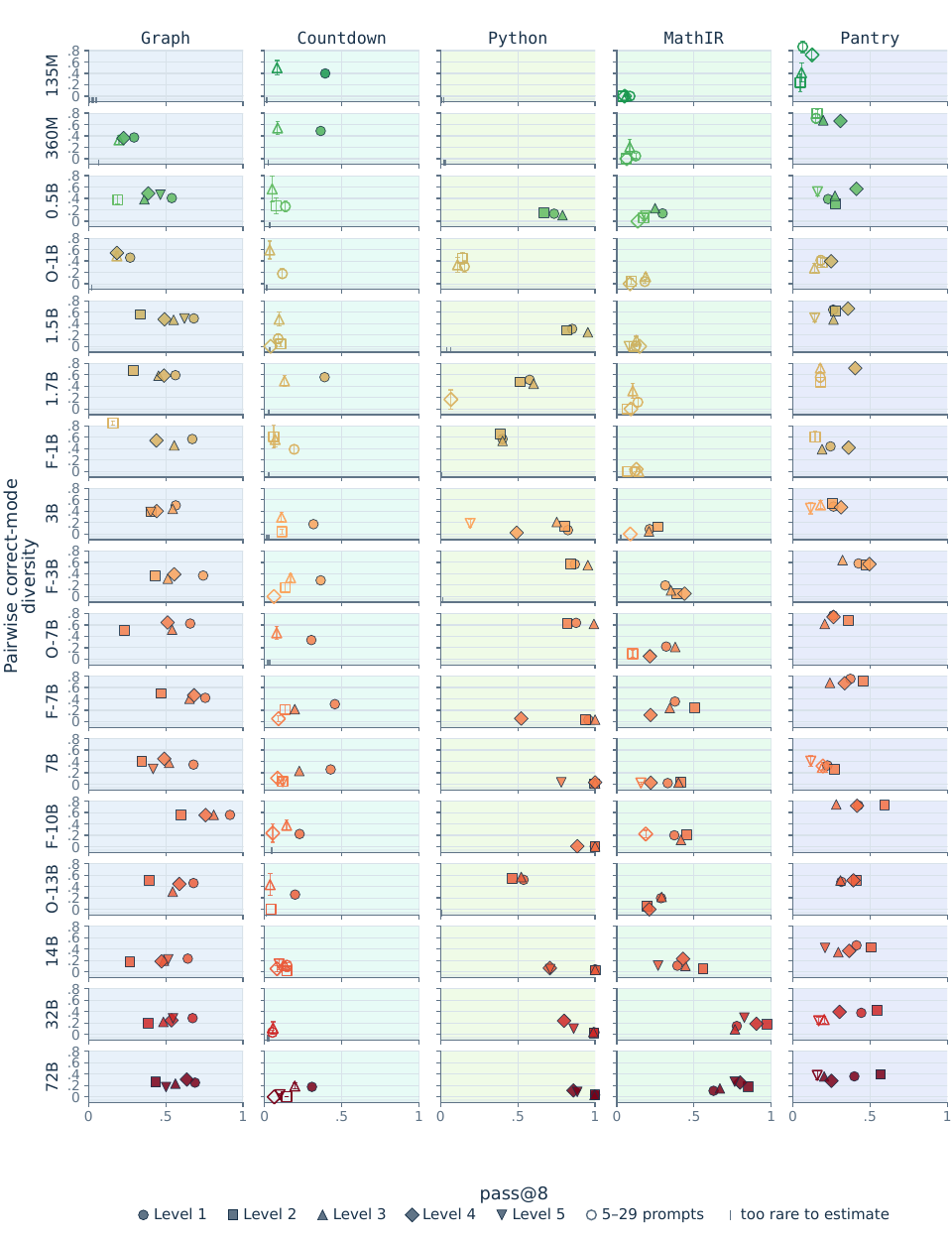}
  \caption{\textbf{Correctness and conditional solution diversity vary separately across benchmark levels.}
  Each point is an untrained model evaluated on one domain and level, with \texttt{pass@8} on the horizontal axis and \pmd{} on the vertical axis.
  Filled markers indicate sufficient verified responses for the primary diversity estimate, while hollow markers indicate limited support.}
  \label{fig:level-construction}
\end{figure}

\subsection{Training and evaluation settings}
\label{app:repro-run-contract}
\label{app:comparison-design}

\begin{table}[!htbp]
  \centering
  \caption{\textbf{Training and evaluation settings for the matched replay comparisons.}
  Replay and control use the same initialization, fresh-rollout budget, optimizer updates, prompts, and evaluation procedure.}
  \label{tab:run-contract}
  \small
  \begin{tabularx}{\linewidth}{@{}P{.22\linewidth}Y@{}}
    \toprule
    \textbf{Component} & \textbf{Setting} \\
    \midrule
    Data &
    384 training and 128 disjoint evaluation prompts per domain and level; evaluation prompts are never used for training or replay. \\
    Fresh rollouts &
    $G=16$, temperature $1$, top-$p=1$, with one prompt group per optimizer update. \\
    Training horizon &
    Eight passes over the training set, totaling 3,072 optimizer updates, with one PPO epoch per batch. \\
    Evaluation &
    Four groups of eight responses per prompt at temperature $1$ and top-$p=1$ at passes $0,.5,\ldots,8$; final comparisons use the last checkpoint. \\
    Replay &
    At most $M=16$ solution modes per prompt, one stored exemplar per mode, deterministic round-robin buffer selection, and $\lambda_{\mathrm{replay}}=.10$. \\
    Training seeds &
    Five seeds per model--domain condition, with paired comparisons using common seeds whenever available. \\
    Learning rate &
    Qwen2.5-0.5B and Falcon3-1B use a constant $2\times10^{-7}$ learning rate; Qwen2.5-3B uses cosine decay from $10^{-7}$ to $10^{-8}$ with 10\% warmup. \\
    Optimizer &
    Adam in \texttt{bf16} with $\beta_1=.9$, $\beta_2=.95$, zero weight decay, gradient-norm clipping at $1.0$, and gradient checkpointing. \\
    Policy update &
    Parameters are synchronized after each update, each rollout is used once, and the symmetric PPO clipping threshold is $\varepsilon=.2$. \\
    Response tokens &
    \texttt{Graph}/\texttt{Countdown}: 192; \texttt{Python}: 192/512/192 at 0.5B/1B/3B; \texttt{MathIR}: 64/128/64; \texttt{PantryPlan}: 8. \\
    \bottomrule
  \end{tabularx}
\end{table}

Replay and control additionally execute the same validation, buffer construction, scheduling, and exemplar-scoring code, with the replay gradient set to zero in the control.
They therefore match optimizer updates and fresh-rollout budgets but not total computation or wall-clock time (App.~\ref{app:compute-matching}).

For each seed, \texttt{pass@8} and \texttt{distinct@8} are averaged over four eight-response groups for each of the 128 evaluation prompts.
Evaluation responses are excluded from training and replay and do not affect checkpoint selection or stopping.

Final-checkpoint comparisons use paired training seeds.
When all five paired seeds are available, we report nominal 95\% Student-$t$ intervals; comparisons with fewer seeds report their sample size and mean.
Diversity comparisons additionally apply the eligibility criteria stated with each analysis.

\FloatBarrier
\section{Exact Domain Prompts}
\label{app:prompts}

Each domain's training prompt contains one system message and one user
message, without few-shot examples. Training and evaluation use the same
template within each method. The examples below reproduce the first prompt
from each Level-1 evaluation split in Qwen's chat format. Falcon3 uses the
same message text with its own \texttt{<|system|>} and \texttt{<|user|>}
markers. Models evaluated before benchmark training use their tokenizer's
native chat template (App.~\ref{app:base-levels-all-scales}).
App.~\ref{app:data-levels} describes the different problems and prompting
conditions at Levels 2--5.

The examples preserve the prompt text. A continuation line indented by two
spaces indicates a display line break; replacing that break and indentation
with one space recovers the original text. In the \texttt{PantryPlan}
example, a line break without this indentation is part of the prompt.

\filbreak
\subsection{\texttt{Graph} coloring}
\label{app:prompt-graph}

\noindent 384 training and 128 evaluation prompts.

\begin{Verbatim}[fontsize=\scriptsize,frame=single,framesep=4pt,rulecolor=\color{black!25},samepage=true]
<|im_start|>system
Return only the final answer inside \boxed{}. Do not explain.<|im_end|>
<|im_start|>user
Color this graph with vertices 1 through 6. The undirected edges are: (1,2), (1,3),
  (3,5), (3,6), (4,6), (5,6). Assign each vertex one color from {1, 2, 3} so that no
  edge has the same color on both endpoints. A partial coloring is 2??1?2, where ?
  means uncolored. Keep every shown digit fixed. There are exactly 3 question marks.
  Return exactly one final answer and no explanation: exactly 3 digits, each one 1,
  2, or 3, for the missing positions from left to right, inside \boxed{}.<|im_end|>
<|im_start|>assistant
\end{Verbatim}

\filbreak
\subsection{\texttt{Countdown}}
\label{app:prompt-countdown}

\noindent 384 training and 128 evaluation prompts.

\begin{Verbatim}[fontsize=\scriptsize,frame=single,framesep=4pt,rulecolor=\color{black!25},samepage=true]
<|im_start|>system
Return only the final answer inside \boxed{}. Do not explain.<|im_end|>
<|im_start|>user
Using the numbers [3, 6, 9], create an arithmetic expression that equals 18. Use
  each given number exactly once. You may use +, -, *, /, and parentheses. Return
  exactly one final answer and no explanation: the expression inside
  \boxed{}.<|im_end|>
<|im_start|>assistant
\end{Verbatim}

\filbreak
\subsection{\texttt{Python} factors}
\label{app:prompt-python}

\noindent 384 training and 128 evaluation prompts.

\begin{Verbatim}[fontsize=\scriptsize,frame=single,framesep=4pt,rulecolor=\color{black!25},samepage=true]
<|im_start|>system
Return only the final answer inside \boxed{}. Do not explain.<|im_end|>
<|im_start|>user
Write one pure Python function with the exact form lambda n: EXPR. An external
  Python tool will call it once for each n in [18, 82, 91, 93]. For every call,
  return an integer d satisfying 1 < d < n and n % d == 0. Different valid return
  vectors are different solution modes. You may use integer literals, n, +, -, *,
  //, %, comparisons, Boolean operators, and conditional expressions; calls,
  imports, attributes, containers, and other names are forbidden. Return exactly the
  one-line lambda inside \boxed{} and no explanation.<|im_end|>
<|im_start|>assistant
\end{Verbatim}

\filbreak
\subsection{\texttt{MathIR} action menu}
\label{app:prompt-mathir}

\noindent 384 training and 128 evaluation prompts.

\begin{Verbatim}[fontsize=\scriptsize,frame=single,framesep=4pt,rulecolor=\color{black!25},samepage=true]
<|im_start|>system
Return only the final answer inside \boxed{}. Do not explain.<|im_end|>
<|im_start|>user
MathIR linear-menu-v1. Solve for x by choosing an executable action sequence. Every
  selected action is applied exactly to BOTH sides of the current equation, followed
  by exact simplification. Bindings: a=2, b=-9, c=-1. Initial equation: x/a + b = c.
  Action menu: A means div(a); B means add(b); C means sub(b); D means sub(c); E
  means sub(mul(a,b)); F means mul(a). Choose 1-4 action IDs. Illegal operations,
  repeated states, and paths that do not finish with x isolated are rejected. Return
  only the IDs separated by semicolons inside the answer box; for example formatting
  only: \boxed{A;C}. Do not return x's numeric value.<|im_end|>
<|im_start|>assistant
\end{Verbatim}

\subsection{\texttt{PantryPlan}}
\label{app:prompt-pantry}

\noindent 384 training and 128 evaluation prompts.

\begin{Verbatim}[fontsize=\scriptsize,frame=single,framesep=4pt,rulecolor=\color{black!25},samepage=false]
<|im_start|>system
Return only six binary digits and no other text. Each digit maps to the
  corresponding Pantry row in printed order. A trusted environment will choose exact
  quantities on precisely the selected rows.<|im_end|>
<|im_start|>user
Create one feasible high fiber snack from this frozen pantry.
Nutrition values are per 100 g. Quantities are exact integer grams.

Pantry:
- almonds: available=100g, step=25g, minimum-if-used=50g; energy=584kcal,
  protein=21.5g, fiber=10.8g, sodium=0.0mg; tags=tree_nut
- banana: available=125g, step=25g, minimum-if-used=50g; energy=97.0kcal,
  protein=0.74g, fiber=4.62g, sodium=0.0mg; tags=none
- carrots: available=150g, step=25g, minimum-if-used=50g; energy=45.0kcal,
  protein=0.941g, fiber=3.1g, sodium=86.6mg; tags=none
- grape_tomatoes: available=100g, step=25g, minimum-if-used=50g; energy=27.0kcal,
  protein=0.83g, fiber=2.1g, sodium=6.0mg; tags=none
- navel_orange: available=150g, step=25g, minimum-if-used=50g; energy=47.0kcal,
  protein=0.91g, fiber=2.0g, sodium=9.0mg; tags=none
- sunflower_seeds: available=125g, step=25g, minimum-if-used=50g; energy=571kcal,
  protein=18.9g, fiber=7.22g, sodium=0.0mg; tags=none

Use 2 to 4 ingredients.
Forbidden tags: none
Targets:
- mass_g: minimum 125, maximum 200
- energy_kcal: minimum 239.4, maximum 432.25
- protein_g: minimum 6.423
- fiber_g: minimum 3.478
- sodium_mg: maximum 37.6

Choose the ingredient support as a six-bit mask in the exact Pantry row order. Bit 1
  includes that row and bit 0 excludes it.<|im_end|>
<|im_start|>assistant
\end{Verbatim}

\section{Replay Algorithm}
\label{app:algorithm}

Verified replay augments learning from fresh rollouts with a likelihood loss on stored verified solutions.
Each prompt $x$ has a buffer $\mathcal B_x$ containing at most $M$ exemplars, with one exemplar per discovered solution mode.
At each update, a round-robin schedule selects one nonempty buffer and gives each stored solution mode equal weight in the replay loss.
The fresh objective determines whether the resulting method is Re:Dr or Re:Max; the replay mechanism is otherwise identical.

\subsection{Fresh task advantages}
\label{app:binary-maxrl}

For a group of $G$ fresh responses with binary rewards $R_i$ and $R=\sum_i R_i$, Dr.GRPO uses
\[
    A_i^{\mathrm{Dr}}
    =
    R_i-\frac{R}{G},
\]
while MaxRL uses
\[
    A_i^{\mathrm{MaxRL}}
    =
    \begin{cases}
        GR_i/R-1, & R>0,\\
        0, & R=0.
    \end{cases}
\]
In a mixed group, MaxRL assigns advantage $-1$ to failures and gives each successful response greater weight when successes are scarce~\citep{tajwar2026maxrl}.
Both objectives assign zero fresh-task advantage when every response in the group receives the same reward.
Stored exemplars do not receive these fresh advantages and instead contribute through the replay loss defined below.

\subsection{Optimizer update}
\label{app:algorithm-update}

\begin{algorithm}[H]
  \caption{\textbf{Verified replay update.} The fresh objective determines Re:Dr or Re:Max.}
  \label{alg:verified-replay-update}
  \small
  \begin{algorithmic}[1]
    \REQUIRE Policy $\pi_\theta$; prompt $x$ and verifier specification $s_x$; group size $G$; fresh objective $o\in\{\mathrm{Dr.GRPO},\mathrm{MaxRL}\}$; replay weight $\lambda_{\mathrm{replay}}$; buffer capacity $M$; buffers $\mathcal B$; round-robin cursor $\sigma$
    \ENSURE Updated policy, buffers, and cursor
    \STATE Draw the on-policy group $\mathcal G=\{y_i\sim\pi_\theta(\cdot\mid x)\}_{i=1}^G$
    \FOR{$i=1,\ldots,G$}
      \STATE Assign solution mode $m_i\gets V(y_i,s_x)$
      \STATE Set $R_i\gets\mathbf 1\{m_i\neq\bot\}$
    \ENDFOR
    \STATE Compute fresh task advantages $A_i^o$ from $\{R_i\}_{i=1}^G$
    \STATE After validating the full group, retain one exemplar for each newly discovered solution mode $m_i\neq\bot$ in $\mathcal B_x$, subject to capacity $M$
    \IF{at least one replay buffer is nonempty}
      \STATE $(z,\sigma')\gets\textsc{NextBuffer}(\mathcal B,\sigma)$ using the fixed round-robin schedule
      \STATE For each stored exemplar $e_m\in\mathcal B_z$, compute $\ell_m=|e_m|^{-1}\log\pi_\theta(e_m\mid z)$
      \STATE $L_{\mathrm{replay}}\gets-|\mathcal B_z|^{-1}\sum_{m\in\mathcal B_z}\ell_m$
    \ELSE
      \STATE $L_{\mathrm{replay}}\gets0$
      \STATE $\sigma'\gets\sigma$
    \ENDIF
    \STATE $\theta'\gets\textsc{Update}\!\left(\theta,L_{\mathrm{PPO}}(A^o)+\lambda_{\mathrm{replay}}\frac{G-1}{G^2}L_{\mathrm{replay}}\right)$
  \end{algorithmic}
\end{algorithm}

\subsection{Buffer construction and scheduling}
\label{app:algorithm-bank}

Each training prompt has its own replay buffer.
A verified response is eligible for storage once the verifier assigns it a valid solution mode.
The full rollout group is validated before the buffer is updated, and the buffer stores at most one exemplar for each discovered solution mode.
When multiple responses first instantiate the same mode, we retain the lexicographically smallest normalized response-token sequence, making exemplar selection independent of rollout order.
Each buffer contains at most $M=16$ solution modes, with no eviction; modes discovered after the buffer is full receive no replay support.

At each update, a deterministic round-robin schedule selects one nonempty buffer.
The policy scores every stored exemplar in that buffer by teacher forcing on its original prompt, and each exemplar contributes its mean response-token log probability excluding prompt and padding tokens.
The replay loss is the negative mean of these exemplar scores, weighted by $\lambda_{\mathrm{replay}}(G-1)/G^2$ (Algorithm~\ref{alg:verified-replay-update}).
The factor $(G-1)/G^2$ matches the contribution of a single successful response in a centered Dr.GRPO group of size $G$.

Replay therefore uses only solution modes discovered during training and does not require the full set of valid solutions or any evaluation outputs.

\subsection{Matched control}
\label{app:algorithm-control}

Replay and control use the same validation, buffer construction, scheduling, exemplar scoring, optimizer updates, and fresh-rollout budget.
The control executes the same replay machinery but sets the replay gradient to zero, so the replay learning signal is the only difference in the objective.

As training progresses, the policies can generate different responses and therefore accumulate different buffers and replay workloads.
App.~\ref{app:compute-matching} quantifies this overhead and compares replay against controls with larger fresh-rollout budgets.

\subsection{Verifier requirements and applicability}
\label{app:algorithm-identity}

Replay requires a deterministic rule for assigning each verified response to a solution mode.
This identity must be computable during training at reasonable cost and must reflect the task's intended notion of distinct solutions.
In \mb{}, the executable verifier determines both correctness and solution-mode identity, so buffer construction requires neither a learned judge nor an embedding-similarity threshold.

Verified replay applies whenever training-time verification can assign correct responses to task-relevant solution modes.
Final-answer-only graders collapse distinct derivations of the same answer into one mode, so preserving derivational alternatives requires a finer identity rule.
Consistent with this distinction, the single-slot replay control retains part, but not all, of Re:Dr's diversity benefit (App.~\ref{app:mode-agnostic-replay}).

\texttt{MathIR} demonstrates a path-based identity by defining modes through executed state trajectories.
More open-ended tasks could analogously use a validated learned judge or another structured equivalence rule.
Executable identities in \mb{} provide a deterministic reference setting for studying this mechanism before extending it to learned mode definitions.

\subsection{Exemplar quality and length}
\label{app:algorithm-quality}

Verification establishes that each retained exemplar satisfies the task constraints and assigns it to a solution mode.
Replay then gives discovered modes equal buffer representation without introducing a separate learned ranking over exemplar quality.

Each solution mode occupies one buffer entry, and its exemplar contributes its mean response-token log probability,
\[
    \ell_m
    =
    |e_m|^{-1}\log\pi_\theta(e_m\mid z).
\]
Thus longer exemplars do not receive additional buffer entries or loss weight proportional to their token count.
Mean-token normalization gives each stored exemplar equal weight in the replay loss, but it does not imply that the trained policy assigns equal probability to the corresponding solution modes.

We apply no additional response-length or quality filters beyond the generation limits and executable verifier, keeping the replay target aligned with the benchmark's verified notion of a valid alternative.

\subsection{Semantic-MaxEnt implementation}
\label{app:semantic-maxent}

Fixed Semantic-MaxEnt adds a frequency-based bonus to fresh verified responses.
For each prompt, it tracks solution modes observed in previous successful rollouts and among the other successful responses in the current group.
Let $n_x(a)$ count previous successful responses with solution mode $a$, and let $m_{-i}(a)$ count successful responses with that mode among the other members of the current group.
Define
\[
    N_x=\sum_a n_x(a),
    \qquad
    S_{-i}=\sum_a m_{-i}(a),
\]
and let $\mathcal K_i$ contain the solution modes observed in either source.
We estimate
\begin{equation}
  \hat p_i(a)
  =
  \frac{n_x(a)+m_{-i}(a)+1}
       {N_x+S_{-i}+|\mathcal K_i|+1},
  \qquad
  \hat p_i(\textsc{unseen})
  =
  \frac{1}
       {N_x+S_{-i}+|\mathcal K_i|+1}.
  \label{eq:open-set-predictor}
\end{equation}
A newly observed solution mode receives the probability assigned to the \textsc{unseen} category.

For a successful response with solution mode $a_i$, the added advantage is
\begin{equation}
  A^{\rm sem}_{i}
  =
  \eta
  \frac{
    \min\{-\log\hat p_i(a_i),\kappa\}
    -
    \E_{\hat p_i}
    [\min\{-\log\hat p_i(a),\kappa\}]
  }{\kappa},
  \qquad
  \eta=.10,\quad \kappa=5.
  \label{eq:legacy-semantic-advantage}
\end{equation}
The same bonus is applied at every token of the response, while unsuccessful responses receive no bonus.
Counts from the current group are added only after scoring, so a response does not affect its own bonus.

This mechanism favors underrepresented verified solution modes only when they appear in fresh rollouts.
Unlike replay, it does not optimize the likelihood of stored exemplars whose solution modes are absent from the current group.
Because $\hat p_i$ is a frequency-based predictor rather than the model's current conditional distribution over correct solution modes, Semantic-MaxEnt serves as a fresh-rollout frequency baseline rather than the same objective as stored-mode replay.

\clearpage
\phantomsection
\begin{center}\large\textbf{Part III. Training results}\end{center}
\addcontentsline{toc}{part}{Part III. Training results}

\section{Replay Effects Across Scales}
\label{app:supporting}

\begin{figure}[!htbp]
  \centering
  \includegraphics[width=\linewidth]{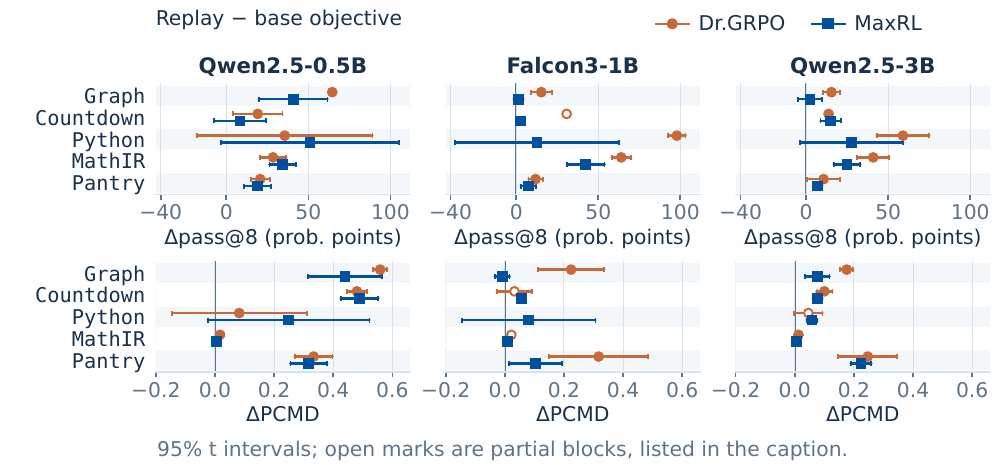}
  \caption{\textbf{Replay gains persist across model scales.}
  Points show terminal Re:Dr minus Dr.GRPO and Re:Max minus MaxRL effects on \texttt{pass@8} and \pmd{} across three model scales and five domains.
  Filled markers use five paired seeds, hollow markers use fewer, and bars show nominal 95\% paired intervals where available.}
  \label{fig:replay-factorial-effects}
\end{figure}

\subsection{Terminal comparisons across scales}
\label{app:scale-matrix}
\label{app:scale-snapshot}
\label{app:maxrl-scale-extensions}

Replay and control conditions share initialization, prompts, optimizer settings, training updates, fresh-rollout budget, verifier, and evaluation requests.
The control executes the same replay machinery with zero replay gradient, although total compute is not matched because replay processes stored exemplars.

Across model scales, replay improves both correctness and conditional solution diversity on average.
Table~\ref{tab:cross-scale-terminal-effects} summarizes the paired cross-domain effects at the terminal checkpoint.

\begin{table}[!b]
  \centering
  \caption{\textbf{Replay improves mean correctness and conditional diversity across model scales.}
  Entries are replay minus the matched non-replay objective after eight training passes.
  \texttt{pass@8} averages all five domains within matched seeds.
  \pmd{} averages a fixed set of domains with defined paired estimates for all contributing seeds.
  Falcon3-1B Re:Dr \texttt{pass@8} uses four common seeds; every other entry uses five.}
  \label{tab:cross-scale-terminal-effects}
  \small
  \setlength{\tabcolsep}{7pt}
  \begin{tabular}{@{}llrr@{}}
    \toprule
    Model & Comparison & $\Delta$\texttt{pass@8} & $\Delta$\pmd{} \\
    \midrule
    Qwen2.5-0.5B & Re:Dr $-$ Dr.GRPO & $+.337$ & $+.294$ \\
    Falcon3-1B & Re:Dr $-$ Dr.GRPO & $+.442$ & $+.271$ \\
    Qwen2.5-3B & Re:Dr $-$ Dr.GRPO & $+.279$ & $+.134$ \\
    Qwen2.5-0.5B & Re:Max $-$ MaxRL & $+.307$ & $+.300$ \\
    Falcon3-1B & Re:Max $-$ MaxRL & $+.133$ & $+.049$ \\
    Qwen2.5-3B & Re:Max $-$ MaxRL & $+.154$ & $+.088$ \\
    \bottomrule
  \end{tabular}
\end{table}

\paragraph{Domain-level effects.}
\label{app:retention-numerical-detail}
Re:Dr increases mean \pmd{} over Dr.GRPO in all 14 measurable model--domain comparisons, while Re:Max exceeds MaxRL in 14 of 15.
The domain-level pattern is therefore broad across models and objectives.
Effect sizes vary, with Falcon3-1B \texttt{Graph} providing the lone near-zero Re:Max diversity comparison and Qwen2.5-3B showing positive Re:Max effects in all five domains.

\begin{figure}[!htbp]
  \centering
  \includegraphics[width=.98\linewidth]{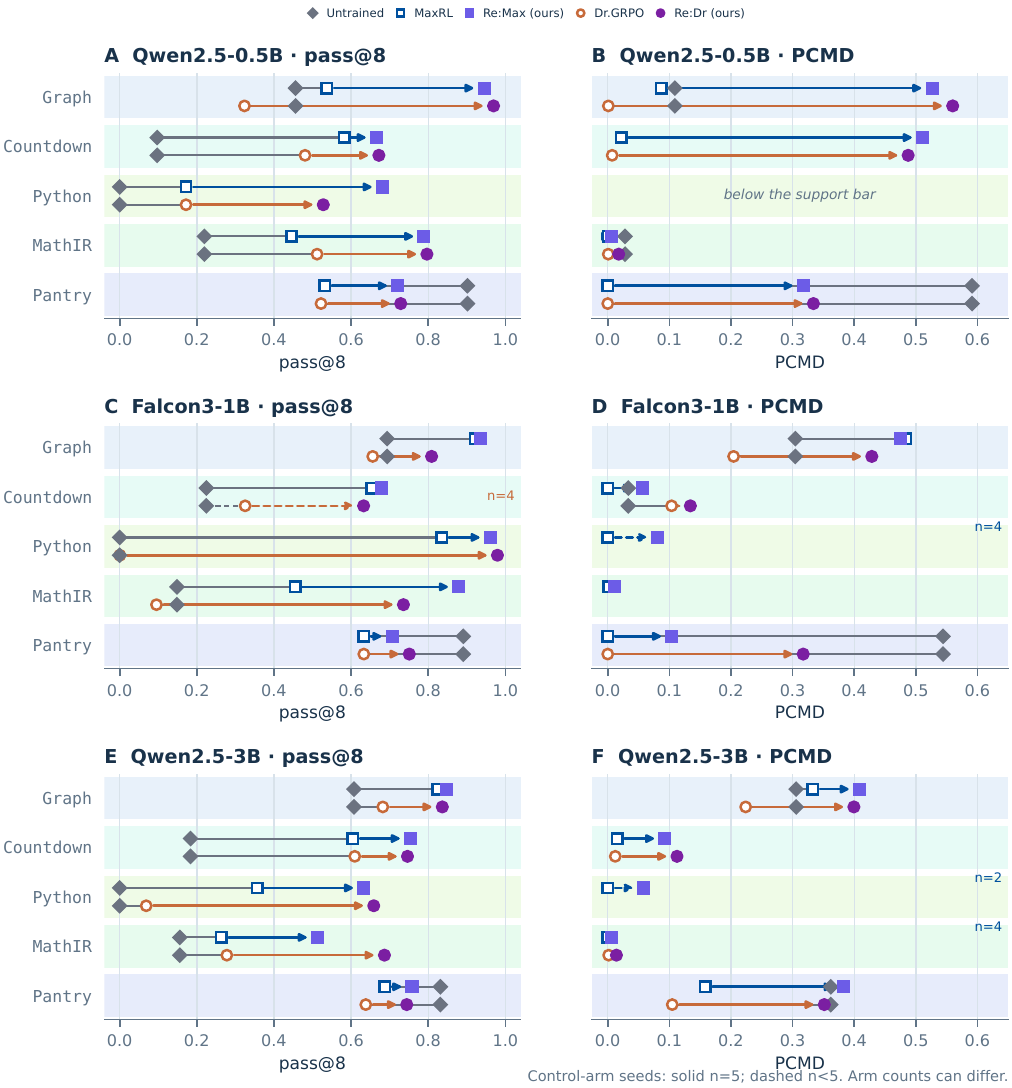}
  \caption{\textbf{Replay gains vary by domain and model scale.}
  Final \texttt{pass@8} and \pmd{} across three model scales for Dr.GRPO, MaxRL, and their replay variants.
  Markers show seed means; estimates with insufficient verified responses are omitted.}
  \label{fig:maxrl-factorial-scale-extensions}
\end{figure}

Each paired \pmd{} contrast uses prompts with sufficient verified responses in both arms, so eligibility is determined within the corresponding model--domain comparison.
The reported effects therefore compare matched conditional solution distributions within each evaluable cell.

\paragraph{Cross-domain summary.}
\label{app:factorial-numerical-detail}
Positive mean replay effects appear under both fresh objectives at all three model scales.
The domain-level results in Fig.~\ref{fig:maxrl-factorial-scale-extensions} show that the magnitude of these gains depends on the task and fresh objective.

\subsection{Level 1 at Qwen2.5-7B}
\label{app:qwen7b-level1}

We evaluate MaxRL and Re:Max on Qwen2.5-7B for \texttt{Countdown}, \texttt{Graph}, \texttt{MathIR}, and \texttt{PantryPlan} using the same training horizon, prompts, verifier, evaluation procedure, replay schedule, and training seed (70) within each matched pair.
Each policy trains for 3,072 updates and is evaluated on 128 held-out prompts.

\begin{table}[!htbp]
  \centering
  \caption{\textbf{The Qwen2.5-7B scale extension separates three large replay effects from one near-null comparison.}
  Values are terminal measurements after 3,072 updates from one matched seed per domain.
  Parentheses give the number of evaluation prompts for which \pmd{} is defined; an undefined value indicates that too few verified responses are available to estimate conditional diversity.}
  \label{tab:qwen7b-level1}
  \small
  \setlength{\tabcolsep}{6pt}
  \begin{tabular}{@{}lrrrr@{}}
    \toprule
    & \multicolumn{2}{c}{\texttt{pass@8}} & \multicolumn{2}{c}{\pmd{}} \\
    \cmidrule(lr){2-3}\cmidrule(l){4-5}
    Domain & MaxRL & Re:Max & MaxRL & Re:Max \\
    \midrule
    \texttt{Countdown}  & .631 & .805 & .001 (81) & .138 (104) \\
    \texttt{Graph}      & .396 & .818 & .055 (51) & .353 (104) \\
    \texttt{MathIR}     & .000 & .744 & -- (0)    & .017 (96) \\
    \texttt{PantryPlan} & .760 & .779 & .384 (97) & .348 (101) \\
    \bottomrule
  \end{tabular}
\end{table}

Across the four domains, mean terminal \texttt{pass@8} increases from $.447$ for MaxRL to $.787$ for Re:Max, but the aggregate masks substantial domain heterogeneity.
The terminal correctness gains are $+.174$ on \texttt{Countdown}, $+.422$ on \texttt{Graph}, and $+.744$ on \texttt{MathIR}, compared with $+.019$ on \texttt{PantryPlan}.
Among the three domains with measurable conditional diversity in both arms, the corresponding \pmd{} changes are $+.137$, $+.298$, and $-.036$.

\paragraph{\texttt{Countdown}.}
Both methods begin at \texttt{pass@8} $.47$.
MaxRL rises to approximately $.62$ by update 960 and then remains near that level, ending at $.631$ with \pmd{} $.001$.
Re:Max continues improving after the control plateaus and ends at \texttt{pass@8} $.805$ with \pmd{} $.138$.
Thus replay changes both the correctness trajectory and the concentration of verified solutions in this domain.

\paragraph{\texttt{Graph}.}
The first evaluation occurs at update 192, where both arms have \texttt{pass@8} $.68$; both reach approximately $.70$ by update 576.
MaxRL then falls to $.39$ by update 960 and remains near $.40$ through the terminal checkpoint.
Re:Max instead remains above $.80$ over the later evaluations and ends at $.818$.
At the terminal checkpoint, \pmd{} is $.055$ for MaxRL and $.353$ for Re:Max, with 51 and 104 eligible prompts, respectively.

\paragraph{\texttt{MathIR}.}
MaxRL improves initially, rising from the untrained model's \texttt{pass@8} of $.227$ to a peak of $.365$ at update 576.
It then falls below the untrained model by update 768, reaching $.217$, and is zero at update 960 and at every subsequent evaluation.
At the final checkpoint, none of its 4{,}096 sampled responses is verified across 128 prompts and four groups of eight responses per prompt, so its \pmd{} is undefined.
Re:Max, trained on the same prompts and seed, shows no such failure: its \texttt{pass@8} rises at every evaluation through update 1344 and thereafter remains within $.01$ of its running maximum, ending at $.744$, more than three times the untrained model.
Its terminal \pmd{} of $.017$ is near the floor, so the replay effect on \texttt{MathIR} is a preservation of correctness rather than breadth.

\paragraph{\texttt{PantryPlan}.}
The untrained model already reaches \texttt{pass@8} $.88$.
Both methods improve slightly at first and then decline, ending at $.760$ for MaxRL and $.779$ for Re:Max.
Conditional diversity remains comparatively high in both arms, with terminal \pmd{} of $.384$ and $.348$, respectively.
The two methods therefore finish in approximately the same correctness--diversity regime on \texttt{PantryPlan}, unlike the large separations in the other three domains.

The Qwen2.5-7B extension therefore shows that replay is not mechanically beneficial in every domain: it changes the outcome when the matched MaxRL policy narrows or loses correctness, while producing little terminal separation in the high-initial-correctness \texttt{PantryPlan} setting.
Because each domain uses one matched seed, these runs establish a scale extension and domain-level contrast rather than a model-size trend.

\subsection{Results on harder benchmark levels}
\label{app:level2-interim}

We evaluate the four primary methods on Level~2 using Qwen2.5-0.5B.
Correctness comparisons use five matched seeds, while \pmd{} is reported only when both arms contain enough prompts with multiple verified responses.

\begin{figure}[!htbp]
  \centering
  \includegraphics[width=\linewidth]{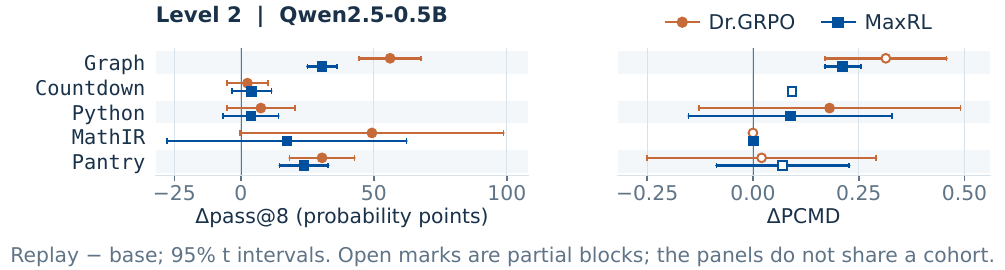}
  \caption{\textbf{Replay improves Level-2 \texttt{Graph} accuracy and conditional solution diversity.}
  Points show terminal Re:Dr minus Dr.GRPO and Re:Max minus MaxRL effects on \texttt{pass@8} and \pmd{}.
  Bars show nominal 95\% paired intervals where at least two eligible seed pairs are available.}
  \label{fig:replay-level2-effects}
\end{figure}

\paragraph{Level-2 \texttt{Graph}.}
\label{app:levels-numerical-detail}
Replay improves both correctness and conditional solution diversity substantially on \texttt{Graph}.
After eight training passes, Dr.GRPO and MaxRL reach \texttt{pass@8} of $.202$ and $.434$, compared with $.763$ and $.739$ for Re:Dr and Re:Max.
Among the eligible paired seeds, \pmd{} increases from $.039$ to $.353$ for Re:Dr versus Dr.GRPO and from $.112$ to $.325$ for Re:Max versus MaxRL.
The corresponding paired \pmd{} gains are $+.314$ and $+.212$.

Beyond \texttt{Graph}, replay also improves \texttt{PantryPlan} correctness under both objectives and yields positive \texttt{Python} diversity point estimates, while \texttt{MathIR} remains concentrated under all four methods.
Level~2 therefore shows that replay gains persist on harder constructions, with effect size varying by domain.
Because the Level-2 task populations and some prompt guidance differ from Level~1, cross-level contrasts summarize benchmark regimes rather than isolate difficulty alone.

\subsection{Diversity throughout training}
\label{app:factorial-training-curves}

We examine \pmd{} throughout training to distinguish terminal differences from transient variation.
Conditional-diversity estimates are reported only when enough held-out prompts contain at least two verified responses, so the contributing prompts and seeds can vary across checkpoints.

\begin{figure}[!htbp]
  \centering
  \includegraphics[width=\linewidth]{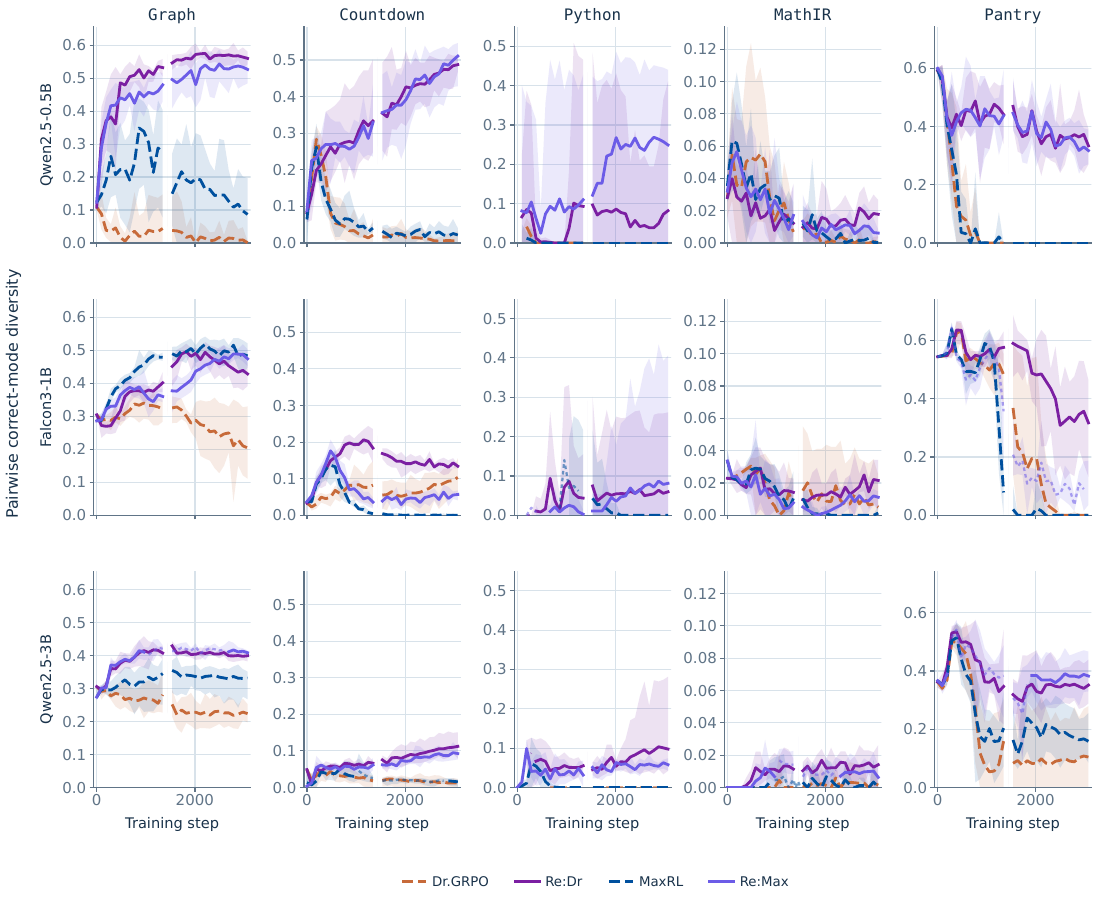}
  \caption{\textbf{Replay preserves conditional solution diversity throughout training.}
  \pmd{} across eight training passes for Dr.GRPO, Re:Dr, MaxRL, and Re:Max at three model scales.
  Lines show means over eligible seeds and bands show seed ranges; gaps indicate insufficient verified responses.}
  \label{fig:factorial-training-modes}
\end{figure}

Across scales, the largest separation appears in \texttt{PantryPlan}, where the no-replay objectives approach zero terminal diversity at the smaller scales while replay retains multiple correct solution modes.
The trajectories show that the terminal replay advantage reflects sustained differences during training rather than only a final-checkpoint fluctuation.

\begin{figure}[!htbp]
  \centering
  \includegraphics[width=\linewidth]{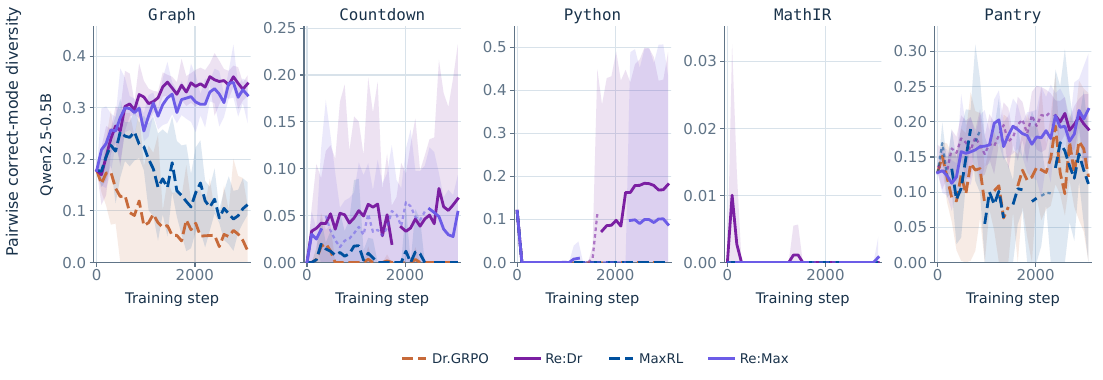}
  \caption{\textbf{Replay also preserves diversity on harder Level-2 tasks.}
  Qwen2.5-0.5B \pmd{} across training for the four primary methods.
  Replay produces the clearest separation on \texttt{Graph}, while \texttt{MathIR} remains concentrated under all four methods.}
  \label{fig:level2-training-pmd}
\end{figure}

\subsection{Supporting comparators}
\label{app:ucpo-progress}

GRPO changes the within-group reward normalization, UCPO redistributes fresh on-policy advantage, and sparse RLEP-Dr replays verified successes without balancing them by solution mode.
We evaluate GRPO across three model scales, while UCPO and RLEP-Dr cover the two smaller scales.
Fixed Semantic-MaxEnt adds a frequency-based bonus to fresh verified responses (App.~\ref{app:semantic-maxent}).

For direct comparator \pmd{} contrasts, a prompt is eligible only when both methods produce at least two verified responses, and each seed must contain at least 30 such prompts.
Some comparator cells therefore have fewer reportable seeds, and no \texttt{Python} comparison meets this threshold at every scale.

\paragraph{RLEP-Dr pool composition.}
\phantomsection
\label{app:rlep-pool}
RLEP-Dr replays verified trajectories from a frozen experience pool without balancing solution modes.
Among eligible prompts, its pools contain only $1.00$--$1.02$ solution modes on average for Qwen2.5-0.5B and $1.00$--$1.40$ for Falcon3-1B.
Two-row replay draws therefore repeat the same solution mode with probability $.89$--$1.00$ across the evaluated domains.
By contrast, the corresponding Re:Dr buffers at Qwen2.5-0.5B average $1.15$--$6.88$ stored solution modes across domains.
RLEP-Dr therefore provides a useful replay-of-success baseline whose experience pools contain little multi-mode coverage.

\paragraph{RLEP-Dr with an online experience pool.}
\phantomsection
\label{app:rlep-online-pool}
To separate experience-pool coverage from the RLEP-Dr update rule, we evaluate an online-pool variant on Qwen2.5-0.5B Level~1.
The update remains unchanged: each batch contains 16 fresh responses and two replay responses, replay uses a single baseline, draws preserve the empirical frequency of stored successes, and a prompt becomes eligible for replay after two verified successes.
The only intervention is the source of replay experience: the pool begins empty and accumulates the learner's own verified rollouts during training.

\begin{table}[!htbp]
  \centering
  \caption{\textbf{An online experience pool recovers part, but not all, of Re:Dr's conditional-diversity gain.}
  Results use five paired seeds per domain against the matched Dr.GRPO control.
  The effect column is online-pool RLEP-Dr minus control, and replay share is the fraction of updates receiving replay from pass 2 onward.}
  \label{tab:rlep-online-pool}
  \small
  \setlength{\tabcolsep}{5pt}
  \begin{tabular}{@{}lrrrrr@{}}
    \toprule
    Domain & Control \pmd{} & Online RLEP \pmd{} & Re:Dr \pmd{} & $\Delta$\pmd{} & Replay share \\
    \midrule
    \texttt{Graph}      & .003 & .344 & .711 & +.340 & .71 \\
    \texttt{Countdown}  & .017 & .054 & .538 & +.037 & .66 \\
    \texttt{Python}     & .000 & .000 & .111 &  .000 & .20 \\
    \texttt{MathIR}     & .000 & .013 & .018 & +.013 & .58 \\
    \texttt{PantryPlan} & .000 & .254 & .484 & +.254 & .93 \\
    \midrule
    Mean                 & .004 & .133 & .372 & +.129 & .62 \\
    \bottomrule
  \end{tabular}
\end{table}

Across the 25 paired domain--seed comparisons, the online pool increases \pmd{} over the control by $+.129$ ($[+.068,+.189]$), with positive effects in 17 of 25 pairs.
Mean \texttt{pass@8} increases from $.444$ for the control to $.583$, a paired effect of $+.139$ ($[+.035,+.243]$).
The online-pool variant therefore recovers approximately $35\%$ of Re:Dr's aggregate $+.368$ conditional-diversity improvement over the same control.

The gain is concentrated in \texttt{Graph} and \texttt{PantryPlan}, with smaller changes in \texttt{Countdown} and \texttt{MathIR} and no measurable conditional-diversity gain in \texttt{Python}.
Replay is active on only $.20$ of \texttt{Python} updates from pass 2 onward because relatively few prompts produce the two verified successes required for eligibility.
More generally, the RLEP replay coefficient becomes small as prompts are solved, ranging from approximately $.01$ to $.48$ in these runs, whereas Re:Dr retains a fixed replay weight.
Broadening the experience pool therefore explains part of the original RLEP-Dr gap, but most of the aggregate difference to Re:Dr remains when RLEP-Dr keeps its frequency-preserving replay rule and adaptive replay weighting.

\paragraph{Comparator settings.}
\phantomsection
\label{app:comparator-settings}
Comparator hyperparameters follow the stated method configurations: UCPO uses $\omega=.2$, RLEP-Dr mixes 16 fresh with 2 replay responses, SetPO uses $\lambda=1$, Semantic-MaxEnt uses $\eta=.10$ and $\kappa=5$, and our replay methods use $\lambda_{\mathrm{replay}}=.10$ with capacity $M=16$.
GAPO has no free coefficient and its reward is affinely mapped to the control reward range.
We do not perform a benchmark-specific hyperparameter search for these comparators, so the comparison evaluates the stated methods under a common experimental budget.

\paragraph{Interpreting zero diversity.}
An undefined \pmd{} comparison indicates insufficient verified responses, whereas $.000$ denotes measured zero conditional diversity.
In \texttt{Python}, several methods achieve high \texttt{pass@8} while producing only one verified solution mode per eligible prompt, illustrating that correctness can remain high even when conditional solution diversity is zero.

\begin{figure}[!htbp]
  \centering
  \includegraphics[width=\linewidth]{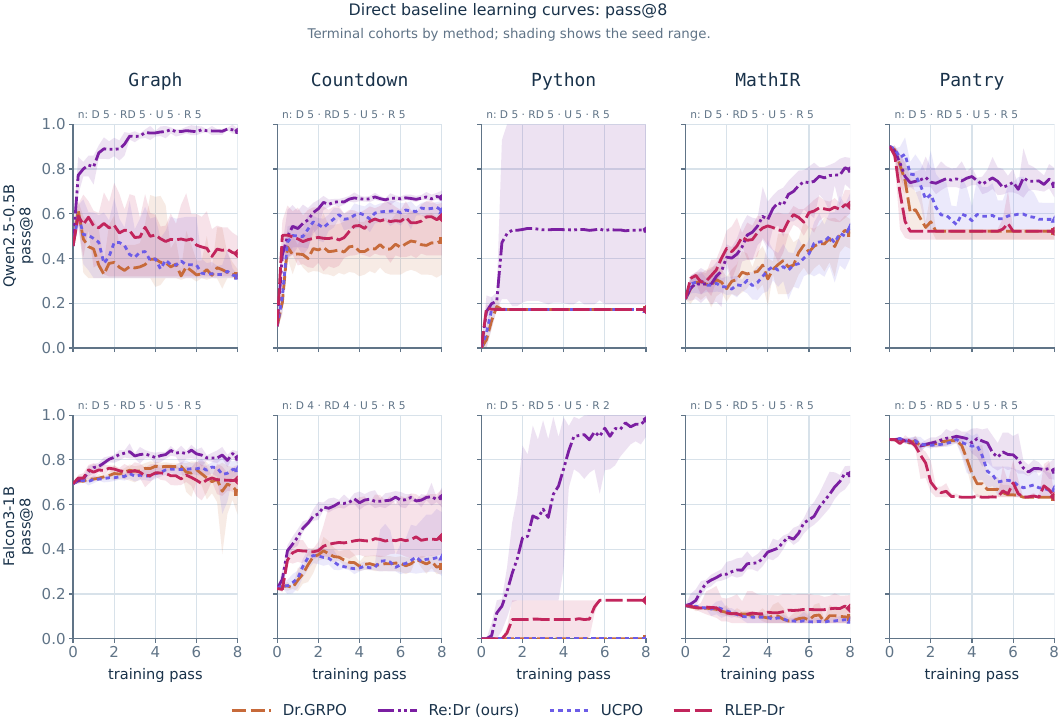}
  \caption{\textbf{Re:Dr has the highest terminal mean \texttt{pass@8} in every displayed comparison.}
  Learning curves compare Dr.GRPO, Re:Dr, UCPO, and sparse RLEP-Dr across two model scales and five domains.
  Lines show seed means and bands show seed ranges.}
  \label{fig:direct-baseline-accuracy-curves}
\end{figure}

\subsection{Diversity-preserving comparators}
\label{app:diversity-comparators}

We compare against GAPO~\citep{anschel2025groupaware} and SetPO~\citep{li2026setpo}, which explicitly encourage diversity within the current rollout group.
Both are evaluated on Level-1 Qwen2.5-0.5B across five domains and five seeds using a matched Dr.GRPO control.
GAPO reweights verified rollouts according to within-group solution-mode frequency, while SetPO rewards each rollout for its marginal contribution to semantic set diversity.

GAPO increases \pmd{} substantially on \texttt{Countdown} ($+.340$) and \texttt{PantryPlan} ($+.653$), while SetPO's clearest diversity gain is on \texttt{PantryPlan} ($+.266$).
Against the shared Dr.GRPO control, the five-domain mean \pmd{} effects are $+.225$ for GAPO, $+.066$ for SetPO, and $+.320$ for Re:Max.
Re:Max has the largest five-domain mean effect, while GAPO is strongest on \texttt{PantryPlan}.
SetPO also substantially improves \texttt{Python} correctness, raising \texttt{pass@8} from $.338$ to $.834$.

These methods encourage diversity using responses available in the current rollout group, whereas replay balances verified solution modes accumulated across training.
This distinction is especially relevant when the number of valid solution modes greatly exceeds the rollout-group size.

\subsection{Reference KL tradeoffs}
\label{app:kl-measured}
\label{app:kl-case}

We compare replay with Dr.GRPO using a reference-policy KL penalty at $\beta\in\{.001,.01,.04,.1,.2,.3\}$ on Qwen2.5-0.5B Level~1.
The reference is the initial policy, and all conditions use the same number of optimizer updates and fresh rollouts.
KL additionally evaluates the reference policy, while replay scores stored exemplars.

Reference KL can preserve substantial conditional diversity, but its correctness tradeoff depends strongly on the domain and coefficient.
In \texttt{PantryPlan} at $\beta=.3$, KL reaches $\pmd{}=.931$ and \texttt{pass@8} $=.825$, compared with $.557$ and $.797$ for Re:Dr.
The initial \texttt{PantryPlan} policy has $\pmd{}=.91$, consistent with the reference providing a strong diversity target.

The tradeoff is different in \texttt{MathIR}.
At $\beta=.3$, KL reaches $\pmd{}=.099$ but reduces \texttt{pass@8} to $.207$, compared with $.039$ and $.816$ for Re:Dr.
In \texttt{Graph}, the tested KL coefficients remain below the replay conditions on the reported correctness--diversity tradeoff.

Reference KL anchors the policy to a fixed initial reference, whereas replay rehearses verified solution modes discovered during training.
The comparison reveals different tradeoffs: KL can preserve high diversity when the reference itself is broad, while replay achieves a stronger correctness--diversity balance in \texttt{Graph} and \texttt{MathIR} under the tested settings.

\subsection{Uniform versus fresh-frequency replay}
\label{app:frequency-replay-progress}

This ablation keeps the replay mechanism fixed and changes only how stored solution modes are weighted.
Uniform replay assigns equal weight to retained solution modes, whereas frequency-weighted replay assigns weight in proportion to how often each mode appears in verifier-positive fresh rollouts.
The buffer construction and update rule are otherwise unchanged, although the resulting policies can accumulate different buffers during training.

We report $B_8=\texttt{distinct@8}-\texttt{pass@8}$, the number of extra verified solution modes per prompt beyond the first success, and $P_8=\texttt{pass@8}$.
Positive contrasts favor uniform weighting.
Paired-seed intervals use bootstrap resampling of the matched training seeds.

At Qwen2.5-0.5B, uniform weighting increases extra verified solution modes by $+.318$ per prompt over the five-domain aggregate ($[+.272,+.358]$), with positive within-seed averages in all five seeds.
The corresponding \texttt{pass@8} effect is $+.010$ ($[-.119,+.139]$), indicating that the diversity gain does not come with a detectable correctness cost.
The diversity effect is driven primarily by \texttt{Graph}, \texttt{Countdown}, and \texttt{PantryPlan}.

The same direction appears in several larger-model comparisons.
Uniform weighting increases $B_8$ on Falcon3-1B \texttt{Graph} by $+.233$ ($[+.182,+.320]$), on Qwen2.5-3B \texttt{Graph} by $+.079$ ($[+.011,+.135]$), and on Qwen2.5-3B \texttt{PantryPlan} by $+.207$ ($[+.038,+.377]$).
The corresponding correctness effects are not consistently distinguishable from zero.

This comparison isolates the role of balancing replay across stored solution modes.
Within the same replay procedure, equal weighting preserves more verified alternatives than weighting modes according to their fresh-rollout frequency.

\begin{figure}[!htbp]
  \centering
  \includegraphics[width=\linewidth]{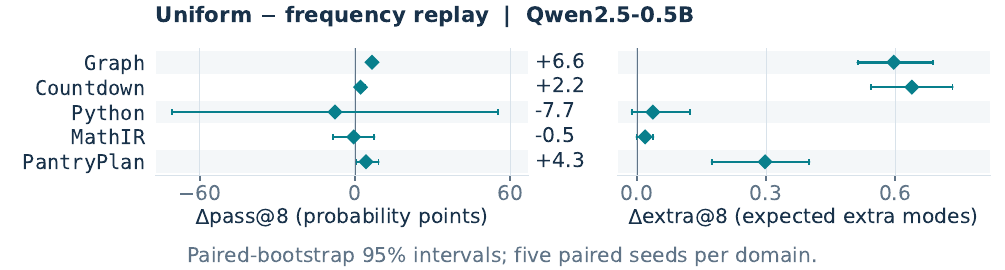}
  \caption{\textbf{Uniform weighting preserves more verified alternatives.}
  Uniform minus frequency-weighted replay at Qwen2.5-0.5B, Level~1.
  Extra modes are $B_8=\texttt{distinct@8}-\texttt{pass@8}$.
  Bars show nominal 95\% paired-seed bootstrap intervals.}
  \label{fig:replay-key-weighting}
\end{figure}

\section{Concentration and Replay Mechanisms}
\label{app:ablations}

\subsection{Fresh 64-response evaluation}
\label{app:fresh-concentration}

We reevaluate initial and terminal checkpoints on the 128 held-out Level-1 \texttt{Graph} and \texttt{PantryPlan} prompts using 64 newly generated responses per prompt.
The evaluation covers Qwen2.5-0.5B, Falcon3-1B, and Qwen2.5-3B, with five training seeds for Dr.GRPO, Re:Dr, MaxRL, and Re:Max.
For each contrast, \pmd{} is computed on prompts with at least two verified responses in both conditions, and the five paired seed effects are averaged equally.

Dr.GRPO reduces \pmd{} in all six model--domain comparisons and all 30 seed effects.
Replay increases \pmd{} relative to its matched trained control under both fresh objectives in all twelve model--domain comparisons and all 60 seed effects.
Thus the replay diversity advantage persists under a substantially larger fresh evaluation sample and jointly eligible prompt populations.

\subsection{Conditional-concentration robustness}
\label{app:conditional-concentration}

As an additional sensitivity analysis, we recompute terminal replay--control differences in conditional concentration using two opposite assignments of disjoint sampling streams.
Negative $\Delta\widehat C$ means that replay is less concentrated among correct outputs than its matched control.
Eligibility is recomputed separately under each stream assignment.

\begin{table}[!b]
  \centering
  \caption{\textbf{Replay generally lowers conditional concentration under disjoint sampling splits.}
  Counts summarize Level-1 replay--control comparisons across three model scales, five domains, and two objectives.
  One Falcon3-1B \texttt{Python} comparison is undefined because too few verified responses are available.}
  \label{tab:conditional-concentration-robustness}
  \small
  \setlength{\tabcolsep}{7pt}
  \begin{tabular}{@{}lrrr@{}}
    \toprule
    Estimate & Replay lower & No change & Replay higher \\
    \midrule
    Pooled estimate & 27/29 & 1/29 & 1/29 \\
    Disjoint split 0 & 25/29 & 3/29 & 1/29 \\
    Disjoint split 1 & 24/29 & 3/29 & 2/29 \\
    \bottomrule
  \end{tabular}
\end{table}

The same qualitative pattern appears under both disjoint stream assignments.
At Level~2, seven of ten pooled replay--control comparisons favor replay, one is unchanged, and two \texttt{PantryPlan} comparisons favor the control; several cells have small eligible populations.
Together, these disjoint-sampling analyses reinforce the main \pmd{} result that replay generally reduces concentration among correct outputs.

\subsection{Diversity before and after RLVR-only training}
\label{app:pre-rl-regression}

We compare Dr.GRPO and GRPO before training and after eight passes across three model scales and five domains.
We decompose \texttt{distinct@8} into \texttt{pass@8} and the number of additional verified modes beyond the first success.

Across both objectives, the average number of additional verified modes decreases at every tested model scale.
The decline is approximately $98$--$99\%$ at Qwen2.5-0.5B and $42$--$72\%$ at the larger scales.
This loss of breadth can occur even when success improves: \texttt{pass@8} does not consistently decrease, and aggregate \texttt{distinct@8} can increase when gains in correctness outweigh losses in additional modes.

Where conditional diversity is measurable at both endpoints, \pmd{} shows the same general pattern.
Its domain-average decline is approximately $99\%$ at Qwen2.5-0.5B and $51$--$65\%$ at the larger scales, although individual domain--model combinations need not decrease.
Thus the reduction in observed solution breadth is not explained solely by changes in the probability of producing a correct response.

\begin{figure}[!htbp]
  \centering
  \begin{minipage}[t]{.49\linewidth}
    \centering
    \textbf{(A)}\\[1pt]
    \includegraphics[width=\linewidth]{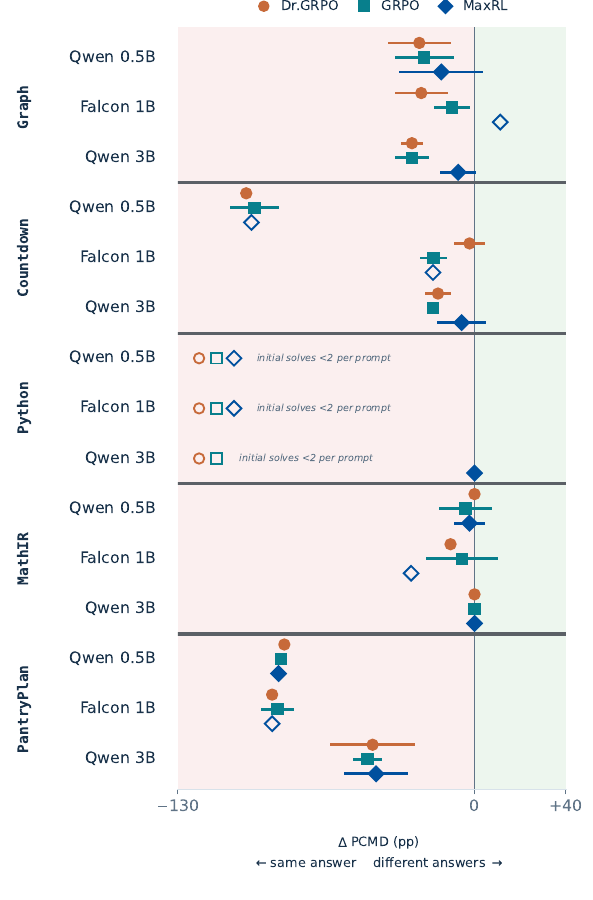}
  \end{minipage}\hfill
  \begin{minipage}[t]{.49\linewidth}
    \centering
    \textbf{(B)}\\[1pt]
    \includegraphics[width=\linewidth]{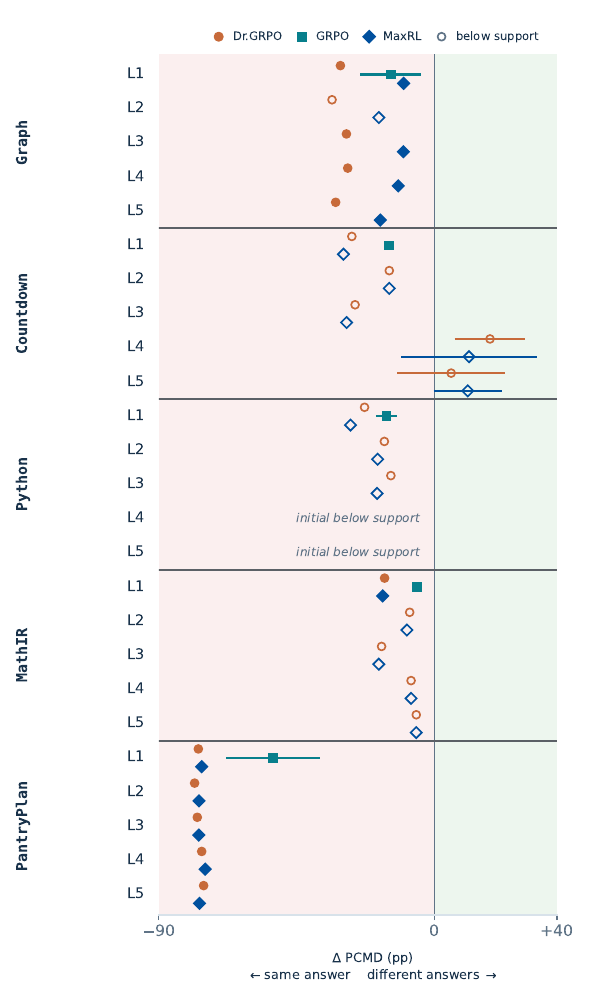}
  \end{minipage}
  \caption{\textbf{\texttt{Graph} and \texttt{PantryPlan} lose conditional solution diversity across scales and evaluation levels.}
  \textbf{(A)} Initial-to-final changes in \pmd{} at Level~1 across three model scales and no-replay objectives.
  \textbf{(B)} Level-1-trained policies evaluated on Levels~1--5, with higher levels measuring transfer.
  Filled marks show well-supported comparisons, open marks indicate limited support, and bars show nominal 95\% intervals where available.}
  \label{fig:concentration-all-scales}
  \label{fig:concentration-levels}
\end{figure}

\begin{figure}[t]
  \centering
  \includegraphics[width=\linewidth]{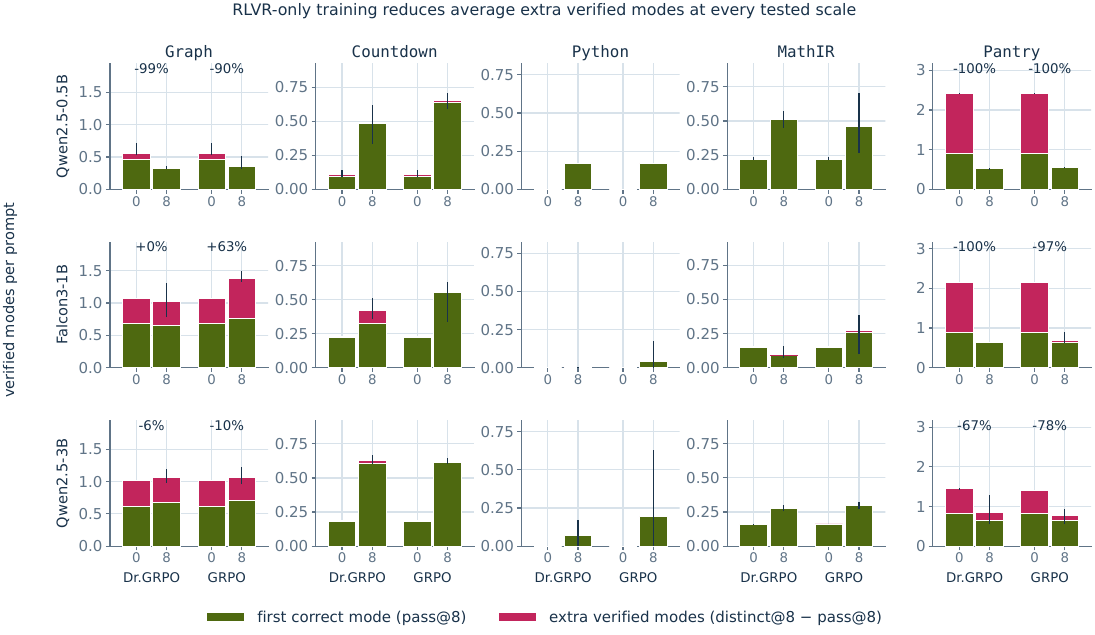}
  \caption{\textbf{RLVR-only training reduces additional verified modes across scales.}
  Bars decompose \texttt{distinct@8} into \texttt{pass@8} and additional verified modes before training and after eight passes for Dr.GRPO and GRPO across three model scales and five domains.
  Whiskers show ranges across five training seeds.}
  \label{fig:baseline-precheck}
\end{figure}

Initial evaluations are sampled separately for each training method, so small differences between their pass-0 estimates can reflect sampling variability.
These evaluations measure observed solution breadth; claims throughout concern sampled concentration rather than literal zero probability for unobserved modes.

\subsection{Sparse task gradients and output repetition}
\label{app:degenerate-groups}

For Dr.GRPO, a rollout group with identical rewards has zero fresh task-gradient contribution because all within-group advantages vanish.
We therefore measure the fraction of training groups containing both correct and incorrect responses across 3,072 Qwen2.5-0.5B updates.

\begin{table}[t]
  \centering
  \caption{\textbf{Fresh reward information is sparse in several control runs.}
  Entries are the fraction of Dr.GRPO rollout groups containing both correct and incorrect responses, averaged over five seeds.}
  \label{tab:mixed-groups}
  \small
  \begin{tabular}{@{}lr@{}}
    \toprule
    Domain & Mixed groups \\
    \midrule
    \texttt{Graph} & .144 \\
    \texttt{Countdown} & .140 \\
    \texttt{Python} & .015 \\
    \texttt{MathIR} & .364 \\
    \texttt{PantryPlan} & .123 \\
    \bottomrule
  \end{tabular}
\end{table}

The effect is most extreme in \texttt{Python}, where only $1.5\%$ of control groups are mixed and therefore $98.5\%$ have zero fresh task advantage.
Replay can still provide a likelihood gradient on these updates and can alter subsequent sampling so that future groups become informative.

The \texttt{Python} controls also exhibit strong output repetition.
At initialization, eight-response groups contain an average of 7.99 distinct completion strings per prompt, but all five control seeds later produce a single repeated completion for every prompt and remain at that level through the terminal evaluation.
At the end of training, eight-response groups contain 1.00 distinct completion per prompt for Dr.GRPO and 1.97 for Re:Dr.
The sampled-output collapse is therefore pronounced under the tested Dr.GRPO setting.
The decoding and group-size controls below test whether straightforward inference-time or sampling-budget changes recover the lost breadth.

\subsection{Replay gains with active and tuned controls}
\label{app:active-controls}

Control and replay runs share the learning-rate schedule, initialization, optimizer, prompt stream, update count, and fresh-rollout budget.
Per-response correctness increases from the initial to terminal checkpoint in all 15 control comparisons, confirming that the controls learn under these schedules.
Fresh reward information is nevertheless sparse in some settings, most notably Falcon3-1B \texttt{Python}, where only $.9\%$ of rollout groups contain both correct and incorrect responses.

Table~\ref{tab:active-controls} repeats the replay comparison on increasingly restrictive subsets of active controls.
Replay remains beneficial among the nine comparisons in which control \texttt{pass@8} increases and among the six comparisons with nonzero initial \texttt{pass@8} and at least $10\%$ mixed rollout groups.
All six comparisons in the latter subset have positive replay effects on both \texttt{pass@8} and \pmd{}.
Thus the replay advantage persists when the comparison is restricted to controls with clear evidence of task learning and nontrivial fresh reward signal.

\begin{table}[t]
  \centering
  \caption{\textbf{Replay gains persist when controls are learning.}
  Mean Re:Dr minus Dr.GRPO effects across increasingly restrictive subsets of the 15 model--domain comparisons.}
  \label{tab:active-controls}
  \small
  \setlength{\tabcolsep}{7pt}
  \begin{tabular}{@{}lrrr@{}}
    \toprule
    Control subset & $n$ & $\Delta$\texttt{pass@8} & $\Delta$\pmd{} \\
    \midrule
    All comparisons & 15 & +.352 & +.189 \\
    Increasing control \texttt{pass@8} & 9 & +.379 & +.118 \\
    Nonzero initial \texttt{pass@8}, $\geq10\%$ mixed groups & 6 & +.247 & +.136 \\
    \bottomrule
  \end{tabular}
\end{table}

\paragraph{Per-domain tuned controls.}
The active-control analysis above tests whether replay gains persist when the no-replay policy is demonstrably learning.
We compare replay against controls whose optimization settings are selected separately for each domain on correctness.
The tuning grid contains eight configurations spanning learning rate, rollout batching, and entropy regularization, evaluated on two selection seeds per domain.
The selected configuration maximizes terminal \texttt{pass@8}; ties within $.005$ are broken by \texttt{mean@8}.
Conditional solution diversity does not enter the selection criterion.
An independent replicate of the baseline configuration provides a second measurement of the same setting, yielding 90 measured domain--seed--configuration cells in total.
Configurations requiring four prompts per update are incompatible with the \texttt{PantryPlan} sampler, so the six such cells are excluded for that domain; the remaining 84 cells reach the terminal checkpoint.

\begin{table}[!htbp]
  \centering
  \caption{\textbf{Correctness-based tuning strengthens the no-replay control in four of five domains.}
  Settings are selected separately within each domain using terminal \texttt{pass@8}, with ties within $.005$ broken by \texttt{mean@8}; \pmd{} is not used for selection.}
  \label{tab:tuned-control-selection}
  \small
  \setlength{\tabcolsep}{7pt}
  \begin{tabular}{@{}lrrr@{}}
    \toprule
    Domain & Selected setting & Selected \texttt{pass@8} & Baseline \texttt{pass@8} \\
    \midrule
    \texttt{Graph}      & LR $5\times10^{-8}$ & .429 & .325 \\
    \texttt{Countdown}  & entropy $10^{-3}$    & .635 & .493 \\
    \texttt{Python}     & baseline              & .586 & .586 \\
    \texttt{MathIR}     & entropy $10^{-3}$    & .490 & .460 \\
    \texttt{PantryPlan} & LR $5\times10^{-8}$ & .600 & .531 \\
    \bottomrule
  \end{tabular}
\end{table}

Tuning increases the selected control's terminal correctness on \texttt{Graph}, \texttt{Countdown}, \texttt{MathIR}, and \texttt{PantryPlan}.
It can also preserve more conditional diversity in the control itself: the lower-learning-rate setting reaches \pmd{} $.062$ on \texttt{Graph} and $.077$ on \texttt{PantryPlan}, reducing the amount of headroom available to replay.

\texttt{Python} exhibits unusually high seed variability in the baseline estimate.
The two baseline seeds used in the comparison terminate at \texttt{pass@8} $.172$ and $1.000$, producing the reported mean of $.586$.
By contrast, the 16 Python cells in the tuning grid all terminate at $.172$--$.174$, and both cells in the independent baseline replicate terminate at $.172$.
Thus the selected two-seed baseline mean is dominated by one unusually high trajectory rather than by a systematic improvement from tuning.

\begin{table}[!htbp]
  \centering
  \caption{\textbf{Replay remains substantially above the selected controls in paired tuned-control comparisons.}
  Effects are Re:Dr minus the correctness-selected no-replay control on matched seeds; brackets give nominal 95\% paired intervals.}
  \label{tab:tuned-control-paired}
  \small
  \setlength{\tabcolsep}{6pt}
  \begin{tabular}{@{}lrrrr@{}}
    \toprule
    Domain & Control \pmd{} & Re:Dr \pmd{} & $\Delta$\pmd{} & $\Delta$\texttt{pass@8} \\
    \midrule
    \texttt{Graph} & .054 & .764 & $+.709\;[+.677,+.742]$ & $+.515\;[+.439,+.591]$ \\
    \texttt{PantryPlan} & .091 & .563 & $+.473\;[+.371,+.575]$ & $+.157\;[+.104,+.210]$ \\
    \bottomrule
  \end{tabular}
\end{table}

On both reported domains, every paired seed favors Re:Dr on conditional diversity.
The tuned controls are stronger than the canonical control on both correctness and, in these cases, their own solution breadth, yet the replay-trained policies remain substantially less concentrated and more successful at the terminal checkpoint.
This extends the active-control result from filtering for learning controls to directly optimizing the no-replay configuration for terminal correctness.

\section{Human-Authored Programming Evaluation}
\label{app:code}

We evaluate verified replay on human-authored constructive programming problems whose correctness checkers were not designed for this study.
We use 21 Codeforces problems from CodeContests+ and CodeContests-O with released output checkers and frozen hidden-input suites.
Eight problems are used for training and thirteen for development.
Correctness is determined by the released checker, while solution-mode identity is defined by the canonical witness signature produced across the frozen suite.
Because the suites are finite, these modes characterize behavior on the frozen checker suite rather than semantic program equivalence.

We train Qwen2.5-Coder-7B-Instruct in a $2\times2$ factorial crossing Dr.GRPO versus MaxRL with replay off versus on, using five paired seeds per arm and 128 updates.
Replay uses the same mechanism as in the main experiments: one verified exemplar per discovered solution mode, uniform rehearsal, and a capacity of 16 modes per prompt.
The controls execute the same buffer construction, scheduling, and exemplar scoring with the replay gradient set to zero.

Replay produces a small but consistent training-side increase in the fraction of fresh rollouts that verify.
Across matched seeds, the verified fraction of fresh rollouts is $+.014$ higher for Re:Dr than for Dr.GRPO, with a 95\% interval of $[+.005,+.023]$, and $+.014$ higher for Re:Max than for MaxRL, with interval $[-.002,+.031]$.
Pooling the ten matched replay--control seed pairs gives the same $+.014$ effect, with interval $[+.007,+.021]$ and a positive effect in nine of ten pairs.

The replay mechanism transfers unchanged to these human-authored constructive programming tasks and produces a small, consistent training-side correctness improvement.
This extends the method beyond the synthetic \mb{} task generators while retaining executable verification and solution-mode identity.

\section{Standard Mathematical Reasoning Evaluation}
\label{app:standard-math}

We evaluate whether the sampled-coverage effects observed on \mb{} extend to conventional mathematical reasoning benchmarks whose verifier checks only the final answer.
The evaluation covers MATH-500, AMC23, OlympiadBench, Minerva, AIME24, and AIME25, with 32 sampled responses per problem.
Unlike \mb{}, these benchmarks do not distinguish multiple correct derivations of the same final answer as different solution modes.
They therefore provide a complementary test of how replay changes the sampled correctness distribution outside the executable-mode setting.

\paragraph{Re:Dr restores sampled coverage lost under Dr.GRPO.}
Across all six benchmarks, Re:Dr has higher \texttt{pass@32} than the matched Dr.GRPO control.

\begin{table}[!htbp]
  \centering
  \caption{\textbf{Re:Dr restores sampled mathematical coverage lost under Dr.GRPO.}
  \texttt{pass@32} is the probability that at least one of 32 sampled responses is correct, while \texttt{avg@32} is mean per-response correctness across the same samples.}
  \label{tab:standard-math-redr}
  \small
  \setlength{\tabcolsep}{5pt}
  \begin{tabular}{@{}lrrrrr@{}}
    \toprule
    Benchmark & Base \texttt{pass@32} & Dr.GRPO & Re:Dr &
    Dr.GRPO \texttt{avg@32} & Re:Dr \texttt{avg@32} \\
    \midrule
    MATH-500      & .894 & .889 & .909 & .470 & .537 \\
    AMC23         & .875 & .767 & .850 & .261 & .339 \\
    OlympiadBench & .628 & .546 & .620 & .196 & .234 \\
    Minerva       & .474 & .491 & .540 & .135 & .167 \\
    AIME24        & .367 & .333 & .383 & .057 & .081 \\
    AIME25        & .200 & .211 & .217 & .021 & .025 \\
    \bottomrule
  \end{tabular}
\end{table}

The largest losses under Dr.GRPO occur on AMC23 and OlympiadBench.
On AMC23, \texttt{pass@32} falls from $.875$ for the base model to $.767$ after Dr.GRPO, while Re:Dr reaches $.850$.
On OlympiadBench, the corresponding values are $.628$, $.546$, and $.620$.
Thus Re:Dr recovers most of the sampled-coverage reduction on both benchmarks.

The change is not obtained by sacrificing per-response correctness.
Re:Dr also increases \texttt{avg@32} on every benchmark, including from $.470$ to $.537$ on MATH-500, $.261$ to $.339$ on AMC23, and $.196$ to $.234$ on OlympiadBench.
Greedy correctness remains essentially unchanged, so the principal difference appears in the sampled distribution rather than in the policy's highest-probability answer.
Majority-vote accuracy changes substantially less than \texttt{pass@32}, consistent with replay increasing access to correct alternatives without necessarily changing which final answer receives the most probability mass.

\paragraph{MaxRL largely avoids the sampled-coverage squeeze without replay.}
The corresponding MaxRL comparison gives a different pattern.
On four benchmarks with matched MaxRL and Re:Max evaluations, MaxRL already retains \texttt{pass@32} close to the base model and close to Re:Dr's recovered coverage.

\begin{table}[!htbp]
  \centering
  \caption{\textbf{MaxRL largely preserves sampled mathematical coverage before replay is added.}
  Results use one matched seed per MaxRL--Re:Max arm and the same 32-sample evaluation.}
  \label{tab:standard-math-remax}
  \small
  \setlength{\tabcolsep}{6pt}
  \begin{tabular}{@{}lrrrr@{}}
    \toprule
    Benchmark & Base & Dr.GRPO & MaxRL & Re:Max \\
    \midrule
    MATH-500 & .894 & .889 & .902 & .904 \\
    AMC23    & .875 & .767 & .850 & .850 \\
    AIME24   & .367 & .333 & .333 & .300 \\
    AIME25   & .200 & .211 & .167 & .200 \\
    \bottomrule
  \end{tabular}
\end{table}

On MATH-500, MaxRL reaches \texttt{pass@32} $.902$ and per-response accuracy $.498$, compared with $.889$ and $.470$ for Dr.GRPO.
On AMC23, MaxRL reaches $.850$ \texttt{pass@32}, recovering most of the Dr.GRPO reduction without replay.
This pattern is consistent with MaxRL's stronger weighting of successes when they are rare.

Adding replay to MaxRL produces little further change in this final-answer setting.
Re:Max remains within $.033$ \texttt{pass@32} of MaxRL on each of the four paired benchmarks, with differences in both directions.
The distinction from \mb{} is structural: standard mathematical answer keys assign all correct derivations of a problem to the same verified outcome.
The replay buffer therefore contains only one verified solution mode per problem and cannot balance rehearsal across distinct correct derivations.
On \mb{}, where the verifier distinguishes different correct solutions, Re:Max can instead retain and rehearse multiple verified modes and produces substantial gains over MaxRL.

Together, the two objective pairs separate two effects.
MaxRL can itself preserve sampled correctness by emphasizing rare successes, while verified replay provides an additional mechanism when the verifier distinguishes multiple correct solution modes.
For Dr.GRPO, which exhibits a pronounced reduction in sampled mathematical coverage, replay restores much of that coverage even under final-answer-only verification.

\clearpage
\phantomsection
\begin{center}\large\textbf{Part IV. What diversity represents}\end{center}
\addcontentsline{toc}{part}{Part IV. What diversity represents}

\section{Prompt-Guidance Ablation}
\label{app:prompt-hint-ablation}
\label{sec:prompt-hint-ablation}

We compare benchmark and neutral prompts on the same 32 \texttt{Python}, \texttt{MathIR}, and \texttt{PantryPlan} problems at Levels~2--3, with eight responses per condition.
The neutral prompts remove task-specific solution guidance while leaving the problem, answer format, verifier, and remaining interface unchanged.
Because the \texttt{Python} intervention also removes an executable example and associated divisor guidance, it should be interpreted as a combined prompt-interface change rather than an isolated hint removal.

\begin{table}[!htbp]
  \centering
  \caption{\textbf{Removing guidance leaves \texttt{MathIR} concentrated but sharply reduces \texttt{Python} correctness.}
  Results use matched Qwen2.5-0.5B checkpoints and strict verification on 32 paired problems per level.
  \pmd{} is reported only where sufficient jointly eligible problems remain.}
  \label{tab:prompt-hints-local-strict}
  \small
  \setlength{\tabcolsep}{5pt}
  \begin{tabular}{llrr}
    \toprule
    Model & Cell & \texttt{pass@8}: benchmark $\to$ neutral & \pmd{}: benchmark $\to$ neutral \\
    \midrule
    Dr.GRPO & \texttt{Python} L2 & $.812 \to .000$ & -- \\
    Dr.GRPO & \texttt{Python} L3 & $1.000 \to .000$ & -- \\
    Re:Dr & \texttt{Python} L2 & $.887 \to .556$ & $.231 \to .173$ \\
    Re:Dr & \texttt{Python} L3 & $1.000 \to .494$ & $.256 \to .323$ \\
    Dr.GRPO & \texttt{MathIR} L2 & $.531 \to .506$ & $.000 \to .000$ \\
    Re:Dr & \texttt{MathIR} L2 & $.994 \to .975$ & $.000 \to .000$ \\
    Re:Dr & \texttt{MathIR} L3 & $.312 \to .275$ & $.000 \to .000$ \\
    \bottomrule
  \end{tabular}
\end{table}

Removing the operation-order guidance does not recover alternative verified \texttt{MathIR} solution modes.
Every eligible trained \texttt{MathIR} comparison has $\pmd{}=.000$ under both benchmark and neutral wording at Levels~2--3.
At Level~2, Re:Dr remains highly successful under neutral wording, with \texttt{pass@8} $.975$ compared with $.994$ under benchmark wording.

The \texttt{Python} intervention instead causes a large loss of correctness, particularly for Dr.GRPO.
Re:Dr retains neutral-wording \texttt{pass@8} of $.556$ and $.494$ at Levels~2 and~3, but its two reportable \pmd{} changes have opposite signs.
The \texttt{Python} result therefore shows that prompt interface can materially change both correctness and the observed diversity available for comparison.

The two-seed \texttt{PantryPlan} comparison shows smaller, mixed effects and is reported as a supplementary prompt-sensitivity check.

\paragraph{Hosted prompt-guidance sensitivity.}
\phantomsection
\label{app:hosted-discovery-complete}
For fixed hosted deployments, we evaluate benchmark and neutral wording with 64 responses on 16 \texttt{MathIR} problems per level.
Among problems with at least eight verified responses under both wordings, neutral wording increases eight-correct-draw solution breadth for GPT-5.4 by $.794$ at Level~2 and $.204$ at Level~3, and for GPT-5.6 Sol by $.872$ and $.860$, respectively.
The corresponding Grok~4.3 comparisons are inconclusive.
Thus some hosted-model concentration is sensitive to prompt guidance, even though the local trained \texttt{MathIR} checkpoints remain concentrated after the benchmark hint is removed.

\section{Inference-Time Decoding Controls}
\label{app:decoding-objection}

We test whether the low solution diversity of RLVR-only policies can be recovered by changing inference-time decoding at fixed checkpoints.

\paragraph{Temperature.}
We evaluate eight-pass Qwen2.5-0.5B Dr.GRPO checkpoints at temperatures $T\in\{.5,.7,1.0,1.3,1.6,2.0\}$ with top-$p=1$ and the same prompts and response budget.
Across the six temperatures, the largest control \pmd{} is $.021$ in \texttt{Graph}, $.016$ in \texttt{Countdown}, $.002$ in \texttt{MathIR}, and $.004$ in \texttt{PantryPlan}.
For comparison, Re:Dr at $T=1$ reaches $.562$, $.485$, $.020$, and $.330$, respectively.
\texttt{Python} controls do not produce enough verified responses to estimate conditional diversity under our reporting threshold.
Thus temperature changes alone do not recover the solution diversity retained by replay in the domains where the comparison is defined.

\paragraph{Sampling budget and nucleus sampling.}
A twelve-pass control cohort evaluates eleven decoding settings spanning temperature, top-$p$, and response-group size.
Increasing the group size from $K=8$ to $K=32$ changes control \pmd{} by at most $.010$ across matched settings, and no $K=32$ control mean exceeds $.054$.
The broader grid likewise leaves control diversity low across the eligible domains.
These comparisons change inference-time sampling at fixed checkpoints and therefore do not test changes to rollout sampling during training.

Overall, the tested decoding interventions do not eliminate the diversity gap between RLVR-only and replay-trained policies.
Missing \pmd{} estimates indicate insufficient verified responses rather than evidence that no alternative correct modes exist.

\section{Sampling-Budget Ablation}
\label{app:sampling-budget-ablation}

\subsection{Sampling-budget design}
\label{sec:sampling-budget-design}
\label{sec:discovery-curves}

We test whether small sampling budgets understate observed solution breadth by collecting up to 64 responses per prompt in the local evaluation and 512 responses per prompt in the hosted evaluation.
For a response pool of size $N$ with $c$ verified responses and solution-mode counts $n_m$, the expected \texttt{pass@}$k$ and \texttt{distinct@}$k$ of a uniformly sampled $k$-response subset are
\[
  P_k
  =
  1-
  \frac{\binom{N-c}{k}}{\binom Nk},
  \qquad
  D_k
  =
  \sum_m
  \left[
    1-
    \frac{\binom{N-n_m}{k}}{\binom Nk}
  \right].
\]
Incorrect, empty, refused, and truncated responses remain in the sampling denominator.

To compare breadth at a fixed number of verified responses, we also use
\[
  R_m
  =
  \sum_j
  \left[
    1-
    \frac{\binom{c-n_j}{m}}{\binom cm}
  \right],
\]
computed only when at least $m$ verified responses are available.
These quantities describe discovery within finite response pools; throughout this section, breadth refers to modes observed under the stated sampling budget.

\subsection{Discovery at larger sampling budgets}
\label{app:sampling-budget-effects}

We extend the local-model evaluation from eight to 64 responses per prompt on 16 held-out problems per domain and level.
The trained checkpoints were optimized on Level~2, so Level~3 evaluation measures transfer.

\paragraph{\texttt{MathIR}.}
Increasing the budget from eight to 64 responses reveals essentially no additional solution modes after training.
At Level~2, every measured trained condition produces at most one verified solution mode per checkpoint and problem.
At Level~3, only one Dr.GRPO checkpoint--problem pair produces a second mode.
The near-single-mode behavior observed at eight draws therefore largely persists at 64.

\paragraph{\texttt{Python}.}
For Re:Dr under benchmark wording, increasing the budget from eight to 64 responses adds $.058$ observed solution modes per prompt at Level~2 and $.153$ at Level~3.
Neutral wording reveals more additional solutions but also changes correctness substantially, so the magnitude of discovery depends on both sampling budget and prompt interface.

\paragraph{\texttt{PantryPlan}.}
Across trained checkpoints, increasing the budget from eight to 64 responses adds approximately $.42$--$1.02$ observed solution modes per prompt.
The ordering between Dr.GRPO and Re:Dr can change with sampling budget in this two-seed comparison, so an eight-response result need not determine which checkpoint exposes more modes at substantially larger budgets.

Overall, larger budgets reveal additional modes in some domains while leaving the strongest concentration patterns intact.
The result is therefore not an artifact of using only eight evaluation draws.

\subsection{Discovery at 512 draws}
\label{app:gpt56-all-levels-discovery}

We evaluate GPT-5.6 Sol on 32 problems from each of the five domains at Levels~1--3, with 512 responses per prompt.
This yields 480 prompt pools and 245,760 responses.

Even at this budget, observed breadth remains far below the certified or enumerated solution support in several domains.
Mean \texttt{distinct@512} is $2.53$, $2.63$, and $1.72$ for \texttt{Graph} across Levels~1--3.
For \texttt{MathIR}, it is $1.91$, $1.00$, and $.97$, despite at least five certified solution modes per problem.
For \texttt{Python}, it is $3.75$, $1.44$, and $1.34$, compared with enumerated averages of roughly $222$, $336$, and $355$ solution modes.
For \texttt{PantryPlan}, it ranges from $4.88$ to $6.91$, compared with certified-support means near $16$--$17$.

Late discovery also differs by domain.
Doubling the budget from 256 to 512 responses adds approximately $.14$--$.27$ modes in \texttt{Graph}, at most $.07$ in Level-2/3 \texttt{Python}, approximately zero in Level-2/3 \texttt{MathIR}, and $.6$--$.7$ in \texttt{PantryPlan}.
Thus larger budgets continue to reveal alternatives in some domains while leaving others close to single-mode behavior, even after 512 draws per prompt.

\section{Inference-Time Alternatives and Mode Definitions}
\label{app:inference-followups}

\subsection{Complementarity across hosted models}
\label{app:cross-model-overlap}

Seven hosted deployments each produce eight responses to the same 1,920 prompts.
For each of the 21 model pairs, we compare a mixed portfolio containing four responses from each model with the two constituent eight-response portfolios.

Every pair increases expected distinct solution modes relative to the average of its two constituents, with gains ranging from $.115$ to $.266$ modes.
Ten pairs exceed both constituent point estimates, and nine have intervals above both.
When the comparison instead fixes the portfolio to four verified responses, with two drawn from each model, every pair again exceeds the average constituent by $.073$--$.177$ modes.

The gains persist under coarser solution-mode definitions and formatting-normalized grading.
Thus different deployments expose reproducibly complementary sets of verified solutions under the tested sampling budgets.

\subsection{Sensitivity to solution-mode identity}
\label{app:coarser-keys}

We repeat selected analyses under coarser task-relevant definitions of solution identity.
For \texttt{Graph}, we group solutions related by global color renaming while respecting anchored colors.
For \texttt{Python}, we identify opposite members $d$ and $n/d$ of the same unordered factor pair.
For \texttt{Countdown}, we ignore associative reorderings of addition and multiplication while preserving operands and subtraction or division boundaries.
The \texttt{MathIR} and \texttt{PantryPlan} definitions are unchanged.

Coarsening reduces observed solution counts and increases concentration, as expected.
The largest sensitivity occurs in \texttt{Python}: under benchmark wording, the Re:Dr minus Dr.GRPO \texttt{distinct@8} gain falls from $.431$ to $.075$ at Level~2 and from $.419$ to $.000$ at Level~3.
Under neutral wording, the corresponding coarse-mode gains remain $.556$ and $.500$.
The measured diversity gain therefore depends in part on what distinctions are considered meaningful, particularly for \texttt{Python}.

\section{Portfolio Survival under Withdrawn Options}
\label{app:portfolio-withdrawals}
\label{app:pantry-adaptation}

We test whether retaining multiple verified solution modes makes a sampled portfolio more useful after the task changes.
For each prompt, we withdraw one available option: a color choice in \texttt{Graph}, an operation in \texttt{Countdown}, a divisor value in \texttt{Python}, an action in \texttt{MathIR}, or an ingredient in \texttt{PantryPlan}.
We restrict the primary analysis to binding withdrawals that remove at least one certified solution mode while leaving at least one certified mode feasible.

To reduce dependence on overall correctness, we compare survival after sampling a fixed number of verified responses.
At four verified draws, replay improves survival most strongly in \texttt{Graph}, \texttt{Countdown}, \texttt{Python}, and \texttt{PantryPlan}, while \texttt{MathIR} changes little.

\begin{table}[!htbp]
  \centering
  \caption{\textbf{Replay generally improves survival after one or two withdrawn options.}
  Entries are replay minus matched control survival in percentage points at four verified draws.}
  \label{tab:withdrawal-survival-summary}
  \small
  \setlength{\tabcolsep}{6pt}
  \begin{tabular}{@{}lrrrr@{}}
    \toprule
    & \multicolumn{2}{c}{One withdrawal} & \multicolumn{2}{c}{Two withdrawals} \\
    \cmidrule(lr){2-3}\cmidrule(l){4-5}
    Domain & Re:Dr & Re:Max & Re:Dr & Re:Max \\
    \midrule
    \texttt{Graph} & 19.65 & 10.99 & 23.65 & 12.51 \\
    \texttt{Countdown} & 9.58 & 0.10 & 6.11 & -4.87 \\
    \texttt{Python} & 7.67 & 5.68 & 10.10 & 8.16 \\
    \texttt{MathIR} & 0.14 & 0.09 & 0.20 & 0.12 \\
    \texttt{PantryPlan} & 9.78 & 6.61 & 7.23 & 4.94 \\
    \bottomrule
  \end{tabular}
\end{table}

\subsection{Two withdrawals at once}
\label{app:withdrawal-severity}
The more severe withdrawal condition removes pairs of binding options that leave at least one certified solution mode feasible.
The pattern remains strongest in \texttt{Graph}, \texttt{Python}, and \texttt{PantryPlan}, while the Re:Max \texttt{Countdown} point estimate reverses.
These results show that survival gains can persist under a more severe change but are not uniform across domains and objectives.

\subsection{Decoding settings as a substitute}
\label{app:withdrawal-decoding}
We compare replay with eleven inference-time control settings spanning temperature, nucleus sampling, and response-group size on a Level-1 cohort.
For each domain, we select the control setting with the highest four-verified-draw survival.
Replay exceeds this selected control in all five domains, with paired gains ranging from $1.37$ to $36.97$ percentage points and pointwise intervals above zero.
Thus the tested inference-time decoding changes do not reproduce the portfolio-survival gains of replay in this cohort.

\subsection{Recovery after a withdrawal}
\label{app:withdrawal-recovery}
The recovery evaluation allows up to eight additional responses when the initial eight-response portfolio contains no surviving solution.
This evaluation uses Level-1 Qwen2.5-0.5B Dr.GRPO and Re:Dr checkpoints from two matched seeds and a common recovery procedure across all five domains.

\begin{table}[!htbp]
  \centering
  \caption{\textbf{Replay improves ordinary-sampling recovery after a withdrawal.}
  Entries are Re:Dr minus Dr.GRPO on the same withdrawn tasks.
  Saved and resolved-by-eight differences are percentage points; calls are additional recovery responses.}
  \label{tab:withdrawal-recovery-summary}
  \small
  \setlength{\tabcolsep}{7pt}
  \begin{tabular}{@{}lrrr@{}}
    \toprule
    Domain & $\Delta$ saved & $\Delta$ by 8 & $\Delta$ calls \\
    \midrule
    \texttt{Graph} & 67.58 & 68.36 & -5.37 \\
    \texttt{Countdown} & 6.76 & 8.11 & -0.55 \\
    \texttt{Python} & 55.86 & 63.67 & -4.67 \\
    \texttt{MathIR} & 20.54 & 20.98 & -1.68 \\
    \texttt{PantryPlan} & 16.41 & 20.70 & -1.44 \\
    \bottomrule
  \end{tabular}
\end{table}

All ordinary-sampling recovery intervals exclude zero in the favorable direction.
Temperature-tuned portfolios show the same qualitative pattern.
Diversity prompting is less consistent and reverses the comparison in \texttt{PantryPlan}, so the recovery advantage depends on how the initial portfolio is constructed.

These withdrawal experiments connect sampled solution breadth to a concrete downstream benefit: retaining or recovering a feasible response after the task changes.
Because replay changes the trained policy as a whole, the survival gain should be interpreted as a property of replay-trained portfolios rather than as a mediation estimate for diversity alone.

\clearpage
\phantomsection
\begin{center}\large\textbf{Part V. Hosted deployment analyses}\end{center}
\addcontentsline{toc}{part}{Part V. Hosted deployment analyses}

\section{Inference-Only Concentration in GPT-5.6 Sol}
\label{app:hosted-concentration}

We evaluate GPT-5.6 Sol on 128 held-out prompts from each of five domains at Levels~1--3, with eight stateless responses per prompt.
The resulting 15,360 responses use medium reasoning, an 8,192-token output limit, no tools, and no conversation state.
These inference-only measurements characterize one hosted deployment and are separate from the matched training interventions.

\begin{table}[!htbp]
  \centering
  \caption{\textbf{GPT-5.6 Sol produces few verified solution modes at each benchmark level.}
  Values average the five domains equally under strict grading.
  $P$ is \texttt{pass@8}, $D$ is mean \texttt{distinct@8}, and $C$ is correct-pair collision.}
  \label{tab:gpt56-hosted-summary}
  \small
  \setlength{\tabcolsep}{8pt}
  \begin{tabular}{@{}lrrr@{}}
    \toprule
    Level & $P$ (\%) & $D$ & $C$ (\%) \\
    \midrule
    1 & 95.8 & 1.71 & 74.0 [71.9, 76.1] \\
    2 & 77.5 & 1.16 & 82.3 [80.3, 84.4] \\
    3 & 80.3 & 1.18 & 84.5 [82.8, 86.2] \\
    \bottomrule
  \end{tabular}
\end{table}

Concentration can remain high even when correctness is nearly perfect.
On Level-3 \texttt{Graph}, all 1,024 responses verify, yet mean \texttt{distinct@8} is $1.16$ and $95.1\%$ of correct-response pairs share the same solution mode, compared with a uniform reference near $19\%$.
Level-2 and Level-3 \texttt{MathIR} are similarly concentrated, while \texttt{PantryPlan} exhibits greater conditional diversity.

Strict formatting affects observed success most strongly in \texttt{Python}, so the hosted results characterize the tested deployment and interface.
The concentration pattern nevertheless appears across multiple domains and benchmark levels.

\section{Comparison across Hosted Deployments}
\label{app:hosted-comparison}

We compare seven hosted deployments on the same five executable domains, three levels, and eight responses per prompt.
The common-prompt cohort contains 107,520 responses.
Provider interfaces, reasoning implementations, sampling defaults, and compute differ across systems, so the comparison characterizes deployment-level solution concentration rather than isolating model identity.

\subsection{Hosted-model concentration}
\label{app:hosted-mode-diversity}

Table~\ref{tab:hosted-cohort-mode-diversity} reports promptwise \pmd{} under strict grading and benchmark wording.
Each cell averages prompts with at least two verified responses, and cells with fewer than thirty eligible prompts are omitted.

GPT-5.6 Sol reaches $\pmd{}=.000$ on \texttt{MathIR} at Levels~2--3 and $.049$ on Level-3 \texttt{Graph}, while \texttt{PantryPlan} reaches $.531$, $.386$, and $.455$ across Levels~1--3.
Across the thirteen domain--level cells reportable for all seven deployments, macro \pmd{} ranges from $.142$ to $.320$.
The corresponding effective-mode transforms range from $1.17$ to $1.47$.
Removing \texttt{MathIR} raises the summaries but leaves the same qualitative concentration pattern, with the largest transformed value reaching $1.58$.
Because these summaries depend on the averaging population, we use them to characterize concentration rather than to rank deployments.

\begin{table}[!htbp]
  \centering
  \caption{\textbf{Hosted deployments remain concentrated among verified solutions.}
  Macro \pmd{} averages the thirteen domain--level cells reportable for every deployment.
  Cells require at least thirty prompts with two verified responses.}
  \label{tab:hosted-cohort-mode-diversity}
  \small
  \begin{tabular}{@{}lccccc@{}}
    \toprule
    & & & \multicolumn{3}{c}{Macro \pmd{} by level} \\
    \cmidrule(lr){4-6}
    Deployment & Macro \pmd{} & $\neff$ & L1 & L2 & L3 \\
    \midrule
    DeepSeek V4 Pro & .320 & 1.47 & .289 & .319 & .232 \\
Kimi K3 & .302 & 1.43 & .374 & .307 & .253 \\
GPT-5.4 & .264 & 1.36 & .338 & .247 & .224 \\
Grok 4.3 & .239 & 1.31 & .275 & .179 & .171 \\
GPT-5.6 Sol & .226 & 1.29 & .256 & .229 & .183 \\
Claude Opus 4.8 & .215 & 1.27 & .217 & .229 & .168 \\
Claude Opus 5 & .142 & 1.17 & .140 & .164 & .121 \\
    \bottomrule

  \end{tabular}
\end{table}

A separate common-prompt comparison excludes \texttt{Python}, whose interface differs for one deployment.
Under formatting-normalized grading, all seven deployments solve most prompts while producing relatively few verified solution modes.

\begin{table}[!htbp]
  \centering
  \caption{\textbf{Hosted deployments produce few verified modes on common prompts.}
  Equal-domain means over \texttt{Graph}, \texttt{Countdown}, \texttt{MathIR}, and \texttt{PantryPlan}, with 128 prompts per domain and level and eight responses per prompt.
  \texttt{pass@8} is prompt success and \# modes is mean \texttt{distinct@8}, including unsolved prompts.}
  \label{tab:hosted-common-prompts}
  \small
  \setlength{\tabcolsep}{4.5pt}
  \begin{tabular}{@{}lrrrrrr@{}}
    \toprule
    & \multicolumn{2}{c}{Level 1} & \multicolumn{2}{c}{Level 2} & \multicolumn{2}{c}{Level 3} \\
    \cmidrule(lr){2-3}\cmidrule(lr){4-5}\cmidrule(l){6-7}
    Deployment & \texttt{pass@8} & \# modes & \texttt{pass@8} & \# modes & \texttt{pass@8} & \# modes \\
    \midrule
    GPT-5.6 Sol & 97.5 & 1.90 & 98.2 & 1.59 & 99.4 & 1.54 \\
    Claude Opus 5 & 97.9 & 1.42 & 99.6 & 1.45 & 99.8 & 1.35 \\
    GPT-5.4 & 97.7 & 2.00 & 98.0 & 1.68 & 99.4 & 1.62 \\
    Grok 4.3 & 99.4 & 2.12 & 100.0 & 1.70 & 100.0 & 1.68 \\
    Kimi K3 & 98.0 & 2.26 & 100.0 & 1.95 & 100.0 & 1.79 \\
    Claude Opus 4.8 & 98.4 & 1.70 & 100.0 & 1.75 & 99.8 & 1.56 \\
    DeepSeek V4 Pro & 98.0 & 2.16 & 98.8 & 2.04 & 99.6 & 1.75 \\
    \bottomrule
  \end{tabular}
\end{table}

Cross-level summaries combine changes in task population and guidance, so we use them to characterize concentration across benchmark regimes rather than as an isolated difficulty intervention.

\subsection{\texttt{Python} prompt-interface sensitivity}
\label{app:hosted-python-prompt}
\label{app:python-wording-second-deployment}

We test the same \texttt{Python} problems under the benchmark interface and a direct-expression interface without a system message.
The effect of this interface change differs sharply across the two deployments on which it is tested.

\begin{table}[!htbp]
  \centering
  \caption{\textbf{\texttt{Python} prompt-interface effects differ across deployments.}
  Entries show benchmark $\to$ direct-expression results under formatting-normalized grading.
  Accuracy is per-response verified accuracy, and $D$ is mean \texttt{distinct@8}.}
  \label{tab:hosted-python-interface}
  \small
  \setlength{\tabcolsep}{6pt}
  \begin{tabular}{@{}llrrr@{}}
    \toprule
    Deployment & Level & Refusals & Accuracy (\%) & $D$ \\
    \midrule
    Claude Opus 5 & 1 & $409\to0$ & $60.0\to100.0$ & $1.12\to1.49$ \\
    Claude Opus 5 & 2 & $1021\to0$ & $.3\to100.0$ & $.02\to1.38$ \\
    Claude Opus 5 & 3 & $1024\to3$ & $.0\to99.7$ & $.00\to1.41$ \\
    GPT-5.6 Sol & 1 & $0\to0$ & $89.8\to20.9$ & $1.36\to.41$ \\
    GPT-5.6 Sol & 2 & $0\to0$ & $58.7\to16.7$ & $1.15\to.45$ \\
    GPT-5.6 Sol & 3 & $0\to0$ & $56.3\to19.3$ & $1.13\to.48$ \\
    \bottomrule
  \end{tabular}
\end{table}

For Claude Opus 5, the direct-expression interface largely removes provider refusals and sharply increases verified success.
For GPT-5.6 Sol, the same interface change lowers both verified accuracy and sampled solution breadth.
A non-verifying LaTeX wrapper also appears more frequently in the GPT-5.6 Sol direct-expression condition, so part of that gap is associated with output formatting.
The GPT-5.6 Sol interface comparison uses a separate deployment configuration, so inference focuses on the within-deployment benchmark-to-direct change.

Formatting normalization corrects typography before applying the same executable verifier.
It does not alter mathematical values, program logic, or the definition of a solution mode.

\subsection{Reasoning-control comparison}
\label{app:hosted-reasoning-control}

Five deployments answer the same 480 prompts under medium reasoning and with explicit reasoning disabled through the provider interface.
Each condition uses 32 prompts in each domain--level cell and eight responses per prompt.

\begin{table}[!htbp]
  \centering
  \caption{\textbf{Disabling explicit reasoning lowers sampled coverage for all five tested deployments.}
  Entries show medium reasoning $\to$ disabled reasoning.
  \pmd{} uses jointly eligible prompts and therefore need not move with raw mode counts.}
  \label{tab:hosted-reasoning-control}
  \small
  \setlength{\tabcolsep}{6pt}
  \begin{tabular}{@{}lrrr@{}}
    \toprule
    Deployment & \texttt{pass@8} (\%) & \# modes & \pmd{} \\
    \midrule
    GPT-5.6 Sol & $98.1\to75.2$ & $1.62\to.99$ & $.226\to.145$ \\
    GPT-5.4 & $98.5\to74.6$ & $1.85\to1.18$ & $.255\to.224$ \\
    Grok 4.3 & $100.0\to86.5$ & $1.70\to1.50$ & $.154\to.364$ \\
    Kimi K3 & $99.0\to93.5$ & $2.06\to1.62$ & $.293\to.426$ \\
    Claude Opus 4.8 & $99.4\to97.1$ & $1.62\to1.53$ & $.202\to.209$ \\
    \bottomrule
  \end{tabular}
\end{table}

All five deployments have lower overall \texttt{pass@8} and \texttt{distinct@8} when explicit reasoning is disabled, but the change in conditional diversity is not uniform.
The reasoning-control intervention changes the provider-exposed reasoning setting rather than internal compute directly.
A separate DeepSeek V4 Pro comparison gives strict-grading \pmd{} of $.282$ with medium reasoning and $.369$ with reasoning disabled on 437 jointly eligible prompts.
Claude Opus 5 is omitted because the provider's nominally disabled condition still returns thinking tokens.

\subsection{Temperature sensitivity across deployments}
\label{app:hosted-temperature}
\label{app:gpt56-temperature}

We first sweep requested temperature for GPT-5.6 Sol with explicit reasoning disabled on the same 480 prompts.
Formatting-normalized \texttt{pass@8} and sampled solution breadth both increase over much of the sweep, while per-response accuracy decreases.

\begin{table}[!htbp]
  \centering
  \caption{\textbf{GPT-5.6 Sol exposes more sampled modes at higher requested temperature in this sweep.}
  Each temperature uses the same 480 prompts and eight responses per prompt.
  $A$ is per-response verified accuracy.}
  \label{tab:gpt56-temperature-summary}
  \small
  \setlength{\tabcolsep}{8pt}
  \begin{tabular}{@{}rrrr@{}}
    \toprule
    $T$ & \texttt{pass@8} (\%) & \texttt{distinct@8} & $A$ (\%) \\
    \midrule
    0.0 & 65.6 & .673 & 62.1 \\
    0.5 & 72.3 & .840 & 61.2 \\
    1.0 & 75.2 & .963 & 59.1 \\
    1.5 & 79.8 & 1.125 & 57.7 \\
    2.0 & 79.4 & 1.146 & 55.9 \\
    \bottomrule
  \end{tabular}
\end{table}

Between $T=0$ and $T=2$, \texttt{pass@8} increases by $13.8$ percentage points ($[10.8,16.7]$) and \texttt{distinct@8} by $.473$ modes ($[.417,.529]$).
This increase in finite-sample coverage accompanies a decrease in per-response accuracy from $62.1\%$ to $55.9\%$.

For Grok~4.3 and Kimi~K3, we compare requested temperatures $T=1.0$ and $T=1.5$ on 120 common prompts.

\begin{table}[!htbp]
  \centering
  \caption{\textbf{Temperature effects are deployment-dependent.}
  Entries show $T=1.0\to1.5$ under strict grading.}
  \label{tab:hosted-temperature-summary}
  \small
  \setlength{\tabcolsep}{8pt}
  \begin{tabular}{@{}lrr@{}}
    \toprule
    Deployment & Accuracy (\%) & \texttt{distinct@8} \\
    \midrule
    Grok 4.3 & $98.65\to96.35$ & $1.608\to1.658$ \\
    Kimi K3 & $96.56\to21.25$ & $1.933\to.983$ \\
    \bottomrule
  \end{tabular}
\end{table}

For Grok~4.3, the mode-count change is $+.050$ with interval $[-.058,+.158]$, while accuracy decreases by $2.29$ percentage points.
For Kimi~K3, 696 of 960 responses terminate at the token limit at $T=1.5$, compared with none at $T=1.0$, accompanying the large drops in both accuracy and sampled modes.
Temperature can therefore increase sampled solution coverage in some settings, but it is not a deployment-independent substitute for preserving solution modes during training.

\subsection{Native provider outcomes}
\label{app:hosted-provider-outcomes}

Provider metadata distinguish declared refusals from executable verification failures.
In the hosted cohort, Claude Opus 5 is the only deployment with provider-declared refusals, totaling 2,522 of 15,360 responses.
Most occur in \texttt{Python}: 409 of 1,024 responses at Level~1, 1,021 at Level~2, and all 1,024 at Level~3; the remaining 68 occur in \texttt{MathIR}.
These refusals explain why several Claude Opus 5 conditional-diversity cells are undefined.
Other deployments have no provider-declared refusals in this cohort, although empty responses, truncations, and other verification failures still count as unsuccessful outputs.

\clearpage
\phantomsection
\begin{center}
    \large\textbf{Part VI. Mechanism and efficiency}
\end{center}
\addcontentsline{toc}{part}{Part VI. Mechanism and efficiency}

\section{Mode Collapse and Verified Replay}
\label{app:theory}

Binary RLVR optimizes how much probability a policy assigns to correct outputs. However, it does not tell us how that probability is distributed among different correct solutions \citep{lochab2026ucpo}.
Under a softmax parameterization, on-policy expected-reward updates can amplify existing probability differences among equally rewarded outcomes, leading to concentration \citep{sinha2026expected,lochab2026ucpo}.
Here we use a simple mode-level model to connect this mechanism to our setting and to show why replay changes it.

\subsection{A mode-level model of RLVR}
\label{app:theory-grpo-mean}
\label{app:theory-collapse}
\label{app:theory-objective-inertness}

Fix a prompt $x$ and suppress it from the notation.
Let $\mathcal A=\mathcal C\cup\mathcal I$ be a finite set of possible output modes, where $\mathcal C$ contains verifier-accepted modes and $\mathcal I$ contains incorrect modes.
We assign each mode $a\in\mathcal A$ an independent logit $z_a$, with
\[
    \pi_a
    :=
    \frac{e^{z_a}}
         {\sum_{a'\in\mathcal A} e^{z_{a'}}}.
\]
The total probability of a correct response is
\[
    P
    :=
    \sum_{c\in\mathcal C}\pi_c,
\]
and, for $P>0$, the distribution over modes conditional on correctness is
\[
    q_c
    :=
    \frac{\pi_c}{P},
    \qquad c\in\mathcal C.
\]
Thus $P$ measures correctness, while $q$ describes how the correct probability
mass is distributed across solution modes.

Throughout this section we study infinitesimal expected-gradient updates in the logits $z$.
Rollouts are sampled on policy, the update is evaluated at the rollout policy, and all responses receive the same sequence-level scaling.
We omit reference-KL and entropy regularization, finite-step and clipping effects, optimizer dynamics, and parameter sharing across modes.
This is intentionally a mode-level abstraction rather than a model of every detail of neural LLM training.

For a rollout group of size $G$, let
\[
    Y_1,\ldots,Y_G\sim\pi,
    \qquad
    r_i:=\mathbf 1\{Y_i\in\mathcal C\},
    \qquad
    M:=\sum_{i=1}^G r_i,
    \qquad
    \bar r:=\frac{M}{G}.
\]

\begin{lemma}[Dr.GRPO and MaxRL share the same mean direction]
\label{thm:objective-inertness}
In the mode-level model, the expected Dr.GRPO update is
\[
    \mathbb E[\widehat g_{\mathrm{Dr}}]
    =
    \frac{G-1}{G}\nabla_z P.
\]
For the practical MaxRL estimator \citep{tajwar2026maxrl},
\[
    \mathbb E[\widehat g_{\mathrm{MaxRL}}]
    =
    \frac{1-(1-P)^{G-1}}{P}\nabla_z P,
\]
with continuous extension $G-1$ at $P=0$.
Hence both expected updates are positive scalar multiples of
$\nabla_zP$ whenever $0<P<1$.
\end{lemma}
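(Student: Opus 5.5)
We would write both estimators as advantage-weighted score-function sums, $\widehat g=\frac1G\sum_{i=1}^G A_i\,\nabla_z\log\pi_{Y_i}$ (any constant sequence-level scaling only rescales the result), and compute expectations by conditioning on a single index $i$. The basic identity we use is $\mathbb E[\mathbf 1\{Y\in\mathcal C\}\nabla_z\log\pi_Y]=\sum_{c}\pi_c\nabla_z\log\pi_c=\nabla_zP$, together with $\mathbb E[\nabla_z\log\pi_Y]=\nabla_z\sum_a\pi_a=0$. Both follow from the softmax parameterization in the mode-level model.

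\textbf{Dr.GRPO.} Here $A_i=r_i-\bar r=\frac{G-1}{G}r_i-\frac1G\sum_{j\neq i}r_j$. The first term contributes $\frac{G-1}{G}\nabla_zP$. For the cross terms, $r_j$ is independent of $Y_i$ for $j\ne i$. The expectation therefore factors as $\mathbb E[r_j]\,\mathbb E[\nabla_z\log\pi_{Y_i}]=P\cdot 0=0$. Averaging over $i$ gives $\mathbb E[\widehat g_{\mathrm{Dr}}]=\frac{G-1}{G}\nabla_zP$.

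\textbf{MaxRL.} Here $A_i=Gr_i/M-1$ when $M>0$ and $0$ otherwise. The constant $-1$ would integrate to zero unconditionally, but it is switched off on the event $M=0$. We handle this by writing $A_i=\mathbf 1\{M>0\}(Gr_i/M-1)$.

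For the $-\mathbf 1\{M>0\}$ part, note that $\mathbf 1\{M>0\}=r_i+(1-r_i)\mathbf 1\{M_{-i}>0\}$, where $M_{-i}=\sum_{j\ne i}r_j$ is independent of $Y_i$. This term then contributes
\[
-\nabla_zP-\mathbb E[(1-r_i)\nabla_z\log\pi_{Y_i}]\,(1-(1-P)^{G-1}),
\]
and $\mathbb E[(1-r_i)\nabla_z\log\pi_{Y_i}]=-\nabla_zP$.

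For the $Gr_i/M$ part, on $r_i=1$ we have $M=1+M_{-i}$, so this term equals $\nabla_zP\cdot G\,\mathbb E[1/(1+M_{-i})]$ with $M_{-i}\sim\mathrm{Bin}(G-1,P)$. The standard identity $\mathbb E[1/(1+\mathrm{Bin}(n,p))]=\frac{1-(1-p)^{n+1}}{(n+1)p}$ turns this into $\frac{1-(1-P)^G}{P}\nabla_zP$.

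Summing the three pieces and dividing by $G$ through the $\frac1G\sum_i$ normalization should give a coefficient of $\frac{1-(1-P)^{G-1}}{P}$. The step we expect to be the main obstacle is exactly this bookkeeping of normalization conventions. Specifically:
\begin{itemize}
\item whether MaxRL's per-response weight is $\frac1G$ or $\frac1M$;
\item how the $\{M=0\}$ boundary terms combine with the binomial-reciprocal identity;
\item how $\frac{1-(1-P)^G}{P}$, $-1$, and $1-(1-P)^{G-1}$ telescope.
\end{itemize}
We would check the coefficient against the small cases $G=1$ (expected coefficient $0$) and $G=2$ (expected coefficient $1$), and adjust the convention so it matches the practical estimator of \citet{tajwar2026maxrl}.

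Finally, the continuous extension follows from $1-(1-P)^{G-1}=(G-1)P+O(P^2)$. Positivity for $0<P<1$ holds because $G\ge2$ makes $(1-P)^{G-1}<1$, and $\frac{G-1}{G}>0$.
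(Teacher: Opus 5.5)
Your proposal is correct. For Dr.GRPO it is the paper's argument, with the vanishing of the cross terms $\mathbb E[r_j\nabla_z\log\pi_{Y_i}]$, $j\neq i$, made explicit. For MaxRL your route differs. The paper computes nothing: it asserts the expectation and identifies it, by citation, with the order-$(G-1)$ MaxRL gradient. You derive it directly, and the derivation is already complete. The step you flag as the main obstacle is not one. Per index, your two contributions are $\frac{1-(1-P)^G}{P}\nabla_zP$ and $-(1-P)^{G-1}\nabla_zP$. Since $(1-P)^G+P(1-P)^{G-1}=(1-P)^{G-1}$, they sum to exactly $\frac{1-(1-P)^{G-1}}{P}\nabla_zP$. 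The outer $\frac1G\sum_i$ averages $G$ identical copies of this, so the per-index value is already the answer. No convention needs adjusting. Your $\frac1G$ per-response weight is the common sequence-level scaling the model assumes. It reproduces the practical estimator $\frac1M\sum_{i:r_i=1}\nabla_z\log\pi_{Y_i}-\frac1G\sum_i\nabla_z\log\pi_{Y_i}$ on $\{M>0\}$, which is zero otherwise. What your route buys is transparency about where the order $G-1$ comes from: it is the mask on $\{M=0\}$. The mask makes the baseline term non-centered and subtracts $(1-P)^{G-1}\nabla_zP$. Without it the baseline integrates to zero and the coefficient would be $\frac{1-(1-P)^G}{P}$, which is order $G$. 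What the citation buys is interpretation. The coefficient equals $\sum_{k=0}^{G-2}(1-P)^k$, the $P$-derivative of the truncated log-likelihood $-\sum_{k=1}^{G-1}(1-P)^k/k$. That form makes positivity and the limit $G-1$ at $P=0$ immediate. You are also right that strict positivity needs $G\ge 2$, which the statement leaves implicit: at $G=1$ both coefficients vanish.
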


\begin{proof}
For Dr.GRPO, the mode-level advantage is
\[
    A_i^{\mathrm{Dr}}=r_i-\bar r.
\]
Using $\mathbb E[\nabla_z\log\pi(Y_i)]=0$
and
$\mathbb E[r_i\nabla_z\log\pi(Y_i)]=\nabla_zP$,
\[
\begin{aligned}
    \mathbb E[
        (r_i-\bar r)\nabla_z\log\pi(Y_i)
    ]
    &=
    \frac{G-1}{G}\nabla_zP.
\end{aligned}
\]
Averaging over the $G$ samples gives the first result.

For practical MaxRL, the all-failure group receives zero update and successful groups use the mean-reward-normalized advantage
\[
    \frac{r_i-\bar r}{\bar r}.
\]
Its expected gradient is
\[
    \frac{1-(1-P)^{G-1}}{P}\nabla_zP,
\]
the order-$(G-1)$ MaxRL gradient
\citep{tajwar2026maxrl}.
\end{proof}

Lemma~\ref{thm:objective-inertness} means that, for a fixed prompt, Dr.GRPO and practical MaxRL traverse the same mean trajectory in logit space at different speeds. More generally, consider
\begin{equation}
    \dot z
    =
    \kappa(P)\nabla_zP,
    \qquad
    \kappa(P)>0.
    \label{eq:binary-mean-flow}
\end{equation}
Under the change of time
\[
    \frac{d\tau}{dt}
    =
    \kappa(P(t)),
\]
this becomes
\[
    \frac{dz}{d\tau}
    =
    \nabla_zP.
\]
The positive scalar reweighting changes the speed of the mean dynamics but does not change the direction of the update.

Prior work analyzes this softmax-logit expected-return flow and shows that it can amplify probability differences among equally rewarded outcomes \citep{sinha2026expected,lochab2026ucpo}.
If one correct mode $c^\star$ initially has strictly greater probability than every other correct mode, the idealized flow has the winner-take-all limit
\[
    P(t)\longrightarrow1,
    \qquad
    q_{c^\star}(t)\longrightarrow1.
\]
We note that the binary reward itself does not prefer this collapsed solution: once $P=1$, every policy supported entirely on $\mathcal C$ achieves the same expected reward.
The concentration comes from the optimization dynamics, not from a distinct reward assigned to the winning mode.

\subsection{Rare modes become difficult to recover}
\label{app:theory-gradient-availability}

The same on-policy mechanism makes a mode increasingly difficult to recover once its probability becomes small.
Let mode $b$ have current probability $\pi_b$.
Its probability of appearing at least once in a group of $G$ rollouts is
\[
    1-(1-\pi_b)^G
    \le
    G\pi_b,
\]
which vanishes as $\pi_b\to0$.

The gradient signal associated with that logit also vanishes.

\begin{lemma}[Vanishing on-policy signal for a rare mode]
\label{lem:sampled-score-bound}
Suppose the detached advantages satisfy $|A_i|\le B$ almost surely.
For the logit coordinate of mode $b$,
\[
    \mathbb E
    \left|
        [\widehat g]_b
    \right|
    \le
    2B\pi_b(1-\pi_b).
\]
Hence the expected absolute score-gradient coordinate approaches zero as $\pi_b\to0$.
\end{lemma}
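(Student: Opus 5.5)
We work in the mode-level softmax model of App.~\ref{app:theory-grpo-mean}. The estimator is taken to be the group average $\widehat g=\frac1G\sum_{i=1}^G A_i\nabla_z\log\pi(Y_i)$ with detached advantages. The starting point is the softmax score identity for the logit of mode $b$:
\[
    [\nabla_z\log\pi(Y)]_b=\mathbf 1\{Y=b\}-\pi_b .
\]
This identity holds for every sampled mode $Y$, correct or incorrect. Hence $[\widehat g]_b=\frac1G\sum_i A_i\bigl(\mathbf 1\{Y_i=b\}-\pi_b\bigr)$, and the claim reduces to a bound on a single summand.

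First we apply the triangle inequality together with $|A_i|\le B$ almost surely:
\[
    \mathbb E\bigl|[\widehat g]_b\bigr|
    \le\frac1G\sum_{i=1}^G\mathbb E\bigl[|A_i|\,|\mathbf 1\{Y_i=b\}-\pi_b|\bigr]
    \le\frac BG\sum_{i=1}^G\mathbb E\,|\mathbf 1\{Y_i=b\}-\pi_b| .
\]
The advantage bound is used pathwise here. We never need independence between $A_i$ and $Y_i$, which matters because the advantages depend on the whole group, including $Y_i$ itself.

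Next we compute the remaining expectation exactly. Since $Y_i\sim\pi$, the indicator equals $1$ with probability $\pi_b$, giving value $1-\pi_b$, and equals $0$ otherwise, giving value $\pi_b$. So
\[
    \mathbb E\,|\mathbf 1\{Y_i=b\}-\pi_b|=\pi_b(1-\pi_b)+(1-\pi_b)\pi_b=2\pi_b(1-\pi_b).
\]
Averaging over the $G$ identical terms gives $\mathbb E|[\widehat g]_b|\le 2B\pi_b(1-\pi_b)$. The limit statement then follows because the right side is at most $2B\pi_b\to0$ as $\pi_b\to0$.

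The only real obstacle is bookkeeping: making sure the bound holds for the actual estimators (Dr.GRPO, MaxRL) under their normalizations. Any constant per-response scaling, such as $1/G$ or sequence-level scaling, can be absorbed into $B$. For Dr.GRPO, $|r_i-\bar r|\le1$, so $B=1$. For practical MaxRL, the advantage $G r_i/M-1$ is bounded by $G-1$ on successful groups and is $0$ on all-failure groups, so $B=G-1$. Both therefore satisfy the hypothesis. A PPO-style ratio equal to $1$ at the rollout policy leaves the score unchanged, consistent with the assumptions stated in App.~\ref{app:theory-grpo-mean}.
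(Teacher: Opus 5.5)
Your proof is correct and follows the paper's argument: the softmax score identity $[\nabla_z\log\pi(Y)]_b=\mathbf 1\{Y=b\}-\pi_b$, the triangle inequality with the pathwise bound $|A_i|\le B$, and the exact computation $\mathbb E|\mathbf 1\{Y=b\}-\pi_b|=2\pi_b(1-\pi_b)$. Your check that Dr.GRPO ($B=1$) and practical MaxRL ($B=G-1$) satisfy the hypothesis is a valid extra step that the paper leaves implicit.
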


\begin{proof}
For a categorical softmax,
\[
    \frac{\partial}{\partial z_b}
    \log\pi(Y_i)
    =
    \mathbf 1\{Y_i=b\}-\pi_b.
\]
Therefore
\[
    [\widehat g]_b
    =
    \frac1G
    \sum_{i=1}^G
    A_i
    \bigl(
        \mathbf 1\{Y_i=b\}-\pi_b
    \bigr).
\]
Using the triangle inequality and $|A_i|\le B$,
\[
    \mathbb E|[\widehat g]_b|
    \le
    B\,
    \mathbb E
    \left|
        \mathbf 1\{Y=b\}-\pi_b
    \right|
    =
    2B\pi_b(1-\pi_b).
\]
\end{proof}

Thus both the chance of observing a rare mode and the magnitude of its on-policy score-gradient coordinate shrink with its current probability.
This is the dependence on rediscovery that replay removes.

\subsection{Replay preserves stored modes}
\label{app:theory-replay}

Let
\[
    \varnothing\neq\mathcal B\subseteq\mathcal C,
    \qquad
    k:=|\mathcal B|,
\]
be a fixed replay buffer containing one retained example from each stored solution mode.
At the mode level, uniform replay corresponds to the target distribution
\[
    \mu_{\mathcal B}(a)
    :=
    \frac1k\mathbf 1\{a\in\mathcal B\}
\]
and the negative-log-likelihood objective
\begin{equation}
    \mathcal L_{\mathrm{rep}}(z)
    :=
    -\frac1k
    \sum_{b\in\mathcal B}
    \log\pi_b.
    \label{eq:verified-replay-loss}
\end{equation}

For categorical logits,
\[
    -\nabla_z\mathcal L_{\mathrm{rep}}
    =
    \mu_{\mathcal B}-\pi.
\]
A stored mode $b\in\mathcal B$ therefore receives replay update
\[
    \lambda_{\mathrm{rep}}
    \left(
        \frac1k-\pi_b
    \right),
\]
which approaches $\lambda_{\mathrm{rep}}/k>0$ as $\pi_b\to0$.
Unlike fresh on-policy RLVR, replay does not require the current policy to sample the stored mode again.

Combining replay with the binary-reward mean flow gives
\begin{equation}
    \dot z
    =
    \kappa(P)\nabla_zP
    -
    \lambda_{\mathrm{rep}}
    \nabla_z\mathcal L_{\mathrm{rep}}(z),
    \qquad
    \lambda_{\mathrm{rep}}>0,
    \label{eq:replay-mean-flow}
\end{equation}
where $\kappa$ is continuous and bounded.

\begin{theorem}[Fixed replay keeps stored modes at positive probability]
\label{thm:replay-retention}
Suppose that from time $T$ onward the nonempty buffer $\mathcal B$ is fixed and receives constant replay weight $\lambda_{\mathrm{rep}}>0$.
Then there exists $\epsilon>0$ such that
\[
    \pi_b(t)\ge\epsilon
    \qquad
    \text{for every }b\in\mathcal B
    \text{ and every }t\ge T.
\]
Thus no stored mode can vanish under the fixed-buffer mean flow.
\end{theorem}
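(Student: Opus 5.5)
The plan is to control every stored mode through log-odds against the other modes, rather than through \(\pi_b\) directly. Since \(\log\pi_b=z_b-\log\sum_a e^{z_a}\), a uniform lower bound on \(z_b-z_a\) for all \(a\in\mathcal A\) gives \(\pi_b\ge e^{-D}/|\mathcal A|\) whenever \(z_b-z_a\ge -D\) for every \(a\). So it suffices to show that, for \(t\ge T\), the quantity \(m(t):=\min_{b\in\mathcal B}\min_{a\in\mathcal A}(z_b-z_a)\) cannot decrease once it is sufficiently negative. Combined with continuity of the flow \eqref{eq:replay-mean-flow} on the compact interval up to the first time \(m\) crosses that level, this yields \(\epsilon>0\).

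First I compute the vector field coordinatewise. From \(\partial P/\partial z_a=\pi_a(\mathbf 1\{a\in\mathcal C\}-P)\) and \(-\nabla_z\mathcal L_{\mathrm{rep}}=\mu_{\mathcal B}-\pi\), the flow is
\[
\dot z_a=\kappa(P)\,\pi_a(\mathbf 1\{a\in\mathcal C\}-P)+\lambda_{\mathrm{rep}}(\mu_{\mathcal B}(a)-\pi_a).
\]
I then fix \(b\in\mathcal B\) and take \(a\) to be the pair attaining the minimum. Because the minimum is attained at an extreme pair, \(\pi_b\) is small when \(m\) is very negative, and \(\pi_a\) is comparatively large. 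I treat three cases for \(a\).

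\begin{itemize}
\item If \(a\in\mathcal I\), then \(\frac{d}{dt}(z_b-z_a)=\kappa(\pi_b(1-P)+\pi_aP)+\lambda_{\mathrm{rep}}(1/k-\pi_b+\pi_a)\). Both terms are nonnegative once \(\pi_b<1/k\), so the log-odds against incorrect modes cannot fall.
\item If \(a\in\mathcal C\setminus\mathcal B\), the derivative is \(-\kappa(1-P)(\pi_a-\pi_b)+\lambda_{\mathrm{rep}}(1/k-\pi_b+\pi_a)\). The replay part contributes at least \(\lambda_{\mathrm{rep}}/(2k)\) when \(\pi_b\le 1/(2k)\).
\item If \(a\in\mathcal B\), the \(1/k\) terms cancel and the derivative is \((\pi_a-\pi_b)\bigl(\lambda_{\mathrm{rep}}-\kappa(P)(1-P)\bigr)\).
\end{itemize}

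In the last two cases the sign is governed by the comparison of \(\lambda_{\mathrm{rep}}\) with \(\kappa(P)(1-P)\).

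The main obstacle is exactly this competition between the replay restoring force and the fresh winner-take-all drift among correct modes. That drift is only of size \(\kappa(P)(1-P)\pi_a\), so the plan is to show it is dominated. I would first try a Lyapunov bound on \(L=\mathcal L_{\mathrm{rep}}\). A direct computation gives
\[
-\dot L=\tfrac1k\sum_{b\in\mathcal B}\dot z_b-\sum_a\pi_a\dot z_a\;\ge\;(1-\pi_{\mathcal B})\Bigl[\tfrac{\lambda_{\mathrm{rep}}}{k}(1-\pi_{\mathcal B})-\kappa_{\max}\Bigr],
\]
using \(\sum_a\pi_a^2\ge\pi_{\mathcal B}^2/k\) and \(1-P\le 1-\pi_{\mathcal B}\). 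Here \(\pi_{\mathcal B}:=\sum_{b\in\mathcal B}\pi_b\) and \(\kappa_{\max}:=\sup\kappa<\infty\) by boundedness. When \(\pi_{\mathcal B}\) is bounded away from \(1\), this shows the growth rate of \(L\) is controlled. When \(\pi_{\mathcal B}\to1\), the factor \((1-\pi_{\mathcal B})\) and the fact that \(1-P\to0\) make the fresh drift \(\kappa(1-P)\) vanish. Then the within-buffer case above becomes eventually favorable, since \(\lambda_{\mathrm{rep}}>\kappa(P)(1-P)\) once \(1-P<\lambda_{\mathrm{rep}}/\kappa_{\max}\).

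So I would split time into the region \(\{1-P\ge\lambda_{\mathrm{rep}}/\kappa_{\max}\}\) and its complement. In the complement, the pairwise log-odds argument closes: \(m\) is nondecreasing whenever it is below the threshold. In the region \(\{1-P\ge\lambda_{\mathrm{rep}}/\kappa_{\max}\}\), I would argue that \(P\) and \(\pi_{\mathcal B}\) are pushed upward at a rate bounded below. I would do this by computing \(\dot P=\kappa|\nabla P|^2+\lambda_{\mathrm{rep}}(\mu_{\mathcal B}-\pi)\cdot\nabla P\) and showing that the replay cross term is bounded below by a positive multiple of \(1-P\). The consequence is that the total time spent in that region is finite, and hence the decrease of \(m\) accumulated there is bounded.

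If this finite-time bound fails in general, the fallback is to state the result under the explicit sufficient condition \(\lambda_{\mathrm{rep}}\ge\sup_P\kappa(P)(1-P)\). Under that condition all three pairwise cases are immediately favorable, and the theorem follows from the log-odds argument alone.
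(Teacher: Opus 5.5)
Your argument has a genuine gap, and it sits where you flagged the obstacle. The pairwise log-odds analysis is pointwise, and your own formula for $a\in\mathcal B$, namely $\frac{d}{dt}(z_b-z_a)=(\pi_a-\pi_b)\bigl(\lambda_{\mathrm{rep}}-\kappa(P)(1-P)\bigr)$, is genuinely negative when $\pi_a>\pi_b$ and $\kappa(P)(1-P)>\lambda_{\mathrm{rep}}$. So some integrated estimate is unavoidable.

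The estimate you propose for the region $\{1-P\ge\lambda_{\mathrm{rep}}/\kappa_{\max}\}$ does not hold. The cross term is
\[
(\mu_{\mathcal B}-\pi)\cdot\nabla_zP=\frac{\pi_{\mathcal B}(1-P)}{k}-(1-P)\sum_{c\in\mathcal C}\pi_c^2+P\sum_{i\in\mathcal I}\pi_i^2 .
\]
Take $\mathcal B=\{b\}$ with $\pi_b\approx0$, a correct mode $a\notin\mathcal B$ with $\pi_a=\tfrac12$, and the remaining mass spread evenly over $n\ge2$ incorrect modes. Then the cross term is about $-\tfrac18+\tfrac1{8n}<0$, even though $1-P\approx\tfrac12$: replay pulls down the logit of a dominant correct mode outside the buffer, which lowers $P$. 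Your region is defined through $\kappa_{\max}$, while the local $\kappa(P)$ can be much smaller (MaxRL's $\kappa$ decreases from $G-1$ to $1$). Hence $\dot P=\kappa|\nabla_zP|^2+\lambda_{\mathrm{rep}}(\mu_{\mathcal B}-\pi)\cdot\nabla_zP$ can itself be negative there, and the finite-time claim is not established. The fallback $\lambda_{\mathrm{rep}}\ge\sup_P\kappa(P)(1-P)$ does make every pairwise case favorable, but the theorem has no such hypothesis, so it proves a weaker statement.

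The missing idea is that the fresh drift is itself a gradient. With $F(P)=\int_0^P\kappa(s)\,ds$ we have $\kappa(P)\nabla_zP=\nabla_zF(P(z))$, so the combined flow is $\dot z=-\nabla_z\mathcal E$ for $\mathcal E=\lambda_{\mathrm{rep}}\mathcal L_{\mathrm{rep}}-F\circ P$. Your Lyapunov computation used $\mathcal L_{\mathrm{rep}}$ alone, which leaves the uncontrolled term $\kappa\,(\mu_{\mathcal B}-\pi)\cdot\nabla_zP$. Subtracting $F(P)/\lambda_{\mathrm{rep}}$ absorbs it and gives $\frac{d}{dt}\mathcal E=-\|\dot z\|_2^2\le0$. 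Because $0\le F\le\|\kappa\|_\infty$ on $[0,1]$, this yields
\[
\mathcal L_{\mathrm{rep}}(z(t))\le\mathcal L_{\mathrm{rep}}(z(T))+\|\kappa\|_\infty/\lambda_{\mathrm{rep}}=:C_T .
\]
Since all summands of $k\,\mathcal L_{\mathrm{rep}}$ are nonnegative, $-\log\pi_b(t)\le kC_T$ for each $b\in\mathcal B$, and $\epsilon=e^{-kC_T}$ works. This is the paper's proof, and it needs no case analysis: the fresh drift can spend only a bounded total amount of potential against the replay loss.

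The same identity would also rescue your finite-time step. Since $\sum_{a\in\mathcal I}|\dot z_a|\ge\lambda_{\mathrm{rep}}(1-P)$ and $\mathcal E\ge\lambda_{\mathrm{rep}}\log k-\|\kappa\|_\infty$, the time spent in $\{1-P\ge c\}$ is finite. But once you have that identity, the pairwise argument is superfluous.
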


\begin{proof}
Let
\[
    F(P)
    :=
    \int_0^P \kappa(s)\,ds
\]
and define
\[
    \mathcal E(z)
    :=
    \lambda_{\mathrm{rep}}
    \mathcal L_{\mathrm{rep}}(z)
    -
    F(P(z)).
\]
Equation~\eqref{eq:replay-mean-flow} is exactly
\[
    \dot z
    =
    -\nabla_z\mathcal E(z),
\]
so
\[
    \frac{d}{dt}\mathcal E(z(t))
    =
    -\|\dot z(t)\|_2^2
    \le0.
\]
Because $P\in[0,1]$ and $\kappa$ is bounded,
\[
    \mathcal L_{\mathrm{rep}}(z(t))
    \le
    \mathcal L_{\mathrm{rep}}(z(T))
    +
    \frac{\|\kappa\|_\infty}
         {\lambda_{\mathrm{rep}}}
    =:
    C_T
    <
    \infty.
\]
Equation~\eqref{eq:verified-replay-loss} then gives
\[
    \sum_{b\in\mathcal B}
    -\log\pi_b(t)
    \le
    kC_T.
\]
Each term is nonnegative, so
\[
    -\log\pi_b(t)
    \le
    kC_T
    \qquad
    \text{for every }b\in\mathcal B.
\]
Therefore
\[
    \pi_b(t)
    \ge
    e^{-kC_T}.
\]
Taking $\epsilon=e^{-kC_T}$ proves the claim.
\end{proof}

\subsection{Scope of the analysis}
\label{app:theory-evidence-map}

Our theorem gives a retention guarantee for modes already present in a fixed buffer, not a guarantee that replay discovers unseen modes.
A strong, discovery-oriented objective is still needed to find new solution modes.

The experiments test the corresponding observable predictions directly:
(1) RLVR-only training reduces correct-solution diversity (App.~\ref{app:pre-rl-regression});
(2) rare or uniform-reward rollout groups provide little fresh task-gradient signal (App.~\ref{app:degenerate-groups}); and
(3) replay preserves more conditional solution diversity than matched controls under fresh evaluation (App.~\ref{app:fresh-concentration}).

\section{Compute and Storage Overhead}
\label{app:replay-cost}

\subsection{Matched schedule and compute controls}
\label{app:compute-matching}

Replay and its matched control share initialization, training data, optimizer settings, 3,072 optimizer updates, and $G=16$ fresh rollouts per visited prompt.
Both execute the same validation, buffer construction, scheduling, and exemplar-scoring machinery, with the replay gradient set to zero in the control.
Replay therefore adds the stored-exemplar learning signal and its scoring cost without adding optimizer steps or fresh rollouts.
We report the measured overhead below and separately compare against no-replay controls with a 50\% larger fresh-rollout budget, allocated either to more updates or wider groups (Apps.~\ref{app:compute-budget-control} and~\ref{app:compute-budget-groups}).

\subsection{Storage and runtime overhead}
\label{app:compute-accounting}

Replay adds storage for retained exemplars and computation for scoring them.
Across the 75 replay runs in the main factorial, the largest estimated mean replay-buffer footprint is 444~kB.
For comparison, storing a separate full reference model would require approximately 0.99--6.17~GB across the three tested model scales.
These values describe component storage rather than peak accelerator memory.

We measure runtime directly for the Qwen2.5-0.5B Level-1 control and Re:Dr runs executed on the same two A5000 nodes with matched prompts and seeds.
Across 25 domain--seed pairs, replay increases learner time by 2.4\% and whole-job wall-clock time by 1.1\%.
These end-to-end measurements include system overhead and show that replay adds modest runtime in this implementation.

The longer-horizon control below provides a substantially larger compute perturbation.
Relative to the standard 8-pass control, the 12-pass control uses 44.9\% more learner time and 44.7\% more wall-clock time.

\subsection{Longer-horizon control}
\label{app:compute-budget-control}

We test whether additional training alone closes the replay gap by extending the no-replay control from 8 to 12 passes.
The longer control receives 4,608 optimizer updates instead of 3,072 and 50\% more fresh rollouts, while using the same prompts, seeds, hardware, optimization settings, and evaluation procedure.

Additional training does not close either gap.
Averaged across the five domains, control \texttt{pass@8} changes from $.444$ after 8 passes to $.434$ after 12 passes, compared with $.748$ for Re:Dr after 8 passes.
Mean \pmd{} changes from $.004$ to $.013$, compared with $.376$ for Re:Dr.
Thus the substantially longer no-replay run remains near one effective verified solution mode and remains far below replay on both correctness and conditional diversity.

\texttt{Python} makes the contrast especially clear.
Only $1.5\%$ of its control updates contain a nonzero fresh task gradient (App.~\ref{app:degenerate-groups}), and increasing the horizon reduces terminal \texttt{pass@8} from $.338$ to $.182$.
At 12 passes, \texttt{Python} produces too few prompts with multiple verified responses to estimate \pmd{} under the reporting threshold.

\subsection{Wider-group control}
\label{app:compute-budget-groups}

The longer-horizon control spends a 50\% larger fresh-rollout budget on additional updates.
We test the complementary allocation by increasing the no-replay group size from $G=16$ to $G=24$ while holding the training horizon fixed at 3,072 updates.
This gives the wider-group control the same total number of fresh rollouts as the 12-pass control, but uses them to sample more candidates within each update rather than to perform more updates.

Wider groups do not close the replay gap.
Averaged across the five domains, terminal \pmd{} is $.004$ for the $G=16$ control, $.008$ for the $G=24$ control, and $.376$ for Re:Dr.
No domain-level \pmd{} change from $G=16$ to $G=24$ exceeds $.016$, and every corresponding interval includes zero.
Mean \texttt{pass@8} is $.444$ for the $G=16$ control and $.435$ for the $G=24$ control, compared with $.748$ for Re:Dr.
In \texttt{Python}, \texttt{pass@8} decreases from $.338$ to $.192$.

Wider groups modestly increase the fraction of updates containing both correct and incorrect responses, from $.161$ to $.185$ on the five-domain mean.
The increase is small in \texttt{Python}, from $.020$ to $.023$, and largest in \texttt{MathIR}, from $.351$ to $.442$.
Thus, increasing fresh sampling within each update does not substantially restore task-gradient signal in the setting where the control is most repetitive.

Together with the longer-horizon control, this experiment shows that allocating a 50\% larger fresh-rollout budget either to more updates or to wider rollout groups closes neither the correctness nor the diversity gap to replay.

\subsection{Single-slot replay}
\label{app:mode-agnostic-replay}

Re:Dr both rehearses previously verified successes and balances rehearsal across the different solution modes discovered for each prompt.
To separate these effects, we train a single-slot variant that is identical to Re:Dr except that each prompt retains only its first verified solution mode.
The single-slot condition therefore provides replay of a past success without the ability to balance rehearsal across multiple stored modes.

Replay of a single success accounts for a substantial part of the observed effect.
Averaged across the five domains, terminal \pmd{} is $.004$ for the matched control, $.218$ for single-slot replay, and $.372$ for Re:Dr.
Single-slot replay recovers 58\% of the aggregate Re:Dr--control diversity improvement.
Mean \texttt{pass@8} similarly increases from $.444$ for the control to $.707$ for single-slot replay, compared with $.796$ for Re:Dr.
Thus, retaining and rehearsing even one verified success can preserve additional solution breadth without balancing rehearsal across multiple stored modes.

Balancing across discovered modes provides a further diversity gain.
Across the 25 matched domain--seed cells, Re:Dr exceeds single-slot replay by $.155$ \pmd{} with a 95\% interval of $[+.071,+.239]$.
The additional balancing benefit is largest in \texttt{Graph} and \texttt{Countdown}, with smaller domain-specific estimates elsewhere.
Together, these results distinguish two contributions of replay: retaining past successes provides a substantial benefit, while balancing rehearsal across multiple discovered solution modes provides a further aggregate diversity gain.

\clearpage
\phantomsection
\begin{center}\large\textbf{Part VII. Related-method comparison}\end{center}
\addcontentsline{toc}{part}{Part VII. Related-method comparison}

\section{Comparison with Diversity-Preserving RLVR Methods}
\label{app:related-extended}

Diversity-preserving RLVR methods differ in what they measure, which responses they update, and what distribution they encourage.
Clip-Cov and KL-Cov~\citep{cui2025entropy} address token-level entropy collapse by modifying token updates or applying selective KL regularization.
UCPO~\citep{lochab2026ucpo} redistributes advantage among correct responses in the current rollout group.
DMPO~\citep{li2026distribution} uses group-level distribution matching between reward-derived targets and sampled responses.
Verified replay instead stores previously discovered verified solution modes and rehearses one exemplar per stored mode.

\paragraph{Different notions of diversity.}
These methods do not measure interchangeable quantities.
Token entropy measures uncertainty over generated tokens, while other methods evaluate diversity over sampled responses, equations, or feasible solutions.
\mb{} instead defines solution-mode identity through executable verification and measures diversity among correct solution modes.
Multiple response strings can therefore represent the same \mb{} solution mode, while token-level diversity can also arise from incorrect responses.
Our conditional-diversity metric $\pmd(q)=1-\sum_c q_c^2$ depends only on how probability is allocated among correct solution modes.
By contrast, \texttt{pass@}$K$ depends on total correctness probability, while \texttt{distinct@}$K$ depends on both correctness and the allocation among correct modes.

\paragraph{Fresh rollouts versus stored replay.}
Clip-Cov, KL-Cov, UCPO, and DMPO operate on responses available in the current rollout group.
Verified replay can directly update a previously discovered solution mode even when that mode is absent from the current group because its exemplar remains in the buffer.
App.~\ref{app:theory-gradient-availability} formalizes this distinction in the solution-mode model: the expected on-policy signal for a mode vanishes as its probability approaches zero, while a stored replay target continues to receive a direct likelihood signal.
This does not imply that fresh-rollout methods cannot preserve diversity, since shared parameters, regularization, and later samples can also affect an unsampled mode.
The narrower distinction is that replay does not require rediscovery before a stored solution mode can receive another direct update.

\paragraph{Different targets.}
The methods also differ in what they attempt to balance.
UCPO favors a more even allocation among correct responses represented in the current group, while DMPO matches sampled responses to a reward-derived target.
Verified replay instead gives equal replay weight to the discovered solution modes represented in its buffer.
Uniformity over response strings or sampled rows therefore need not imply uniformity over executable solution modes.
Our replay-weighting ablation isolates this distinction within the same replay procedure: weighting stored modes by fresh discovery frequency preserves fewer verified alternatives than weighting them uniformly (App.~\ref{app:frequency-replay-progress}).
The single-slot ablation separately shows that retaining one verified success provides a substantial benefit even without multi-mode balancing, while balancing multiple discovered modes contributes additional diversity (App.~\ref{app:mode-agnostic-replay}).

\paragraph{Evaluated comparators.}
The direct \mb{} comparator suite includes UCPO and sparse RLEP-Dr at the smaller model scales (App.~\ref{app:ucpo-progress}), together with fixed Semantic-MaxEnt, GAPO, and SetPO (Apps.~\ref{app:semantic-maxent} and~\ref{app:diversity-comparators}).
Semantic-MaxEnt adds a frequency-based bonus to fresh verified responses, while sparse RLEP-Dr reuses verified trajectories without balancing replay across solution modes.
The online-pool RLEP-Dr analysis further separates experience coverage from the replay rule: learner-generated experience recovers part, but not all, of the diversity gap to Re:Dr (App.~\ref{app:rlep-online-pool}).
Self-imitation learning~\citep{oh2018self} and rejection-sampling fine-tuning~\citep{dong2023raft} likewise revisit successful model outputs, placing verified replay broadly in the same family.
Its distinguishing feature is how stored experience is organized: one exemplar is retained per discovered solution mode and stored modes are rehearsed with equal weight.

Clip-Cov, KL-Cov, and DMPO are included as mechanism-level literature comparisons because their reported benchmarks, metrics, and training setups differ from \mb{}.
The direct \mb{} experiments evaluate UCPO, RLEP-Dr, Semantic-MaxEnt, GAPO, and SetPO under the executable solution-mode framework used in this paper.

\clearpage
\phantomsection
\begin{center}\large\textbf{Part VIII. Data and reproducibility}\end{center}
\addcontentsline{toc}{part}{Part VIII. Data and reproducibility}

\section{Data and Reproducibility}
\label{app:reproducibility}

\subsection{Evaluation pairing}
\label{app:source-integrity}

Paired method comparisons match model, domain, training seed, and training endpoint.
Comparisons to initialization use each method's corresponding initial evaluation.
In the fresh 64-response evaluation, each training seed is paired with one initial sampling replica shared across the four methods, and all five replicas use the same initial model weights.
Terminal replay effects compare trained checkpoints on matched evaluation populations.
Each reported comparison states its evaluation population, sample size, and conditional-diversity eligibility rule.

\subsection{Released datasets and checkpoints}
\label{app:repro-artifacts}
\label{app:repro-code}
\label{app:archived-models}

The \href{https://github.com/liv-daliberti/modeBench}{\mb{} repository}
contains the benchmark implementation and documentation, while the
\href{https://huggingface.co/datasets/od2961/ModeBench}{frozen \mb{} datasets}
contain the released benchmark splits.
The dataset card includes the
\href{https://huggingface.co/datasets/od2961/ModeBench/blob/main/README.md#loading}{loading guide},
and the
\href{https://huggingface.co/datasets/od2961/ModeBench/blob/main/EVALUATION.md}{evaluation guide}
documents executable verification and solution-mode identity.
The \href{https://github.com/liv-daliberti/remax}{Re:Max repository}
contains the replay and training implementation.
The \href{https://huggingface.co/od2961/maxent-grpo-models}{model and research-artifact archive}
contains released checkpoints organized by model, domain, method, and seed,
together with their configurations, tokenizers, and applicable base-model licenses.

\subsection{Analysis resources}
\label{app:repro-recovery}

The
\href{https://huggingface.co/od2961/maxent-grpo-models/blob/main/EXPERIMENTS.md}{experiment index}
maps the main factorials, scale comparisons, difficulty experiments,
replay-weighting ablation, and supporting controls to their corresponding runs.
The
\href{https://huggingface.co/od2961/maxent-grpo-models/blob/main/DATA.md}{analysis-input index}
documents the numerical inputs used in the analyses.
The
\href{https://huggingface.co/od2961/maxent-grpo-models/blob/main/RESTORE.md}{model-loading guide}
provides instructions for restoring released checkpoints.
Figure inputs and rendering scripts are listed in \mbox{\path{paper/FIGURE_MANIFEST.md}}.

\end{document}